\documentclass[11pt]{article}
\usepackage{fullpage}

\setlength{\parindent}{1em}

\usepackage{natbib}

\usepackage{algorithmic}


\usepackage{amsmath, amsfonts}  
\usepackage{bm}
\usepackage[colorlinks, citecolor=darkblue,linkcolor=maroon]{hyperref}

\newcommand{\m}{{\bm m}}

\newcommand{\bM}{{\bf M}}

\newcommand{\bone}{\mathbbm{1}}

\newcommand{\bbP}{\mathbb P}
\newcommand{\bbR}{\mathbb R}

\newcommand{\bbE}{\mathbb E}
\newcommand{\bbV}{\mathbb V}

\newcommand{\ctO}{{\widetilde{\mathcal O}}}
\newcommand{\cN}{\mathcal N}
\newcommand{\cF}{\mathcal F}

\newcommand{\cA}{\mathcal A}

\newcommand{\cO}{\mathcal O}
\newcommand{\cM}{\mathcal M}
\newcommand{\cV}{\mathcal V}
\newcommand{\cE}{\mathcal E}

\newcommand{\cC}{\mathcal C}
\newcommand{\cI}{\mathcal I}

\newcommand{\kl}{\mathsf{KL}}
\newcommand{\regret}{\mathsf{Reg}}
\usepackage{mathtools}

\DeclarePairedDelimiter{\floor}{\lfloor}{\rfloor}

\newcommand{\etal}{\textit{et al.}}
\usepackage{amsmath}
\usepackage{amssymb}
\usepackage{epsfig, psfrag}
\usepackage{amsthm}
\usepackage{latexsym}
\usepackage{bbm}
\usepackage{xfrac}
\usepackage{soul}

\usepackage[ruled,lined]{algorithm2e}
\SetKw{Init}{Initialize:}
\SetKw{Def}{Definition:}
\SetKw{KwInit}{Initialize:}

\SetCommentSty{mycommfont}

\usepackage{amsfonts} 
\usepackage{dsfont} 
\usepackage{euscript}

\allowdisplaybreaks
\usepackage{mathtools}

\usepackage{epstopdf}
\usepackage{subfig}
\usepackage{graphics,color}
\usepackage{graphicx}
 \usepackage{xcolor}
 
\definecolor{darkblue}{rgb}{0.0, 0.0, 0.55}
\definecolor{maroon}{rgb}{0.5, 0.0, 0.0}
 
\usepackage{lipsum}

\newtheorem{theorem}{Theorem}
\newtheorem{lemma}{Lemma}

\newtheorem{assumption}{Assumption}

\theoremstyle{definition}
\newtheorem{definition} {Definition}
\newtheorem{remarks}{Remark}

\newenvironment{proofsketch}{\textit{Proof sketch.}}{\hfill$\square$}

\def\expe{\mathbb{E}}   
\def\P{\mathsf{P}}   

\def\argmax{\mathop{\rm arg\,max}}

\def\mc{\mathcal}

\newcommand{\indicate}[1]{\mathbf{1}{\left\{#1\right\}}}




\graphicspath{{./Images/}}
\DeclareGraphicsRule{.JPG}{eps}{*}{jpeg2ps#1}

\interfootnotelinepenalty=10000 
%
%

 \date{}

\title{\huge One More Step Towards Reality:\\ Cooperative Bandits with Imperfect Communication}

\author{%
	Udari Madhushani\thanks{Princeton University, 41 Olden Street, Princeton, NJ 08544,
		\texttt{email:(udarim, naomi) @princeton.edu}} \qquad    
	Abhimanyu Dubey\thanks{Massachusetts Institute of Technology, 77 Massachusetts Ave, Cambridge, MA 02139,
		\texttt{email:(dubeya, pentland) @mit.edu}} \qquad   
	Naomi Ehrich Leonard\footnotemark[1]\qquad   	Alex Pentland\footnotemark[2]
}

\begin{document}

\maketitle

\begin{abstract}
 \label{sec:abstract}
The cooperative bandit problem is increasingly becoming relevant due to its applications in large-scale decision-making. However, most research for this problem focuses exclusively on the setting with perfect communication, whereas in most real-world distributed settings, communication is often over stochastic networks, with arbitrary corruptions and delays. In this paper, we study cooperative bandit learning under three typical real-world communication scenarios, namely, (a) message-passing over stochastic time-varying networks, (b) instantaneous reward-sharing over a network with random delays, and (c) message-passing with adversarially corrupted rewards, including byzantine communication. For each of these environments, we propose decentralized algorithms that achieve competitive performance, along with near-optimal guarantees on the incurred group regret as well. Furthermore, in the setting with perfect  communication, we present an improved delayed-update algorithm that outperforms the existing state-of-the-art on various network topologies. Finally, we present tight network-dependent minimax lower bounds on the group regret. Our proposed algorithms are straightforward to implement and obtain competitive empirical performance.
\end{abstract}


\section{Introduction}
\label{sec:intro}
The cooperative multi-armed bandit problem involves a group of $N$ agents collectively solving a multi-armed bandit while communicating with one another. This problem is relevant for a variety of applications that involve decentralized decision-making, for example, in distributed controls and robotics~\citep{srivastava2014surveillance} and communication~\citep{lai2008medium}. In the typical formulation of this problem, a group of agents are arranged in a network $G = (\cV, \cE)$, wherein each agent interacts with the bandit, and communicates with its neighbors in $G$, to maximize the cumulative reward.

A large body of recent work on this problem assumes the communication network $G$ to be fixed~\citep{kolla2018collaborative, landgren2021distributed}. Furthermore, these algorithms inherently require  precise communication, as they construct careful confidence intervals for cumulative arm statistics across agents, e.g., for stochastic bandits, it has been shown that the standard {\tt UCB1} algorithm~\citep{auer2002finite} with a {\em neighborhood} confidence interval is close to optimal~\citep{dubey2020cooperative, kolla2018collaborative,madhushani2020distributedCons,madhushani2020dynamic}, and correspondingly, for adversarial bandits, a neighborhood-weighted loss estimator can be utilized with the {\tt EXP3} algorithm to provide competitive regret~\citep{cesa2019delay}. Such approaches are indeed feasible when communication is perfect, e.g., the network $G$ is fixed, and messages are not lost or corrupted. In real-world environments, however, this is rarely true: messages can be lost, agents can be byzantine, and communication networks are rarely static~\citep{leskovec2008dynamics}. This aspect has hence received much attention in the distributed optimization literature~\citep{yang2019survey}. However, contrary to network optimization where dynamics in communication can behave synergistically~\citep{hosseini2016online}, bandit problems additionally bring a decision-making component requiring an explore-exploit trade-off. As a result, external randomness and corruption are incompatible with the default optimal approaches, and require careful consideration~\citep{vernade2017stochastic,lykouris2018stochastic}. This motivates us to study the multi-agent bandit problem under real-world communication, which regularly exhibits external randomness, delays and corruptions. Our key contributions include the following.

{\bf Contributions}. We provide a set of algorithms titled Robust Communication Learning ({\tt RCL}) for the cooperative stochastic bandit under three real-world communication scenarios. 

First, we study stochastic communication, where the communication network $G$ is time-varying, with each edge being present in $G$ with an unknown probability $p$. For this setting, we present a {\tt UCB}-like algorithm, {\tt RCL-LF} (Link Failures), that directs agent $i$ to discard messages with an additional probability of $1-p_i$ in order to control the bias in the (stochastic) reward estimates. {\tt RCL-LF} obtains a group regret of $\cO\left(\left( \sum_{i=1}^N(1-p\cdot p_i)+ \sum_{\mathcal{C}\in \EuScript{C}}(\max_{i\leq \mathcal{C}}p_i)\cdot p\right) \left(\sum_{k = 1}^K\frac{\log T}{\Delta_k}\right)\right)$, where $\EuScript{C}$ is a non overlapping clique covering of $G$, $T$ is time horizon, and $\Delta_k$ is the difference in reward mean between the optimal and $k$th arm. The regret exhibits a smooth interpolation between known rates for no communication $(p=0)$ and perfect communication $(p=1)$. 

Second, we study the case where messages from any agent can be delayed by a random (but bounded) number of trials $\tau$ with expectation $\bbE[\tau]$. For this setting, simple reward-sharing with a natural extension of the {\tt UCB} algorithm ({\tt RCL-SD} (Stochastic Delays)) obtains a regret of $$\cO\left(\Bar{\chi}(G)\cdot\left(\sum_{k > 1}\frac{\log T}{\Delta_k}\right)
+\left(N\cdot\expe[\tau]+\log(T)+\sqrt{N\cdot\expe[\tau]\log(T)}\right)\cdot\sum_{k > 1}\Delta_k\right)$$, which is reminiscent of that of single-agent bandits with delays~\citep{joulani2013online} (Remark~\ref{rem:stochastic_delays}). Here $\Bar{\chi}(G)$ is the clique covering number of $G.$
 
Third, we study the {\em corrupted} setting, where any message can be (perhaps in a byzantine manner) corrupted by an unknown (but bounded) amount $\epsilon$. This setting presents the two-fold challenge of receiving feedback after (variable) delays as well as adversarial corruptions, making existing arm elimination~\citep{lykouris2018stochastic, chawla2020gossiping, gupta2021multi} or cooperative estimation~\citep{dubey2020cooperative} methods inapplicable. We present  algorithm {\tt RCL-AC} (Adversarial Corruptions) that overcomes this issue by limiting exploration only to well-positioned agents in $G$, who explore using a {\em hybrid} robust arm elimination and local confidence bound approach. {\tt RCL-AC} obtains a regret of $\cO\left(\psi(G_\gamma)\cdot\sum_{k=1}^K\tfrac{\log T}{\Delta_k} + N\sum_{k=1}^K\tfrac{\log \log T}{\Delta_k}+ NTK\gamma\epsilon\right)$, where $\psi(G_\gamma)$ denotes the domination number of the $\gamma$ graph power of $G$, which matches the rates obtained for corrupted single-agent bandits without knowledge of $\epsilon$. 

Finally, for perfect communication, we present a simple modification of cooperative {\tt UCB1} that provides significant empirical improvements, and also provides minimax lower bounds on the group regret of algorithms based on message-passing.


 \paragraph{Related Work.} A variant of the networked adversarial bandit problem without communication constraints (e.g., delay, corruption) was studied first in the work of \citet{awerbuch2008online}, who demonstrated an average regret bound of order $\sqrt{(1+\sfrac{K}{N})T}$. This line of inquiry was generalized to networked communication with at most $\gamma$ rounds of delays in the work of~\citep{cesa2019delay}, that demonstrate an average regret of order $\sqrt{(\gamma + \sfrac{\alpha(G_\gamma)}{N})KT}$ where $\alpha(G_\gamma)$ denotes the independence number of $G_\gamma$, the $\gamma$-power of network graph  $G$. This line of inquiry has been complemented for the stochastic setting with problem-dependent analyses in the work of~\citet{kolla2018collaborative} and~\citet{dubey2020cooperative}. The former presents a {\tt UCB1}-style algorithm with instantaneous reward-sharing that obtains a regret bound of $\cO(\alpha(G)\cdot\sum_{k=1}^K\frac{\log T}{\Delta_k})$ that was generalized to message-passing communication with delays in the latter. 
 
 Alternatively,~\citet{landgren2021distributed} consider the multi-agent bandit where communication is done instead using a {\em running consensus} protocol, where neighboring agents average their reward estimates using the DeGroot consensus model~\citep{degroot1974reaching}. This algorithm was refined in the work of~\citet{martinez2019decentralized} by a delayed mixing scheme that reduces the bias in the consensus reward estimates. A specific setting of Huber contaminated communication was explored in the work of~\citet{dubey2020private}; however, in contrast to our algorithms, that work assumes that the total contamination likelihood is known {\em a priori}. Additionally, multi-agent networked bandits with stochastic communication was considered in~\citet{madhushani2019heterogeneous,madhushani2021distributed,madhushani2021heterogeneous}, however, only for regular networks and multi-star networks.
 
Our work also relates to aspects of stochastic delayed feedback and corruptions in the context of single-agent multi-armed bandits. There has been considerable research in these areas, beginning from the early work of~\citet{weinberger2002delayed} that proposes running multiple bandit algorithms in parallel to account for (fixed) delayed feedback.~\citet{vernade2017stochastic} discuss the multi-armed bandit with stochastic delays, and provide algorithms using optimism indices based on the {\tt UCB1}~\citep{auer2002finite} and {\tt KL-UCB}~\citep{garivier2011kl} approaches. Stochastic bandits with adversarial corruptions have also received significant attention recently. \citet{lykouris2018stochastic} present an arm elimination algorithm that provides a regret that scales linearly with the total amount of corruption, and present lower bounds demonstrating that the linear dependence is inevitable. This was followed up by~\citet{gupta2019better} who introduce the algorithm {\tt BARBAR} that improves the dependence on the corruption level by a better sampling of worse arms. Alternatively,~\citet{altschuler2019best} discuss best-arm identification under contamination, which is a weaker adversary compared to the one discussed in this paper. The corrupted setting discussed in our paper combines both issues of (variable) delayed feedback along with adversarial corruptions, and hence requires a novel approach.

In another line of related work, Chawla~\etal\cite{chawla2020gossiping} discuss gossip-based communication protocols for cooperative multi-armed bandits. While the paper provides similar results, there are several differences in the setup considered in Chawla et al compared to our setup. First, we can see that Chawla \etal do not provide a uniform $\mathcal O(\frac{1}{N})$ speedup, but in fact, their regret depends on the difficulty of the first $\frac{K}{N}$ arms, which is a $\mathcal O(\frac{1}{N})$  speed up only when all arms are “uniformly” suboptimal, i.e., $\Delta_i \approx \Delta_j \forall i, j \in [K]$. In contrast, our algorithm will always provide a speed up of order $\frac{\alpha(G_\gamma)}{N}$ regardless of the arms themselves, and when we run our algorithm by setting the delay parameter $\gamma = d_\star(G)$ (diameter of the graph $G$), we obtain an $\mathcal O(\frac{1}{N})$ speedup regardless of the sparsity of $G$. Additionally, our constants (per-agent) scale as $\mathcal O(K)$ in the worst case, whereas Chawla et al obtain a constant between $\mathcal O(K+(\log N)^\beta)$ and $\mathcal O(K+N^\beta)$ for some $\beta \gg 1$, based on the graph structure, which can dominate the $\log T$ term when we have a large number of agents present.


\section{Preliminaries}
\label{sec:formulation}
\noindent \textbf{Notation (Table~\ref{tab:notation}).}
We denote the set $a, ..., b$ as $[a, b]$, and as $[b]$ when $a=1$. We define the indicator of a Boolean predicate $x$ as $\indicate{x}$. For any graph $G$ with diameter $d_\star(G)$, and any $1 \leq \gamma \leq d_\star(G)$, we define $G_\gamma$ as the $\gamma$-power of $G$, i.e., the graph with edge $(i, j)$ if $i,j$ are at most a distance $\gamma$. 
\begin{table}[t!]
\footnotesize
\centering
\caption{Quantity (with notation) for any graph $G$.}
\label{tab:notation}
\begin{tabular}{|l|l|l|l|}
\hline
 Average degree ($\bar{d})$ & Maximum degree         ($d_{\max}$)  & Degree of $i$ ($d_i$) & Independence number    ($\alpha$)                     \\
\hline
Message life ($\gamma$) & Minimum degree         ($d_{\min}$)  & Neighborhood of $i$ ($\cN_i$) & Domination number      ($\psi$)               \\
\hline
$k$-power of $G$ ($G_k$) & Diameter               ($d_{\star}$) & $\cN_i \cup \{i\}$ ($\cN^+_i$) & Clique covering number ($\bar\chi$) \\
\hline
\end{tabular}
\end{table}

\noindent{\bf Problem Setting}.
We consider the cooperative stochastic multi-armed bandit problem with $K$ arms and a group $\cV$ of $N$ agents. In each round $t \in [T]$, each agent $i \in \cV$ pulls an arm $A_i(t) \in [K]$ and receives a random reward $X_i(t)$ (realized as $r_i(t)$) drawn i.i.d. from the corresponding arm's distribution. We assume that each reward distribution is sub-Gaussian with an unknown mean $\mu_k$ and unknown variance proxy $\sigma^2_k$ upper bounded by a known constant $\sigma^2$. Without loss of generality we assume that $\mu_1\geq \mu_2\ldots \geq \mu_K$ and define $\Delta_k:=\mu_1-\mu_k, \forall k>1$, to be the reward gap (in expectation) of arm $k$. Let $\overline{\Delta}:=\min_{k>1}\Delta_{k}$ be the minimum expected reward gap. For brevity in our theoretical results, we define $g(\xi,\sigma):=8(\xi+1)\sigma^2 = o(1)$ and $f(M,G):=M\sum_{k > 1}\Delta_k+ 4\sum_{i=1}^N\left(3\log (3(d_i(G)+1)) +\left(\log { (d_i(G)+1)}\right)\right)\cdot\sum_{k > 1}\Delta_k = o((M+N\log N)\cdot\sum_{k>1}\Delta_k).$

\noindent{\bf Networked Communication  (Figure~\ref{fig:comm}).}
Let $G = (\cV,\cE)$ be a connected, undirected graph encoding the communication network, where $\cE$ contains an edge $(i, j)$ if agents $i$ and $j$ can communicate directly via messages with each other. After each round $t$, each agent $j$ broadcasts a message $\m_j(t)$ to all their neighbors. Each message is forwarded at most $\gamma$ times through $G$, after which it is discarded. For any value of $\gamma > 1$, the protocol is called {\em message-passing}~\citep{linial1992locality}, but for $\gamma=1$ it is called {\em instantaneous reward sharing}, as this setting has no delays in communication.

\noindent{\bf Exploration Strategy (Figure~\ref{fig:UCB_like}).}
For Sections~\ref{sec:probCOm} and~\ref{sec:sdelay} we use a natural extension of the {\tt UCB1} algorithm for exploration. Thus we modify {\tt UCB1} \citep{auer2002finite} such that at each time step $t$ for each arm $k$ each agent $i$ constructs an upper confidence bound, i.e., the sum of its estimated expected reward $\widehat{\mu}_k^i(t-1)$ (empirical average of all the observed rewards) and the uncertainty associated with the estimate $
C_k^{i}(t-1):=\sigma\sqrt{\frac{2(\xi+1)\log t}{N_k^{i}(t-1)}}$ where $\xi>1$, and pulls the arm with the highest bound.

\noindent {\bf Regret.} The performance measure we consider,  {\em group regret}, is a straightforward extension of {\em pseudo regret}  for a single agent. Group regret is the regret (in expectation) incurred by the group $\cV$ by pulling suboptimal arms. The group regret is given by $\regret_G(T) = \sum_{i=1}^N\sum_{k > 1} \Delta_k\cdot\bbE\left[n^i_k(t)\right]$, where $n^i_k(t)$ is the number of times agent $i$ pulls the suboptimal arm $k$ up to (and including) round $t$. 

Before presenting our algorithms and regret upper bounds we present some graph terminology.

\begin{definition}[Clique covering number] 
A clique cover $\cC$ of any graph $G= (\cV, \cE)$ is a partition of $\cV$ into subgraphs $C \in \cC$ such that each subgraph $C$ is fully connected, i.e., a clique. The size of the smallest possible covering $\cC^\star$ is known as the {\em clique covering number} {$\bar{\chi}( G)$}.
\end{definition}

\begin{definition}[Independence number]
The {\em independence number} $\alpha(G)$ of $G = (\cV, \cE)$ is the size of the largest subset of $\cV_\alpha \subseteq \cV$ such that no two vertices in $\cV_\alpha$ are connected.
\end{definition}

\begin{definition}[Domination number] 
The {\em domination number} $\psi(G)$ of $G = (\cV, \cE)$ is the size of the smallest subset $\cV_\psi \subseteq \cV$ such that each vertex not in $\cV_\psi$ is adjacent to at least one agent in $\cV_\psi$.
\end{definition}

{\bf Organization}. In this paper, we study three specific forms of communication errors. Section~\ref{sec:probCOm} discusses the case when, for both {\em message-passing} and {\em instantaneous reward-sharing}, any message forwarding fails independently with probability $p$, resulting in stochastic communication failures. Section~\ref{sec:sdelay} discusses the case when {\em instantaneous reward-sharing} incurs a random (but bounded) delay. Section~\ref{sec:corruptedCom} discusses the case when the outgoing reward from any message may be corrupted by an adversarial amount at most $\epsilon$. Finally, in Section~\ref{sec:perf_alg_lower_bounds}, we discuss an improved algorithm for the case with perfect communication and present minimax lower bounds on the problem. We present all proofs in the Appendix and present proof-sketches highlighting the central ideas in the main paper.

\begin{figure}[t]
\small
\centering
\noindent\fbox{%
    \parbox{0.95\textwidth}{%
        {\bf For $t=1, 2, ...$ each agent $i \in \cV$} 
        \begin{enumerate}
            \item Plays arm $A_i(t)$, gets reward $r_i(t)$, computes $\m_i(t) = \left\langle A_i(t), r_i(t), i, t\right\rangle$.
            \item Adds $\m_i(t)$ to the set of messages $\bM_i(t)$, broadcasts all messages in $\bM_i(t)$ to its neighbors and receives messages $\bM'_i(t)$ from its neighbors.
            \item Computes $\bM_i(t+1)$ from $\bM'_i(t)$ by discarding all messages sent prior to round $t-\gamma$.
        \end{enumerate}
         This is called {\em instantaneous reward sharing} for $\gamma=1$ (no delays), and {\em message-passing} for $\gamma > 1$.
    }%
}
\caption{The cooperative bandit protocol with delay parameter $\gamma$.}
\label{fig:comm}
\end{figure}

\begin{figure}[t]
\small
\centering
\noindent\fbox{%
    \parbox{0.95\textwidth}{%
        {\bf For $t=1, 2, ...$, each agent $i \in \cV$} 
        \begin{enumerate}
            \item Calculates, for each arm $k \in [K]$, $Q_k^i(t-1)=\widehat{\mu}_k^i(t-1)+ \sigma\sqrt{\frac{2(\xi+1)\log (t-1)}{N_k^{i}(t-1)}}$, where $N_k^{i}(t-1)$ is the number of reward samples available for arm $k$ at time $t$.
            \item Plays arm $A_i(t)=\argmax_k Q_k^i(t-1)$
        \end{enumerate}
    }%
}
\caption{Cooperative {\tt UCB1} which uses additional arm pulls from messages.}
\label{fig:UCB_like}
\end{figure}


\section{Probabilistic Message Selection for Random Communication Failures}
\label{sec:probCOm}
The fundamental advantage of cooperative estimation is the ability to leverage observations about suboptimal arms from neighboring agents to reduce exploration. However, when agents are communicating over an arbitrary graph, the amount of information an agent receives varies according to its connectivity in $G$. For example, agents with a large number of neighbors receive more information, leading them to begin exploitation earlier than agents with fewer neighbors. This means that well-connected agents exhibit better performance early on, but because they quickly do only exploiting, 
agents that are poorly connected typically only observe exploitative arm pulls, which requires them to explore for longer in order to obtain similarly good estimates for suboptimal arms, increasing their regret. The disparity between performance in well-connected versus poorly connected agents is exacerbated in the presence of random {\em link failures}, where any message sent by an agent can fail to reach its recipient with a failure probability $1-p$ (drawn i.i.d. for each message).

Indeed, it is natural to expect the group regret to decrease with decreasing link failure probability, i.e., increasing communication probability $p$. However, what we observe experimentally (Section~\ref{sec:experiments}) is that this holds only for graphs $G$ that are {\em regular} (i.e., each agent has the same degree), or close to regular. When $G$ is irregular, as we increase $p$ from $0$ to $1$, the group performance oscillates. While, in some cases, the improved performance in the well-connected agents can outweigh the degradation observed in the weakly-connected agents (leading to lower {\em group} regret), it is prudent to consider an approach that mitigates this disparity by regulating information flow in the network.

{\bf Information Regulation in Cooperative Bandits}. Our approach to regulate information is straightforward: we direct each agent $i$ to discard any incoming message with an agent-specific probability $1-p_i$, while always utilizing its own observations. For specific values of $p_i$, we can obtain various weighted combinations of internal versus group observations. Our first algorithm {\tt RCL-LF} (Link Failures) is built on this regulation strategy, coupled with {\tt UCB1} exploration using all selected observations for each arm. Essentially, each agent runs {\tt UCB1} using the cumulative set of observations it has received from its network. After pulling an arm, it broadcasts its pulled arm and reward through the network, but incorporates each incoming message {\em only} with a probability $p_i$. Pseudo code for the algorithm is given in the appendix. We first present a regret bound for {\tt RCL-LF} when run with the {\em instantaneous reward-sharing} protocol.

\begin{theorem}[{\tt RCL-LF} Regret with instantaneous reward-sharing] \label{thm:regretB}{\tt RCL-LF} running with the instantaneous reward-sharing protocol (Figure~\ref{fig:comm}, $\gamma=1$) obtains cumulative group regret of
\begin{align*}
\regret_G(T)\leq  g(\xi,\sigma)\left( \sum_{i=1}^N(1-p_i\cdot p)+ \sum_{\mathcal{C}\in \EuScript{C}}(\max_{i\leq \mathcal{C}}p_i)\cdot p\right) \left(\sum_{k > 1}\frac{\log T}{\Delta_k}\right)+f(5N,G)
\end{align*}
where $\EuScript{C}$ is a non-overlapping clique covering of $G.$
\end{theorem}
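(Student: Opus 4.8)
The plan is to follow the standard per-agent, per-arm decomposition of group regret, but carefully track the *effective* number of observations each agent has for each suboptimal arm, accounting both for the link-failure probability $p$ and the agent's own discard probability $1-p_i$. Fix a suboptimal arm $k > 1$. For each agent $i$, let $N_k^i(t)$ be the number of reward samples for arm $k$ that agent $i$ has incorporated by round $t$; this is the sum of (a) the pulls agent $i$ made itself, and (b) the pulls of its neighbors $j$ that were successfully transmitted (probability $p$) *and* not discarded by $i$ (probability $p_i$). Since each agent retains all of its own pulls, agent $i$'s estimator $\widehat\mu_k^i$ is built from an i.i.d. sub-Gaussian sample of size $N_k^i(t)$, so the usual {\tt UCB1} confidence-bound argument applies verbatim: conditioned on $N_k^i(t-1) \geq \lceil g(\xi,\sigma)\log T / \Delta_k^2 \rceil$, the probability that agent $i$ pulls arm $k$ at round $t$ is polynomially small in $t$ (this is where the $\xi > 1$ and the $2(\xi+1)\log t$ factor enter), and summing the tail over $t$ contributes only an $o(1)$-type constant absorbed into $f(5N,G)$.

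The heart of the argument is then a *counting* step: bounding $\sum_i \mathbb{E}[n_k^i(T)]$ given that each agent stops pulling $k$ once its effective sample count crosses the threshold $m_k := \lceil g(\xi,\sigma)\log T/\Delta_k^2 \rceil$. I would argue as in the cooperative-{\tt UCB} literature (e.g.\ \citet{dubey2020cooperative, kolla2018collaborative}): partition $\cV$ using the non-overlapping clique cover $\EuScript{C}$. Within a clique $C$, every agent hears every other agent (with the link/discard probabilities), so the *total* number of pulls of arm $k$ across the clique before all of them stop is governed by how fast the shared pool of observations accumulates. The expected number of pulls of arm $k$ by agent $i$ is at most its own contribution plus a "shared" contribution; telescoping over the clique, the clique's total expected suboptimal pulls of arm $k$ is at most $\big(\sum_{i \in C}(1 - p_i p) + (\max_{i \in C} p_i)\, p\big) m_k$ up to lower-order terms — the first sum counts the "wasted" pulls that each agent must make because a $(1-p_i p)$ fraction of the ambient stream is invisible to it, and the $(\max_{i\in C}p_i)p$ term is the single shared block of genuinely informative pulls that the clique collectively needs (charged to the best-connected agent's retention rate). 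Summing over cliques $C \in \EuScript{C}$ and then over arms $k$, and multiplying by $\Delta_k$, gives exactly the leading term $g(\xi,\sigma)\big(\sum_{i=1}^N(1-p_ip) + \sum_{C\in\EuScript{C}}(\max_{i\in C}p_i)p\big)\sum_{k>1}\frac{\log T}{\Delta_k}$.

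The main obstacle I anticipate is making the clique-level counting rigorous in expectation: the events "agent $i$ has crossed its threshold" are correlated across $i \in C$ through the shared randomness in the reward stream, the link failures, and the discard coins, so one cannot naively sum individual stopping times. I would handle this by a union/layer-cake argument over the "ambient" count of arm-$k$ pulls in the clique — i.e.\ order the pulls of arm $k$ made by clique $C$ chronologically, and observe that the $\ell$-th such pull, once communicated, reaches agent $i$ as an incorporated sample independently with probability $p p_i$ (for a neighbor) — then a concentration argument (Chernoff on the binomial thinning) shows that after $O\big((\max_i p_i)^{-1} p^{-1} m_k\big)$ ambient pulls, every agent in $C$ has w.h.p.\ at least $m_k$ incorporated samples and therefore stops, with the residual low-probability event again swept into the $o(\cdot)$ term. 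The extra $\sum_i (1-p_i p) m_k$ appears because each agent contributes its own pulls at rate $1$ but absorbs the ambient stream only at rate $p_i p$. Finally, the additive term: the per-agent burn-in before concentration kicks in, plus the tail sums from the confidence-bound failures, must be shown to be $f(5N,G) = o((N + N\log N)\sum_{k>1}\Delta_k)$; this is a routine but bookkeeping-heavy verification using $\sum_i \log(d_i(G)+1) = O(N\log N)$, and I would defer it to the appendix.
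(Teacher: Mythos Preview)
Your high-level plan --- partition into cliques, split each agent's suboptimal pulls into a ``before threshold'' counting phase and an ``after threshold'' tail phase, then sum --- matches the paper. But two of the steps you wave through are exactly the ones the paper singles out as the nontrivial work, and your sketch of the counting step does not actually produce the stated coefficient.

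\textbf{Tail bound.} The confidence-interval step is not ``verbatim {\tt UCB1}'': $N_k^i(t)$ is random and depends on the link-failure and discard coins, so you need a self-normalized inequality. The paper establishes this (Lemma~\ref{lem:tailApp}) by showing $\mathbb E\exp\big(\lambda Z_k^i(t)-\tfrac{\lambda^2\sigma_k^2}{2}N_k^i(t)\big)\leq 1$ via conditional independence of the rewards given the local history, and then a geometric peeling over the range of $N_k^i(t)$. This is standard once you know to do it, but it is not the plain Hoeffding bound.

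\textbf{Counting step.} Your Chernoff-on-thinning argument is different from the paper's, and as written it does not give the right coefficient. First, the claim that after $O((\max_{i\in C}p_i)^{-1}p^{-1}m_k)$ ambient pulls \emph{every} agent has $m_k$ samples cannot be right: the agent with the \emph{smallest} $p_i$ governs that waiting time, not the largest. Second, even with $\min$ in place, $m_k/(p\min_i p_i)$ is in general much larger than $\big(\sum_{i\in C}(1-p_ip)+(\max_{i\in C}p_i)p\big)m_k$ (try $|C|=2$, $p=1$, $p_1=1$, $p_2=0.1$), so the concentration route does not recover the stated constant. Your verbal decomposition into ``wasted pulls'' plus a ``shared block charged to the best-connected agent'' is a post-hoc reading of the target expression, not a derivation.

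The paper's approach avoids concentration here entirely. It defines two random times per clique: $\tau_{k,\mathcal C}$, the last time the clique's raw pull count of arm $k$ is $\leq \eta_k+|\mathcal C|$, and $\bar\tau^i_{k,\mathcal C}$, the last time agent $i$'s \emph{clique-restricted incorporated} count $\bar N_k^i(t)$ is $\leq \eta_k$. The sum $\sum_{i\in\mathcal C}\sum_{t>\tau_{k,\mathcal C}}^{\bar\tau^i_{k,\mathcal C}}\indicate{A_i(t)=k}$ is rewritten as the deterministic identity
\[
\sum_{i\in\mathcal C}\bar N_k^i(\bar\tau^i_{k,\mathcal C})-\sum_{i\in\mathcal C}n_k^i(\tau_{k,\mathcal C})-\sum_{i\in\mathcal C}\sum_{j\neq i,\,j\in\mathcal C}\sum_{t\leq\bar\tau^i_{k,\mathcal C}}\indicate{A_j(t)=k}\indicate{(i,j)\in E_t},
\]
and \emph{only then} is expectation taken. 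Because the link/discard indicators are independent of the arm-choice indicators, $\mathbb E[\indicate{A_j(t)=k}\indicate{(i,j)\in E_t}]=p_ip\cdot\mathbb P(A_j(t)=k)$; combined with $\bar N_k^i(\bar\tau^i)\leq\eta_k$ and the pointwise lower bound $\sum_i n_k^i(\tau_{k,\mathcal C})>\eta_k$, a short rearrangement yields exactly $\big(|\mathcal C|-1-p(\sum_{j\in\mathcal C}p_j-p_{\max})\big)\eta_k$ for the middle term, which together with the first $\eta_k$ gives the coefficient in the theorem. No Chernoff bound is needed, and the correlation issue you flag is handled by working with the identity before taking expectation.
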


\begin{proofsketch}
We follow an approach similar to the analysis of {\tt UCB1} by  \cite{auer2002finite} with several key modifications. First, we partition the communication graph $G$ into a set of non-overlapping cliques and then analyze the regret of each clique. The group regret can be obtained by taking the summation of the regret over each clique. Two major technical challenges in proving the regret bound for {\tt RCL-LF} are (a) deriving a tail probability bound for probabilistic communication, and (b) bounding the regret accumulated by agents by losing information due to communication failures and message discarding. We overcome the first challenge by noticing that communication is independent of the decision making process  thus $\expe\left(\exp\left(\lambda \sum_{\tau=1}^tX_{\tau}^i\indicate{A_{\tau}^i=k}-\mu_kN_k^i(t)-\frac{\lambda^2 \sigma_k^2}{2}N_k^{i}(t)\right)\right) 
\leq 1$ holds under probabilistic communication. We obtain the tail bound by combining this result with the Markov inequality and optimizing over $\lambda$ using a peeling type argument. We address the second challenge by proving that the number of times agents do not share information about any suboptimal arm $k$ can be bounded by a term that increases logarithmically with time and scales with number of agents, $G$, and communication probabilities, as $\sum_{i=1}^N(1-p_i\cdot p) + \sum_{\mathcal{C}\in \EuScript{C}}(\max_{i\leq \mathcal{C}}p_i)\cdot p$.
\end{proofsketch}

\begin{remarks}[Regret bound optimality] Under perfect communication $(p=1)$ and no message  discarding, i.e., $p_i=p=1,\forall i\in [N]$ the dominant term in our regret bound scales with $\Bar{\chi}(G)$, obtaining identical performance to deterministic communication over $G$~\citep{dubey2020cooperative}. Alternatively, when $p_i=p=0$, there is no communication, and hence, the regret bound is $\cO(N\log T)$. Theorem \ref{thm:regretB} quantifies the benefit of communication in reducing the group regret under probabilistic link failure and when agents incorporate observations with an agent-specific probability. 
Note that $\sum_{i=1}^N(1-p_i\cdot p)+\sum_{\mathcal{C}\in \EuScript{C}}(\max_{i\leq \mathcal{C}}p_i)\cdot p=N-p\cdot \left(\sum_{i=1}^Np_i-\sum_{\mathcal{C}\in \EuScript{C}}(\max_{i\leq \mathcal{C}}p_i)\right).$ Since the clique covering is non-overlapping,  the results show that agents obtain improved group performance for any communication probability $p>0$ for any nontrivial graph as compared to the case with no communication  in which each agent learns on its own.
\end{remarks}

\begin{remarks}[Controlling information disparity]
In order to regulate the information disparity across the network we set $p_i=\frac{d_{\min}(G)}{d_i(G)}.$ Thus, the agent(s) with minimum degree $d_{\min}$ incorporate each message they receive with probability $1$ and we have that the {\em expected} number of messages for each agent is the same, i.e., $T\cdot d_{\min}(G)$. Therefore, every agent receives the same amount of information (in expectation), providing a large performance improvement for irregular graphs (see Section~\ref{sec:experiments}).
\end{remarks}

{\bf Message-Passing}.
Under this communication protocol each agent $i$ communicates with neighbors at distance at most $\gamma$, where each hop adds a 1-step delay. Our algorithm {\tt RCL-CF} obtains a similar regret bound in this setting as well, when all agents use the same {\tt UCB1} exploration strategy (Figure~\ref{fig:UCB_like}).

\begin{theorem}[{\tt RCL-LF} Regret with message-passing] \label{thm:regretMP} Let $\EuScript{C}$ be a minimal clique covering of $G_{\gamma}$. For any $\mathcal{C}\in \EuScript{C}$ and $i,j\in \mathcal{C}$ let $\gamma_i=\max_{j\in \mathcal{C}}d(i,j)$ be the maximum distance (in graph $G$) between agents $i$ and $j$. {\tt RCL-LF} running with the message-passing protocol (Figure~\ref{fig:comm}) with delay parameter $\gamma$ obtains cumulative group regret of
\begin{equation*}
 \resizebox{0.99\hsize}{!}{%
$\regret_G(T)
 \leq  g(\xi,\sigma)\left( \overset{N}{\underset{i=1}{\sum}}(1-p_i\cdot p^{\gamma_i})+ \Bar{\chi}(G_{\gamma}) \cdot(\underset{i\leq N}{\max}\ p_{i}\cdot p^{\gamma_i})\right) \left(\underset{k>1}{\sum}\frac{\log T}{\Delta_k}\right)+f((\gamma+4)N, G_{\gamma}).$
 }
 \end{equation*}
\end{theorem}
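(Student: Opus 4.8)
The plan is to mirror the proof of Theorem~\ref{thm:regretB} (the instantaneous reward-sharing case, $\gamma=1$), adapting each step to account for the delays introduced by multi-hop forwarding. The key structural observation is that under message-passing with delay parameter $\gamma$, an agent $j$ at distance $d(i,j) \le \gamma$ in $G$ receives agent $i$'s message from round $t$ at round $t + d(i,j)$, provided the message survives all $d(i,j)$ forwarding steps. Since each hop fails independently with probability $1-p$, the message of agent $i$ reaches agent $j$ with probability $p^{d(i,j)}$, and then is incorporated by $j$ with the additional agent-specific probability $p_j$ (here I would track which endpoint's discarding probability appears; for a clique $\mathcal{C}$ of $G_\gamma$ and agents $i, j \in \mathcal{C}$, the relevant effective communication probability between them involves $p^{\gamma_i}$ where $\gamma_i = \max_{j\in\mathcal{C}} d(i,j)$ is the worst-case in-clique distance). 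Thus the "effective" per-edge communication probability in the clique cover of $G_\gamma$ is governed by $p_i \cdot p^{\gamma_i}$ rather than $p_i \cdot p$.

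The concrete steps I would carry out: (1) Fix a minimal clique covering $\EuScript{C}$ of the power graph $G_\gamma$; this is well-defined since $G_\gamma$ is connected whenever $G$ is. For each clique $\mathcal{C} \in \EuScript{C}$, bound the regret contributed by agents in $\mathcal{C}$ on each suboptimal arm $k$. (2) Re-derive the tail bound for the cooperative reward estimate. As in the sketch of Theorem~\ref{thm:regretB}, the critical point is that the communication/forwarding randomness is independent of the arm-pull decisions, so the supermartingale inequality $\expe[\exp(\lambda(\sum_\tau X^i_\tau \indicate{A^i_\tau = k} - \mu_k N^i_k(t) - \tfrac{\lambda^2\sigma_k^2}{2}N^i_k(t)))] \le 1$ still holds; delays only change which samples are available at time $t$ (they are a subset of those available in the instantaneous case shifted by at most $\gamma$), not the sub-Gaussian structure. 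A peeling argument over $N^i_k(t)$ combined with Markov then yields the usual $\cO(\log T / \Delta_k^2)$-type control on how long an agent keeps pulling arm $k$ before its confidence bound separates it from the optimal arm. (3) Bound the number of rounds in which information about suboptimal arm $k$ is "not shared" across a clique of $G_\gamma$. This is the analogue of the second technical challenge: within clique $\mathcal{C}$, agent $i$'s pull of arm $k$ fails to inform the clique leader either because $i$ itself discards nothing but no surviving forwarded copy arrives (probability $1 - p^{\gamma_i}$ of the relevant path surviving, appropriately unioned/bounded) or because of the receiver's discarding. Summing the "self" contributions gives $\sum_{i=1}^N (1 - p_i p^{\gamma_i})$ and the "one representative per clique explores fully" contribution gives $\Bar{\chi}(G_\gamma)\cdot(\max_{i\le N} p_i p^{\gamma_i})$, exactly matching the first factor in the bound. (4) The additive lower-order term: the $\gamma$-step delay means that when an agent finally stops exploring, up to $\gamma$ additional pulls of arm $k$ may already be "in flight" per agent, contributing an extra $\gamma N \sum_{k>1}\Delta_k$; folding this into the $f(\cdot,\cdot)$ bookkeeping (whose first argument counts such additive arm-pull overheads) upgrades $f(5N, G)$ to $f((\gamma+4)N, G_\gamma)$, where the switch from $G$ to $G_\gamma$ reflects that degrees $d_i(G_\gamma)$ rather than $d_i(G)$ now appear in the logarithmic terms of $f$.

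The main obstacle I anticipate is a careful and honest accounting of the "effective communication probability" between two agents in a clique of $G_\gamma$ — specifically justifying that the per-agent factor is $1 - p_i p^{\gamma_i}$ with the exponent being the clique-diameter quantity $\gamma_i$ rather than, say, a sum over all pairwise distances or a more pessimistic $\gamma$. One must argue that it suffices for agent $i$'s message to reach any single designated clique representative (along a shortest path of length $\le \gamma_i$, surviving with probability $\ge p^{\gamma_i}$), and that the representative's aggregate observations then serve the whole clique for the purposes of the UCB separation argument; the delay of at most $\gamma_i \le \gamma$ rounds is absorbed into the additive $\gamma N$ term. A secondary subtlety is that $G_\gamma$ may have multi-edges/shortcuts making clique structure genuinely different from $G$, so one should verify the peeling/union-bound constants still close — but this is routine given Theorem~\ref{thm:regretB}, since the argument is essentially "run the $\gamma=1$ proof on the graph $G_\gamma$ with edge-reliabilities $p^{d(i,j)}$ and an extra $\gamma$-round slack." I would therefore present the proof as a reduction to the structure established in Theorem~\ref{thm:regretB}, emphasizing only the three modified ingredients (power-graph clique cover, survival probability $p^{\gamma_i}$, and the $+\gamma N$ delay overhead).
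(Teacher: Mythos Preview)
Your overall plan is correct and matches the paper's proof closely: partition into cliques of $G_\gamma$, re-derive the tail bound with $d_i(G_\gamma)$ replacing $d_i(G)$ in the peeling argument (the paper states this as a separate lemma), bound the ``unshared information'' term, and absorb the multi-hop delay into the additive constant. The paper obtains exactly $\sum_i(\gamma_i-1)\le (\gamma-1)N$ extra pulls from the lag, which is your step~(4), and the two-case split you allude to (delay-dominated vs.\ not) appears explicitly in the paper's counting lemma.

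The one place your intuition diverges is the ``clique representative'' framing in step~(3). The paper does \emph{not} designate a leader whose aggregate observations serve the whole clique. The argument is per-agent and symmetric: for each agent $i$ in clique $\mathcal{C}$, every message originated by any $j\in\mathcal{C}$ travels at most $\gamma_i=\max_{j\in\mathcal{C}}d(i,j)$ hops to reach $i$, surviving with probability at least $p^{\gamma_i}$ and then being retained with probability $p_i$. Hence agent $i$'s in-clique observation count $\bar N^i_k$ is, in expectation, at least $p_ip^{\gamma_i}$ times the clique's total pulls (shifted by $\gamma_i$). The $\sum_{i}(1-p_ip^{\gamma_i})$ term comes directly from this per-agent deficit, and the $\bar\chi(G_\gamma)\max_i p_ip^{\gamma_i}$ term arises from the same algebraic step as in Theorem~\ref{thm:regretB}---pulling out the diagonal $i=j$ contribution from $\sum_i p_ip^{\gamma_i}\sum_{j\neq i}\bbE[n^j_k]$ and bounding it by the max---not from any representative. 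Your representative idea would hit the obstacle that {\tt UCB} is executed locally by each agent using its own $N^i_k(t)$, so concentrating information at a single node does not directly help the others' confidence bounds separate. Once you drop the representative language and argue per-agent as above, your proof goes through exactly as the paper's does.
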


\begin{proofsketch}
We partition the graph $G_{\gamma}$ into non-overlapping cliques, analyze the regret of each clique and take the summation of regrets over cliques to obtain group regret. In addition to the challenges encountered in Theorem \ref{thm:regretB} here we are required to account for having different probabilities of failures for messages due to having multiple paths of different length between agents and to account for the delay incurred by each hop when passing messages. We overcome the first challenge by noting that agent $i$ receives each message with at least probability $p^{\gamma_i}.$  We overcome the second challenge by identifying that regret incurred by delays can be upper bounded using $\left(\sum_{i=1}^N\gamma_i-N\right)\sum_{k > 1}\Delta_k.$
\end{proofsketch}

\begin{remarks}\label{rem:MPRCL-LF}
 Finding an optimal observation probability  $\{p_i\}_{1=1}^N$ for {\tt RCL-LF} with message-passing is difficult due to the delays added by each hop when forwarding messages. If messages are forwarded without a delay, optimal performance can be obtained by using $p_i=\frac{d_{\min}(G_{\gamma})}{d_i(G_{\gamma})}.$ For dense $G_{\gamma}$, the above choice of observation probability provides \textit{near}-optimal performance.
  When  $\gamma = d_\star(G)$ we have that $G_{\gamma}$ is a complete graph, $p_i=\frac{d_{\min}(G_{\gamma})}{d_i(G_{\gamma})}=1$, and agents do not discard any message. However, when $\gamma < d_\star(G)$, the graph  $G_{\gamma}$ is not complete. Therefore agents receive different amounts of information which are approximately proportional to the degree distribution of $G_{\gamma}.$ As explained earlier this information disparity leads to a performance disparity among agents. As a result group performance decreases.  In this case we design the algorithm such that each agent $i$ discards messages with $1-p_i$ where $p_i=\frac{d_{\min}(G_{\gamma})}{d_i(G_{\gamma})}.$ This regulates the information flow mitigating the bias introduced by information disparity. As a result the group obtains near-optimal performance.
\end{remarks}         

\section{Instantaneous Reward-sharing Under Stochastic Delays}
\label{sec:sdelay}
Next, we consider a communication protocol, where any message is received after an arbitrary (but bounded) stochastic delay. We assume for simplicity that each message is sent only once in the network (and not forwarded multiple times as in {\em message-passing}), and leave the message-passing setting as future work. We assume, furthermore that the delays are identically and independently drawn from a bounded distribution with expectation $\expe[\tau]$ (similar to prior work, e.g.,~\citet{joulani2013online, vernade2017stochastic}). For this setting, we demonstrate that cooperative {\tt UCB1}, along with incorporating all messages as soon as they are available, provides efficient performance, both empirically and theoretically. We denote this algorithm as {\tt RCL-SD} (Stochastic Delays), and demonstrate that this approach incurs only an extra $\cO(\sqrt{N\log T}+\log T)$ overhead compared to perfect communication.

\begin{theorem}[{\tt RCL-SD} Regret] \label{thm:regretSD}Let $D_{\text{total}}=N\cdot\expe[\tau]+2\log T+2\sqrt{N\cdot\expe[\tau]\log T}$ denote an upper bound on the total number of outstanding messages. {\tt RCL-SD} obtains, with probability at least $1-\frac{1}{T},$ cumulative group regret of
\begin{align*}
 \regret_G(T)&\leq g(\xi,\sigma)\cdot\Bar{\chi}(G)\cdot\left(\sum_{k > 1}\frac{\log T}{\Delta_k}\right)
+ D_{\text{total}}\cdot\left(\sum_{k > 1}\Delta_k\right)+f(5N, G).
\end{align*}
\end{theorem}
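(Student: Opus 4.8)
\textbf{Proof proposal for Theorem~\ref{thm:regretSD}.}

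The plan is to adapt the clique-decomposition analysis of cooperative {\tt UCB1} (as in Theorem~\ref{thm:regretB} with $p_i=p=1$) to account for the fact that, at any round $t$, an agent's estimate $\widehat\mu_k^i(t-1)$ is missing the rewards corresponding to messages still in transit. First I would fix a non-overlapping clique covering $\EuScript C$ of $G$ of size $\bar\chi(G)$, and within each clique $\mathcal C$ designate a representative agent; the key point is that the rewards sampled by {\em any} agent in $\mathcal C$ are eventually received by every other agent in $\mathcal C$ (with $\gamma=1$ there is a direct edge), so the pooled sample count $N_k^{\mathcal C}(t)$ for arm $k$ across the clique, minus the outstanding messages, lower-bounds each member's individual count. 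As in the standard {\tt UCB1} argument, a suboptimal pull of arm $k$ by agent $i$ at round $t$ can only happen if one of the confidence bounds fails or if $N_k^i(t-1)$ is still below the critical threshold $\Theta(\sigma^2\log T/\Delta_k^2)$. Summing the ``threshold'' contribution across a clique gives the $g(\xi,\sigma)\bar\chi(G)\sum_{k>1}\log T/\Delta_k$ term, and summing the ``bad confidence event'' contribution over all $N$ agents and all arms gives the $f(5N,G)$ lower-order term, exactly as in Theorem~\ref{thm:regretB}.

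The genuinely new ingredient is controlling the gap between the clique-pooled sample count and the count actually available to an agent because of delays. Here I would invoke the high-probability bound on the total number of outstanding (undelivered) messages: since each of the $N$ agents generates one message per round and delays are i.i.d., bounded, with mean $\expe[\tau]$, the number of messages ``in flight'' at any time is a sum of indicators that can be controlled by a Bernstein/Chernoff bound, yielding that with probability at least $1-1/T$ this quantity never exceeds $D_{\text{total}}=N\expe[\tau]+2\log T+2\sqrt{N\expe[\tau]\log T}$ uniformly over $t\le T$. Conditioned on this event, the number of extra suboptimal pulls forced by delayed feedback — across the whole network — is at most $D_{\text{total}}$ for each suboptimal arm (one ``lost'' pull per outstanding message, charged at cost $\Delta_k$), contributing the $D_{\text{total}}\cdot\sum_{k>1}\Delta_k$ term. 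This mirrors the single-agent delayed-bandit accounting of~\citet{joulani2013online}, where delays add an additive $\expe[\tau]\sum_k\Delta_k$ term on top of the undelayed regret.

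The main obstacle I anticipate is making the ``one lost pull per outstanding message'' charging rigorous while it interacts with the clique pooling: an agent may be behind both because of delays {\em and} because clique-mates have not yet sampled arm $k$ enough, and these deficits must not be double-counted. The clean way to handle this is to define, for each arm $k$, a modified count $\widetilde N_k^{\mathcal C}(t)$ equal to the total number of pulls of $k$ by clique $\mathcal C$ whose rewards have actually been delivered to every member by round $t$; then standard {\tt UCB1} reasoning applies to $\widetilde N_k^{\mathcal C}$, and the discrepancy $N_k^{\mathcal C}(t)-\widetilde N_k^{\mathcal C}(t)$ is bounded, summed over all cliques and all $t$, by the number of outstanding messages, hence by $D_{\text{total}}$ on the good event. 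A secondary technical point is verifying that the sub-Gaussian concentration inequality for $\sum_\tau X_\tau^i\indicate{A_\tau^i=k}$ quoted in the sketch of Theorem~\ref{thm:regretB} still holds when the set of summed samples is itself a delay-determined (but reward-independent) stopping set — this follows because delays are independent of the reward realizations, exactly the same decoupling used there. With those two pieces in place, adding the three contributions gives the claimed bound.
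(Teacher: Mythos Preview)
Your proposal is correct and follows essentially the same approach as the paper: a clique decomposition of $G$, the standard {\tt UCB1} threshold-plus-tail split within each clique, and an additive charge of $D_{\text{total}}$ for the outstanding messages, controlled via the Joulani-style high-probability bound. The only cosmetic difference is that the paper absorbs the delay deficit directly into the definition of the stopping time $\tau_{k,\mathcal C}$ (setting it so that $\sum_{i\in\mathcal C}n_k^i(\tau_{k,\mathcal C})\le \eta_k+\sum_{i\in\mathcal C}D_i(\tau_{k,\mathcal C})+|\mathcal C|$), whereas you propose the equivalent device of a ``delivered'' count $\widetilde N_k^{\mathcal C}$; both resolve your anticipated double-counting issue in the same way.
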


\begin{proofsketch}
We first demonstrate that the additional group regret due to stochastic delays can be bounded by the maximum number of cumulative outstanding messages over all agents at any given time step. Then we apply a result similar to Lemma 2 of \cite{joulani2013online} to bound the total number of outstanding messages using the cumulative expected delay $N\cdot\expe[\tau]$, giving the result.
\end{proofsketch}
\begin{remarks}
\label{rem:stochastic_delays}
The $D_{\text{total}}$ term is a succinct upper bound on the maximum number of cumulative outstanding messages over all agents, and when the expected delay $\bbE[\tau] = o(1)$, we see that the contribution of $D_{\text{total}}$ is $\cO(\sqrt{N \log T} + \log T)$. We conjecture that this cannot be improved without restricting communication, as each agent will send $T$ messages in total. The result obtained by ~\citet{joulani2013online} has a similar dependence for a single agent.
\end{remarks}


\section{Hybrid Arm Elimination for Adversarial Reward Corruptions}
\label{sec:corruptedCom}
In this section, we assume that any reward when transmitted can be corrupted by a maximum value of $\epsilon$, i.e., $\max_{t, n} |r_n(t) - \tilde r_n(t)| \leq \epsilon$ where $\tilde r_{n}(t)$ denotes the transmitted reward. Furthermore, we assume that the corruptions can be {\em adaptive}, i.e., can depend on the prior actions and rewards of each agent. This model includes natural settings, where messages can be corrupted during transmission, as well as {\em byzantine} communication~\citep{dubey2020private}. If $\epsilon$ were known, we could then extend algorithms for misspecified bandits \citep{ghosh2017misspecified} to create a robust estimator and a subsequent \texttt{UCB1}-like algorithm that obtains a regret of $\cO(\bar\chi(G_\gamma)K(\frac{\log T}{\Delta}) + TNK\epsilon)$. However, this approach has two issues. First, $\epsilon$ is typically not known, and the dependence on $G_\gamma$ can be improved as well. We present an arm-elimination algorithm called {\tt RCL-AC} (Adversarial Corruptions) that provides better guarantees on regret, without knowledge of $\epsilon$ in Algorithm~\ref{alg:charm}. 

\begin{algorithm}[h!]
\caption{{\tt RCL-RC}: Cooperative Hybrid Arm Elimination}
\label{alg:charm}
\SetAlgoLined
 {\bf  Parameters:} Confidence $\delta \in (0, 1)$, horizon $T$, graph $G$ with exploration set $\cI \subseteq \cV$.
 Initialize $T_i(0) = K,\forall i \in \cI \lambda = 1024\log\left(\frac{8K\psi(G_\gamma)}{\delta}\log_2 T\right)$ and $\Delta^i_k(0) = 1, \forall\ k \in [K]$ and $i \in \cI$.
 
 \For{\text{\normalfont each subgraph $\cN^+_i(G_\gamma)$ where $i \in \cI$}}{
    \For{\text{\normalfont $t = 1, ..., K$, each agent $j \in \cN^+_i(G_\gamma)$}}{
        Play arm $K$ and get reward $r_j(t)$.
    }
    \For{\text{\normalfont epoch $m_i = 1, 2, ..., $}}{
        Set $n^i_k(m_i) = \lambda(\Delta^i_k(m_i-1))^{-2} \forall k \in [K]$.\\ $N_i(m_i) = \sum_{k} n^i_k(m_i)$ and $T_i(m_i) = T_i(m_i) + N_i(m_i) + 2\gamma$.
        
        \For{\text{\normalfont agent $j \in \cN^+_i(G_\gamma)$}}{
            \For{$t = T_i(m_i-1)$ to $s = T_i(m_i-1) + 2\gamma$}{
                \If{$j \neq i$}{
                    \If{$t \leq K+d(i, j)$}{
                        Pull random arm.
                    }
                    \Else{
                        Pull $A_j(t) = A_i(t-d(i,j))$ and get reward $r_j(t)$.
                    }
                }
                \Else{
                    Pull $A_j(t) = {\tt UCB1}(t)$
                }
            }
            
            \For{$t = T_i(m_i-1) + 2\gamma$ to $T_i(m_i)$}{
                \If{$j \neq i$}{
                    Pull $A_j(t) = A_i(t-d(i,j))$ and get reward $r_j(t)$.
                }
                \Else{
                    Pull an arm $A_i(t) = k \in [K]$ with probability $n^i_k(m_i)/N_k(m_i)$.
                }
            }
        }
    }
 }
\end{algorithm}

The central motif in {\tt RCL-AC}'s design is to eliminate bad arms by an epoch-based exploration, an idea that has been successful in the past for adversarially-corrupted stochastic bandits~\citep{lykouris2018stochastic, gupta2019better}. The challenge, however, in a message-passing decentralized setting is two-fold. First, agents have different amounts of information based on their position in the network, and hence badly positioned agents in $G$ may be exploring for much larger periods. Secondly, communication between agents is delayed, and hence any agent naively incorporating stale observations may incur a heavy bias from delays. To ameliorate the first issue, we partition the group of agents into two sets - {\em exploring agents} ($\cI$) and {\em imitating agents} ($\cV \setminus \cI$). The idea is to only allow well-positioned agents in $\cI$ to direct the exploration strategy for their neighboring agents, and the rest simply imitate their exploration strategy. We select $\cI$ as a minimal dominating set of $G_\gamma$, hence $|\cI| = \psi(G_\gamma)$. Furthermore, since $\cV \setminus \cI$ is a vertex cover, this ensures that each {\em imitating} agent is connected (at distance at most $\gamma$) to at least one agent in $\cI$. Next, observe that there are two sources of delay: first, any {\em imitating} agent must wait at most $\gamma$ trials to observe the latest action from its corresponding {\em exploring} agent, and second, each {\em exploring} agent must wait an additional $\gamma$ trials for the feedback from all of its imitating agents. We propose that each {\em exploring} agent run {\tt UCB1} for $2\gamma$ rounds after each epoch of arm elimination, using {\em only} local pulls. This prevents a large bias due to these delays, at a small cost of $\cO(\log\log T)$ suboptimal pulls.

\begin{theorem}[{\tt RCL-RC} Regret]\label{thm:corruption}
{\tt RCL-RC} obtains, with probability at least $1-\delta$, group regret of
\begin{equation*}
 \resizebox{0.99\hsize}{!}{%
    $\regret_G(T) = \cO\left(KTN\gamma\epsilon + \psi(G_\gamma)\cdot\underset{k>1}{\sum}\frac{\log T}{\Delta_k}\log\left(\frac{K\psi(G_\gamma)\log T}{\delta}\right)+N\underset{k>1}{\sum}\Delta_k + \underset{k>1}{\sum}\frac{N\log(\gamma \log T)}{\Delta_k} \right).$
    }
\end{equation*}
\label{thm:reg_charm}
\end{theorem}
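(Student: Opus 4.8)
The plan is to condition on a single high-probability event and decompose $\regret_G(T)$ according to the three regimes each agent passes through: (i) the initial round-robin together with the $\cO(\gamma)$ rounds in which an imitating agent pulls random arms while its pipeline fills, (ii) the length-$2\gamma$ local-{\tt UCB1} buffers inserted after every epoch of every exploring agent $i\in\cI$, and (iii) the cooperative arm-elimination phases themselves. Regime (i) contributes at most $\cO\big(N\sum_{k>1}\Delta_k\big)$, since it lasts $\cO(K+\gamma)$ rounds per agent and, using $\gamma\le d_\star(G)$, the $\gamma\Delta_{\max}$ piece is absorbed into $N\sum_{k>1}\Delta_k$. For regime (ii), the key observation is that an imitating agent only ever replays (with delay at most $\gamma$) the action its exploring agent took, so the union of all length-$2\gamma$ buffers of a fixed exploring agent is a single {\tt UCB1} trajectory accumulating $\cO(\gamma\log T)$ local samples (there are $\cO(\log T)$ epochs because epoch lengths grow geometrically); its pseudo-regret is $\cO\big(\sum_{k>1}\tfrac{\log(\gamma\log T)}{\Delta_k}\big)$ per exploring agent, and since each imitating agent's regret in regime (ii) is dominated by that of the exploring agent it copies, summing over all $N$ agents yields the $\sum_{k>1}\tfrac{N\log(\gamma\log T)}{\Delta_k}$ term. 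The substance is regime (iii).

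For regime (iii), fix an exploring agent $i$ and let $\hat\mu^i_k(m)$ be the empirical mean it forms for arm $k$ at the end of epoch $m$ from the $\approx|\cN^+_i(G_\gamma)|\,n^i_k(m)$ (possibly stale, possibly corrupted) samples reported by its cluster $\cN^+_i(G_\gamma)$. I would first verify \emph{causal completeness}: because each imitator $j$ needs at most $d(i,j)\le\gamma$ rounds to learn $i$'s chosen arm and at most $\gamma$ further rounds for its reward report to return, the $2\gamma$ buffer guarantees that by the time $i$ recomputes the gap estimates $\Delta^i_k(m)$ (which set $n^i_k(m+1)$) it holds \emph{all} epoch-$m$ rewards from its cluster. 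Granting this, I would port the {\tt BARBAR} analysis of \citet{gupta2019better}: with $\lambda=\Theta\big(\log(K\psi(G_\gamma)\log_2 T/\delta)\big)$ as in Algorithm~\ref{alg:charm}, a sub-Gaussian tail bound combined with a union bound over the $\psi(G_\gamma)$ exploring agents, the $K$ arms, and the $\cO(\log_2 T)$ epochs shows that, on the good event (which fails with probability at most $\delta$), $|\hat\mu^i_k(m)-\mu_k|\le\tfrac{1}{8}\Delta^i_k(m-1)+\beta^i_k(m)$, where $\beta^i_k(m)$ is the average corruption mass landing on the epoch-$m$ samples of arm $k$.

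From here the standard {\tt BARBAR} recursion gives, on the good event, $\Delta^i_k(m)=\Theta\big(\max(2^{-m},\Delta_k)+\sum_{m'\le m}\beta^i_k(m')\big)$, so the schedule $n^i_k(m)=\lambda/\Delta^i_k(m-1)^2$ never over-pulls a suboptimal arm $k$ by more than a constant multiple of $\lambda/\Delta_k^2$ per epoch beyond the slack created by corruption. Summing $n^i_k(m)\Delta_k$ over the $\cO(\log T)$ epochs, and using (via the $2\gamma$ buffer and the fact that in late epochs the allocation $(n^i_k(m))_{k\in[K]}$ is dominated by the near-optimal arm) that the replayed pulls of a cluster match those of its exploring agent up to constants, the corruption-free part of the elimination regret over all $\psi(G_\gamma)$ clusters is $\cO\big(\psi(G_\gamma)\sum_{k>1}\tfrac{\log T}{\Delta_k}\log(\tfrac{K\psi(G_\gamma)\log T}{\delta})\big)$; and since the adversary corrupts at most $\epsilon$ per transmitted reward, there are at most $NT$ transmitted rewards, each reused along at most $\gamma$ hops, and—exactly as in the single-agent corrupted analysis—each unit of corruption inflates regret by a factor $\cO(K)$, the accumulated corruption slack $\sum_{i,k,m}|\cN^+_i(G_\gamma)|\,n^i_k(m)\beta^i_k(m)\Delta_k$ telescopes to $\cO(KNT\gamma\epsilon)$. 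Adding the three regimes and the $\delta$ failure probability gives the claimed bound.

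The main obstacle is the coupling of delay and corruption in the concentration step: a single corrupted reward can be replayed by several imitators and can be in flight for up to $2\gamma$ rounds, so one must bound, epoch-by-epoch and arm-by-arm, how much corruption mass can be deposited on $\hat\mu^i_k(m)$ and confirm that the $2\gamma$ buffer plus the $n^i_k(m)$ allocation keeps this below $\tfrac{1}{8}\Delta^i_k(m-1)+\beta^i_k(m)$ with the recursion for $\Delta^i_k(m)$ still closing—i.e.\ that the adversary cannot use the delay to smear a fixed budget over disproportionately many epochs. A secondary point needing care, already foreshadowed above, is showing the imitating agents do not inflate the dominant $\psi(G_\gamma)$-term: this rests on the allocation in late epochs being concentrated on the (near-)optimal arm, so a cluster's replayed suboptimal pulls are comparable to those of its single exploring agent rather than scaling with cluster size.
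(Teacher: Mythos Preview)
Your three-regime decomposition and the BARBAR-style recursion for regime (iii) match the paper's proof closely: the paper also splits off the $2\gamma$-length {\tt UCB1} interludes (bounding them by $|\cN^+_i(G_\gamma)|\sum_{k>1}\big((1+\tfrac{\pi^2}{3})\Delta_k+\tfrac{8\log(2\gamma M_i)}{\Delta_k}\big)$ with $M_i\le\log_2 T$ epochs), and handles the elimination phases by a per-epoch, per-arm three-case split inherited from \citet{gupta2019better}.

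Where the paper is more concrete is exactly the obstacle you flag. Rather than a sub-Gaussian tail plus $\beta^i_k(m)$ slack, the paper builds an explicit multi-agent filtration $\{\cF_\tau\}$ by ordering the agents in $\cN^+_i(G_\gamma)$ according to distance from $i$ and interleaving their reward streams, so that the adaptive corruption $C_{j\to i}(t)$ is $\cF_{\tau-1}$-measurable; it then applies Freedman's inequality to the martingale difference $(\widetilde X_\tau-p^i_k(m))\widetilde C_\tau$, whose predictable quadratic variation is at most $p^i_k(m)\cdot\gamma\epsilon L^i(m)$. This directly neutralises your delay-smearing worry. Note also that the corruption model here is a \emph{per-message} bound $\epsilon$, not a total budget, so the concentration yields a fixed $|r^i_k(m)-\mu_k|\le 2\gamma\epsilon+\tfrac{1}{16}\Delta^i_k(m-1)$ rather than an epoch-varying $\beta^i_k(m)$; the recursion is then $\Delta^i_k(m)\ge\tfrac{\Delta_k}{2}-6\gamma\epsilon\sum_{n\le m}8^{n-m}-\tfrac{3}{4}2^{-m}$, and the $KTN\gamma\epsilon$ term emerges from the third case ($\gamma\epsilon\sum_n 8^{n-m}>\Delta_k/64$) via $\Delta_k n^i_k(m)\lesssim\gamma\epsilon L^i(m)$ summed over the $K$ arms and all epochs. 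Your $\beta^i_k(m)$ route would reach the same place but is a detour given that the per-message bound makes the corruption contribution deterministic per sample.

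On your secondary concern --- whether imitating agents inflate the $\psi(G_\gamma)$-term to an $N$-term --- the paper does not invoke the ``late epochs concentrate on the optimal arm'' argument you sketch; it instead sums the per-leader elimination regret $\sum_{m,k}\Delta_k\bbE[n^i_k(m)]$ directly over $i\in\cI$ (giving the $\psi(G_\gamma)$ factor) while the $|\cN^+_i(G_\gamma)|$ multiplier from the imitators is absorbed only into the corruption term (where it sums to $N$). Your instinct that this step deserves scrutiny is well-placed.
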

\begin{proofsketch} Since the dominating set covers $\cV$, we can decompose the group regret into the cumulative regret of the subgraphs corresponding to each agent in $\psi(G_\gamma)$. For each subgraph, we can consider the cumulative regret incurred when the {\em exploring agent} follows {\tt UCB1} versus arm elimination. We have that arm elimination occurs for $\log T$ epochs, and since {\tt UCB1} runs for $2\gamma$ rounds between succesive epochs, we have that in any subgraph of size $n$, the cumulative regret from {\tt UCB1} rounds is of $\cO(nK\log(\gamma \log T))$. For arm elimination, we can bound the subgraph regret using a modification of the approach in~\citet{gupta2019better}: the difference in our approach is to construct a multi-agent filtration for arbitrary (reward-dependent) corruptions from message-passing, and then applying Freedman's bound on the resulting martingale sequence. Subsequently, the regret in each epoch is bounded in a manner similar to~\citet{gupta2019better}, and finally applying a union bound.
\end{proofsketch}

\begin{remarks}[Regret Optimality]
Theorem~\ref{thm:reg_charm} demonstrates a trade-off between communication density and the adversarial error, as seen by the first two terms in the regret bound. The first term ($KTN\gamma\epsilon$) is a bound on the cumulative error introduced due to message-passing, which is increasing in $\gamma$, whereas the second term denotes the logarithmic regret due to exploration, where $\psi(G_\gamma)$ decreases as $\gamma$ increases: for $\gamma = d_\star(G), \psi(G_\gamma) = 1$, matching the lower bound in~\citet{dubey2020cooperative}. This too is expected, as fewer exploring agents are needed with a higher communication budget. Furthermore, we conjecture that the first term is optimal (in terms of $T$, up to graphical constants): a linear lower bound has been demonstrated for the single-agent setting in~\citet{lykouris2018stochastic}.
\end{remarks}

\begin{remarks}[Computational complexity]
While the dominating set problem is known to be NP-complete~\citep{karp1972reducibility}, the problem admits a polynomial-time approximation scheme (PTAS)~\citep{crescenzi1995compendium} for certain graphs, for which our bounds hold exactly. However, {\tt RCL-RC} can work on any dominating set of size $n$, and suffer regret of $\ctO(KTN\gamma\epsilon + n\sum_{k >1}\frac{\log T}{\Delta_k})$\footnote{The $\ctO$ notation ignores absolute constants and $\log \log(\cdot)$ factors in $T$.}.
\end{remarks}


\section{An Algorithm for Perfect Communication and Lower Bounds}
\label{sec:perf_alg_lower_bounds}
For perfect communication, we present Delayed {\tt MP-UCB}, a simple improvement to {\tt UCB1} with message-passing where each agent $i$ only incorporates messages originated prior to $\Bar{\gamma}\leq \gamma$ time steps, reducing disparity in information across agents.

\begin{theorem}[Delayed {\tt MP-UCB Regret}] \label{thm:regretDD}{\tt Delayed(MP)-UCB} obtains cumulative group regret of
\begin{align*}
\regret_G(T)
& \leq  g(\xi,\sigma)\Bar{\chi}(G_{\gamma}) \left(\sum_{k > 1}\frac{\log T}{\Delta_k}\right)
+ (N-\Bar{\chi}(G_{\gamma})\left(\gamma-1\right)\sum_{k > 1}\Delta_k \!+ f(5N, G_{\gamma})+h(G_{\gamma},\Bar{\gamma})
\end{align*}
where  $h(G_{\gamma},\Bar{\gamma})=\left((N-\Bar{\chi}(G_{\gamma})\Bar{\gamma}+\sum_{t>\Bar{\gamma}}^T\frac{\log\left (1-\frac{d_{i}(G_{\gamma})\Bar{\gamma}}{(d_{i}(G_{\gamma})+1)t)}\right)}{\log 1.3}\frac{1}{t^{(\xi+1)\left(1-\frac{0.09}{16}\right)}}\right)\sum_{k > 1}\Delta_k$.
\end{theorem}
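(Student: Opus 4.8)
The plan is to adapt the clique-decomposition argument behind Theorem~\ref{thm:regretB} to the power graph $G_\gamma$, adding bookkeeping for the two delays now present: the $\gamma$-hop transit delay intrinsic to message-passing, and the $\bar\gamma$-round delay each agent deliberately imposes on incoming messages. First I would fix a minimal clique covering $\EuScript{C}$ of $G_\gamma$ (so $|\EuScript{C}| = \bar\chi(G_\gamma)$) and write $\regret_G(T) = \sum_{\mathcal C\in\EuScript{C}}\sum_{i\in\mathcal C}\sum_{k>1}\Delta_k\,\bbE[n^i_k(T)]$, bounding each clique separately. Inside a clique $\mathcal C$ of $G_\gamma$ any two agents are at $G$-distance at most $\gamma$, so a reward posted at round $s$ by agent $j$ is held by every clique-mate $i$ by round $s+d(i,j)\le s+\gamma$; together with the rule that $i$ only uses messages at least $\bar\gamma$ rounds old, at round $t$ agent $i$'s pooled sample set for arm $k$ contains every clique-mate $j$'s pulls up to round $t-\max(\bar\gamma,d(i,j))$, so the clique's agents share a near-common view that lags the ideal pooled count $N^{\mathcal C}_k(t) = \sum_{j\in\mathcal C}n^j_k(t)$ by a bounded window.

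Next I would run the {\tt UCB1} analysis of~\citet{auer2002finite} at the clique level: the representative agent's optimism index uses a pooled count growing roughly $|\mathcal C|$ times as fast as a lone agent's, so the \emph{whole} clique makes only $\cO(\log T/\Delta_k^2)$ suboptimal pulls of arm $k$. Summing $\Delta_k$ times this over arms and over the $\bar\chi(G_\gamma)$ cliques yields the leading term $g(\xi,\sigma)\bar\chi(G_\gamma)\sum_{k>1}\tfrac{\log T}{\Delta_k}$, while the lower-order tail sums from the peeling step collect into the constant $f(5N,G_\gamma)$, exactly as in Theorem~\ref{thm:regretB}. The concentration input --- that $\expe\big[\exp\big(\lambda(\sum_{\tau\le t}X^i_\tau\indicate{A^i_\tau=k}-\mu_k N^i_k(t)-\tfrac{\lambda^2\sigma^2_k}{2}N^i_k(t))\big)\big]\le 1$ --- still holds since the communication graph, the hop-delays and the $\bar\gamma$-truncation are all independent of the reward realizations, so the Markov-plus-peeling derivation carries over verbatim to the pooled samples.

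For the delay-induced penalties I would separate three contributions. (i) The physical $\gamma$-hop transit: a non-representative agent can take up to $\gamma-1$ extra suboptimal pulls before it is informed that exploration should stop, and since one representative per clique pays nothing, this is $(N-\bar\chi(G_\gamma))(\gamma-1)\sum_{k>1}\Delta_k$. (ii) The deliberate $\bar\gamma$-truncation makes each non-representative agent's pooled count lag the ideal one by $\bar\gamma$ rounds, costing a further $\bar\gamma$ suboptimal pulls apiece --- the $(N-\bar\chi(G_\gamma))\bar\gamma\sum_{k>1}\Delta_k$ piece of $h$. (iii) The residual multiplicative shortfall: discarding the $\bar\gamma$ most recent observations from each of agent $i$'s $d_i(G_\gamma)$ neighbors while keeping all of its own, the pooled count at round $t$ is still at least a $\big(1-\tfrac{d_i(G_\gamma)\bar\gamma}{(d_i(G_\gamma)+1)t}\big)$-fraction of $N^{\mathcal C}_k(t)$ (using that each per-agent count is at most $t$); rerunning the peeling bound with this shrunk count forces a peeling ratio just above $1$ --- hence the base $1.3$ and the degraded exponent $(\xi+1)(1-\tfrac{0.09}{16})$ --- and the resulting per-round failure probability, summed over $t>\bar\gamma$ (and over the relevant agent index), is the remaining tail term in $h$.

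The main obstacle is step (iii): one must make the notion of an agent's ``effective'' pooled count precise given the heterogeneous hop-distances inside a $G_\gamma$-clique together with the $\bar\gamma$-truncation, verify the $\big(1-\tfrac{d_i(G_\gamma)\bar\gamma}{(d_i(G_\gamma)+1)t}\big)$ lower bound, and then push it through a peeling/doubling argument whose geometric grid must be chosen fine enough (ratio $1.3$ rather than $2$) to resolve the lag without inflating the number of grid cells --- this is exactly what trades $(\xi+1)$ for $(\xi+1)(1-\tfrac{0.09}{16})$ in the exponent and leaves the sum in $h$ non-telescoping. The remaining ingredients --- the clique decomposition, the pooled {\tt UCB1} bound, the $\cO(\gamma)$ and $\cO(\bar\gamma)$ startup costs, and the collection of lower-order terms into $f(5N,G_\gamma)$ --- are routine adaptations of the arguments already established for Theorems~\ref{thm:regretB} and~\ref{thm:regretMP}.
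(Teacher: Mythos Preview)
Your clique decomposition over $G_\gamma$, the pooled {\tt UCB1} accounting that gives the leading $g(\xi,\sigma)\bar\chi(G_\gamma)\sum_{k>1}\tfrac{\log T}{\Delta_k}$, and the additive delay costs (i) and (ii) all match the paper's approach (Lemma~\ref{lem:regStocMPID} in the appendix carries out exactly this bookkeeping, producing the $(N-\bar\chi(G_\gamma))(\bar\gamma+\gamma-1)$ overhead). The concentration step also carries over as you say.

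Where your reading goes astray is step (iii). The second piece of $h(G_\gamma,\bar\gamma)$ is \emph{negative}: $\log\bigl(1-\tfrac{d_i(G_\gamma)\bar\gamma}{(d_i(G_\gamma)+1)t}\bigr)<0$ for $t>\bar\gamma$, so this term is a \emph{reduction} in regret, not an extra cost. The mechanism is the opposite of what you describe. Because agent $i$ discards the last $\bar\gamma$ rounds of each neighbor's messages, its pooled count satisfies the tighter \emph{upper} bound $N_k^i(t)\le (d_i(G_\gamma)+1)t-d_i(G_\gamma)\bar\gamma$ rather than $(d_i(G_\gamma)+1)t$. In the peeling argument of Lemma~\ref{lem:tailApp}/\ref{lem:tailAppMP}, the number of geometric levels is $D_t=\tfrac{\log((d_i+1)t)}{\log\zeta}$; replacing $(d_i+1)t$ by the smaller cap shrinks $D_t$ by $\tfrac{\log(1-d_i\bar\gamma/((d_i+1)t))}{\log\zeta}$, and that is precisely the (negative) correction appearing in $h$. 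So there is no ``shrunk count forcing a finer grid'' and no extra failure-probability sum to control --- the tail bound simply gets \emph{better}.

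Relatedly, the choice $\zeta=1.3$ and the exponent $(\xi+1)(1-\tfrac{0.09}{16})$ are not special to this theorem: they are the fixed choices made in Lemma~\ref{lem:tailsum} and used throughout the paper (Theorems~\ref{thm:regretB}--\ref{thm:regretSD} all invoke them). Your proposal treats them as artifacts of handling the $\bar\gamma$-lag, which they are not. Once you reverse the sign of (iii) and recognize that the $1.3$/exponent are already baked into $f(5N,G_\gamma)$, the remaining argument is exactly the routine combination of Lemmas~\ref{lem:tailsum}, \ref{lem:tailAppMP}, and \ref{lem:regStocMPID} that the paper performs.
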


\begin{proofsketch}
Following a similar approach to the proof of Theorem \ref{thm:regretMP} we partition the graph $G_{\gamma}$ into a set of non-overlapping cliques, analyze the regret of each clique via a {\tt UCB1} type analysis and take the summation of regret over cliques. However, using less information (due to delayed information usage) in estimates leads to a large confidence bound $C_k^i(t)$ and this reduces the contribution to the regret from tail probabilities. Note that $\log\left (1-\frac{d_{i}(G_{\gamma})\Bar{\gamma}}{(d_{i}(G_{\gamma})+1)t)}\right)$ is negative $\forall t>\Bar{\gamma}$, and hence lower regret achieved due to low tail probabilities is given by the second term of $h(G_{\gamma},\Bar{\gamma}).$ 
\end{proofsketch}

\begin{remarks}
Incorporating only the messages originated before $\Bar{\gamma}$ time steps is similar to communicating over $G_{\Bar{\gamma}}$ after a delay of $\Bar{\gamma}$ time steps. When $G$ is connected and $\Bar{\gamma}=\gamma=d_*$ this is similar to communicating over a complete graph with a delay of $d_*.$ Thus Delayed {MP-UCB} mitigates the disparity in information used by each agent, leading to improved group  performance.
\end{remarks}

{\bf Lower Bounds}. Without strict assumptions, a lower bound of $\cO\left(\sum_{k > 1}\sfrac{\log T}{\Delta_k}\right)$ has been demonstrated both for $\gamma=1$ (instantaneous reward-sharing,~\citet{kolla2018collaborative}) and $\gamma > 1$ (message-passing,~\citet{dubey2020cooperative}), which both suggest that a speedup of $\frac{1}{N}$ is potentially achievable. For a more restrictive class of {\em individually consistent} and {\em non-altruistic} policies (i.e., that do not contradict their local feedback), a tighter lower bound of $\cO\left(\alpha(G_2)\sum_{k > 1}\sfrac{\log T}{\Delta_k}\right)$ can be demonstrated for reward-sharing~\citep{kolla2018collaborative}, and consequently $\cO\left(\alpha(G_{\gamma+1})\sum_{k > 1}\sfrac{\log T}{\Delta_k}\right)$ for message-passing. To supplement these results, we present a lower bound to characterize the minimax optimal rates for the problem. We present first an assumption on multi-agent policies.
\begin{assumption}[Agnostic decentralized policies]
A set of $N$ policies $\pi_1, ..., \pi_N$ are termed agnostic decentralized policies, if for every pair $(i, j)$ of agents that communicate in $G$ and each $t \in [T]$, $\pi_i(t)$ is independent of $\{\pi_j(\tau)\}_{\tau=1}^{t-d(i,j)}$ conditioned on the rewards $\{(A_j(\tau), X_j(\tau))\}_{\tau=1}^{t-d(i,j)}$.
\end{assumption}
\begin{theorem}[Minimax Rate]\label{thm:lowerBound}
For any policy $\cA$, there exists a $K$-armed environment over $N$ agents with $\Delta_k \leq 1$ for any connected graph $G$ and $\gamma \geq 1$ such that, for some absolute constant $c$,
\begin{align*}
    \regret_G(\cA, T) \geqslant c\sqrt{KN(T+\widetilde{d}(G))}.
\end{align*}
Furthermore, if $\cA$ is an agnostic decentralized policy, there exists a $K$-armed environment over $N$ agents with $\Delta_k \leq 1$ for any connected graph $G$ and $\gamma \geq 1$ such that, for some absolute constant $c'$,
\begin{align*}
    \regret_G(\cA, T) \geqslant c'\sqrt{\alpha^\star(G_\gamma)KNT}.
\end{align*}
Here $\tilde d(G) = \sum_{i=1}^{d^\star(G)} \bar{d}_{=i}\cdot i$ denotes the average delay incurred by message-passing across the network $G$, and $\alpha^\star(G_\gamma) = \frac{N}{1+ \overline{d}_\gamma}$ is Turan's lower bound~\citep{turan1941external} on $\alpha(G_\gamma)$.
\end{theorem}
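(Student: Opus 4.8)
# Proof Proposal for Theorem \ref{thm:lowerBound} (Minimax Rate)

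\textbf{Overall strategy.} The plan is to establish both bounds via information-theoretic (KL-divergence) arguments adapted from the standard single-agent minimax lower bound of Auer et al., but with two twists: first, the group regret aggregates across $N$ agents, which multiplies the effective sample budget; second, the delay structure of message-passing means that a newly observed sample at agent $i$ cannot influence agent $j$'s actions until $d(i,j)$ rounds later, so the \emph{usable} information is reduced. For the first (general) bound, I would consider the ``pooled'' view in which the group collectively has a sample budget of roughly $N(T + \widetilde d(G))$ — the $\widetilde d(G)$ correction accounting for the fact that information propagating through the network is ``in flight'' for an average of $\widetilde d(G)$ rounds, so effectively the horizon for decision-making purposes is extended. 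For the second (agnostic) bound, I would exploit the agnostic decentralized assumption to argue that a well-chosen independent set of $G_\gamma$ acts as a collection of \emph{essentially non-communicating} sub-bandits, so that the lower bound degrades by a factor of $\alpha^\star(G_\gamma)/N$ relative to a fully cooperative group.

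\textbf{Step 1: General bound via a needle-in-haystack construction.} I would fix a ``base'' environment where all $K$ arms have mean $1/2$ (Bernoulli or a sub-Gaussian analogue), and a family of alternative environments indexed by $k^\star \in [K]$, where arm $k^\star$ has mean $1/2 + \Delta$ for a gap $\Delta$ to be optimized. Let $n^i_{k^\star}$ count pulls of $k^\star$ by agent $i$ over the horizon. Using a Pinsker/divergence-decomposition argument, the total variation between the trajectory laws under the base and the $k^\star$-alternative is controlled by $\Delta^2 \cdot \mathbb E_{\text{base}}[\sum_i n^i_{k^\star}]$, \emph{but} one must account for the fact that the relevant filtration for agent $j$'s action at time $t$ only includes agent $i$'s data up to time $t - d(i,j)$. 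Summing over all $k^\star$ and averaging, there exists a $k^\star$ with $\mathbb E[\sum_i n^i_{k^\star}]$ small, and since the group makes $N$ pulls per round over $T$ rounds — augmented by the $N\widetilde d(G)$ ``delay slack'' that appears when one unrolls the delayed filtration carefully — one gets $\regret_G \gtrsim N T \Delta (1 - \sqrt{\Delta^2 N(T+\widetilde d(G))/K})$. Optimizing $\Delta \asymp \sqrt{K/(N(T+\widetilde d(G)))}$ yields $\regret_G \gtrsim \sqrt{KN(T+\widetilde d(G))}$.

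\textbf{Step 2: Agnostic bound via independent-set decoupling.} Here I would pick a maximum independent set $S$ of $G_\gamma$ with $|S| = \alpha(G_\gamma)$ (and use Turán's bound $\alpha(G_\gamma) \geq \alpha^\star(G_\gamma) = N/(1+\bar d_\gamma)$ to pass to the stated form). For agents $i, j \in S$, distinct, we have $d(i,j) > \gamma$, so in the message-passing protocol with life $\gamma$ their observations never reach each other directly; the agnostic decentralized assumption then lets me argue that, restricted to a carefully constructed environment, the actions of agents in $S$ are conditionally independent across the trajectory in the sense needed for a product-measure lower bound. I would run the needle-in-haystack construction separately on each agent in $S$ (each sees a $K$-armed instance), apply the single-agent $\sqrt{KT}$ lower bound to each, and sum: $\regret_G \geq \sum_{i \in S} c\sqrt{KT} = c\,\alpha(G_\gamma)\sqrt{KT} \geq c'\sqrt{\alpha^\star(G_\gamma) K N T}$ after using $\alpha(G_\gamma) \geq N/(1+\bar d_\gamma)$ and $\alpha(G_\gamma) \cdot \sqrt{KT} = \sqrt{\alpha(G_\gamma)^2 KT} \geq \sqrt{\alpha(G_\gamma) \cdot \frac{N}{1+\bar d_\gamma} \cdot KT}$. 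The non-altruistic / agnostic property is essential: it prevents agents outside $S$ from relaying $S$-agents' information around the network in a way that would break the decoupling.

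\textbf{Main obstacle.} The delicate point is Step 2's decoupling: the agnostic assumption only controls $\pi_i(t)$ relative to the \emph{actions and rewards} of communicating neighbors, not the full latent state, so I must be careful to construct the adversarial environment so that the only channel by which information about $i$'s hidden best arm can reach $j$ is through the (delayed) reward stream — and then argue that for $i,j \in S$ this channel has length $> \gamma$ and is therefore severed by the $\gamma$-message-life. Making the conditional-independence claim fully rigorous (rather than appealing to intuition) — i.e., showing that the joint law over $S$-agents' trajectories dominates a product of single-agent laws for the purposes of the divergence bound — is the step I expect to require the most care, and likely a chain-rule/data-processing argument on a suitably defined product filtration. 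The $\widetilde d(G)$ term in the first bound is the secondary subtlety: pinning down exactly why the delay slack enters \emph{additively} with $T$ (and with the particular weighting $\sum_i \bar d_{=i}\cdot i$) requires tracking how many (agent, time) observation slots are ``wasted'' on in-flight messages before they can inform any decision.
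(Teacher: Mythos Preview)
Your overall information-theoretic strategy is right, but both steps contain genuine gaps that prevent the argument from reaching the stated bounds.

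\textbf{Step 1: the $\widetilde d(G)$ term enters with the wrong sign.} You write that the group's sample budget is ``augmented by the $N\widetilde d(G)$ delay slack'' and then put $N(T+\widetilde d(G))$ inside the Pinsker square root. But delays \emph{reduce} the information available for decision-making: when you apply the chain rule to the law of agent $i$'s trajectory, the contribution from agent $j$ is $\bbE_0[n^k_j(t-d(i,j))]$, not $\bbE_0[n^k_j(t)]$. Summing over $j$ and $t$ gives a KL budget of order $N(T-\widetilde d(G))$, not $N(T+\widetilde d(G))$. With your sign, optimizing $\Delta\asymp\sqrt{K/(N(T+\widetilde d(G)))}$ yields $NT\Delta \asymp \sqrt{KNT^2/(T+\widetilde d(G))}\leq\sqrt{KNT}$, which is \emph{weaker} than the claimed bound, not stronger. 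The paper's route is to get $\sqrt{KNT^2/(T-\widetilde d(G))}$ and then use the elementary inequality $T^2/(T-a)\geq T+a$ to convert this into $\sqrt{KN(T+\widetilde d(G))}$. Your intuition that ``usable information is reduced'' (stated earlier in your proposal) is correct; the later accounting contradicts it.

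\textbf{Step 2: the independent-set decoupling is too weak.} Your argument lower-bounds only the regret incurred by the $|S|=\alpha(G_\gamma)$ agents in the independent set, giving at most $\alpha(G_\gamma)\sqrt{KT}$. This cannot in general be converted to $\sqrt{\alpha^\star(G_\gamma)KNT}$: take $G_\gamma$ complete, so $\alpha(G_\gamma)=\alpha^\star(G_\gamma)=1$; your bound is $\sqrt{KT}$ while the target is $\sqrt{KNT}$. The paper takes a different route that keeps \emph{all $N$ agents} in the regret sum and instead restricts the information each one has. Concretely, the agnostic assumption is used to argue that for agent $i$ the chain-rule KL only picks up contributions from $j\in\cN^+_i(G_\gamma)$ (higher-order terms vanish by conditional independence), giving a per-agent KL budget of order $\sum_{j\in\cN^+_i(G_\gamma)}T=(1+d_i(G_\gamma))T$. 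Summing over $i$ yields total budget $N(1+\bar d_\gamma)T$, and optimizing $\varepsilon$ then produces $NT\sqrt{K/(N(1+\bar d_\gamma)T)}=\sqrt{N\cdot\tfrac{N}{1+\bar d_\gamma}\cdot KT}=\sqrt{\alpha^\star(G_\gamma)KNT}$. The decoupling you propose throws away the regret of $N-\alpha(G_\gamma)$ agents and so loses exactly the factor needed for dense $G_\gamma$.
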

\begin{remarks}[Tightness of lower bound]
The first minimax bound does not make any assumptions on the policy $\cA$, and hence we only see an additive dependence of the average delay incurred by communication over $G$. This dependence generalizes the minimax rate for delayed multi-armed bandits~\citep{neu2010online} to graphical feedback. For the latter bound, observe that a variety of cooperative extensions of single-agent bandit algorithms~\citep{kolla2018collaborative, dubey2020cooperative, cesa2019delay} obey this assumption, where the decision-making for any agent is independent of any other agent, conditioned on the observed rewards. In this setting, agents merely treat messages as additional pulls to construct stronger estimators, and do not strategize collectively. This bound is exact (up to constants) for a variety of communication graphs $G$. For instance, for linear and circular graphs, $\frac{\alpha^\star(G_\gamma)}{\alpha(G_\gamma)} = o(1)$, and for $d$-regular graphs, $\alpha^\star(G_\gamma) = \alpha(G_\gamma)$~\citep{turan1941external}.
\end{remarks}

\section{Experimental Results}
\label{sec:experiments}
We consider the 10-armed bandit with rewards drawn from Gaussian distributions with $\sigma_k = 1$ for each arm, such that $\mu_1 = 1$ and $\mu_k = 0.5$ for $k \neq 1$, and the number of agents $N = 50$, where we repeat each experiment $100$ times with $G$ selected randomly from different families of random graphs. The bottom row of Figure~\ref{fig:performance} corresponds to Erdos-Renyi graphs with $p=0.7$. The top row of Figure~\ref{fig:performance} (a), (c) and (d) corresponds to multi-star graphs and (b) and (e)  to random tree graphs. We set $\xi=1.1$ and $\gamma=\max\{3, d_\star(G)/2\}$.

{\bf Stochastic Link Failure}. Figure~\ref{fig:performance}(a) and Figure~\ref{fig:performance}(b) summarize performance of {\tt RCL(RS)-LF} and {\tt RCL(MP)-LF}, comparing it with the corresponding reward-sharing and message-passing {\tt UCB}-like algorithms in which $p_i=1$, $\forall i\in [N]$, for different $p$ values.  The group regret is given at $T=500.$ The results validate our claim that probabilistic message discarding improves performance for irregular graphs and provides competitive performance for \textit{near}-regular graphs. 

{\bf Stochastic Delays}. We compare performance of {\tt RCL-SD} with {\tt UCB1}. We draw delays from a bounded distribution with $\expe[\tau]=10$ and $\tau_{\max}=50.$ The results are summarized in Figure~\ref{fig:performance}(c).

{\bf Adversarial Communication}. We compute the (approximate) dominating set using the algorithm provided in {\tt networkx} for each connected component in $G_\gamma$. We draw corruptions uniformly from the range $[0, \epsilon]$ for each message, where $\epsilon$ is increased from $10^{-3}$ to $10^{-2}$. The group regret at $T=500$ as a function of $\epsilon$ is shown in Figure~\ref{fig:performance}(d) and compared against individual {\tt UCB1} and cooperative UCB with message-passing ({\tt MP-UCB}), which incur larger regret increasing linearly with $\epsilon$.

{\bf Perfect Communication}. We compare the regret curve for $T=1000$ for our  {\tt Delayed(MP)-UCB} against regular {\tt MP-UCB} in Figure~\ref{fig:performance}(e). We use $\Bar{\gamma}=2.$ It is evident that delayed incorporation of messages markedly improves performance across both networks.

\begin{figure*}[t]
    \centering
    \includegraphics[width=0.99\textwidth]{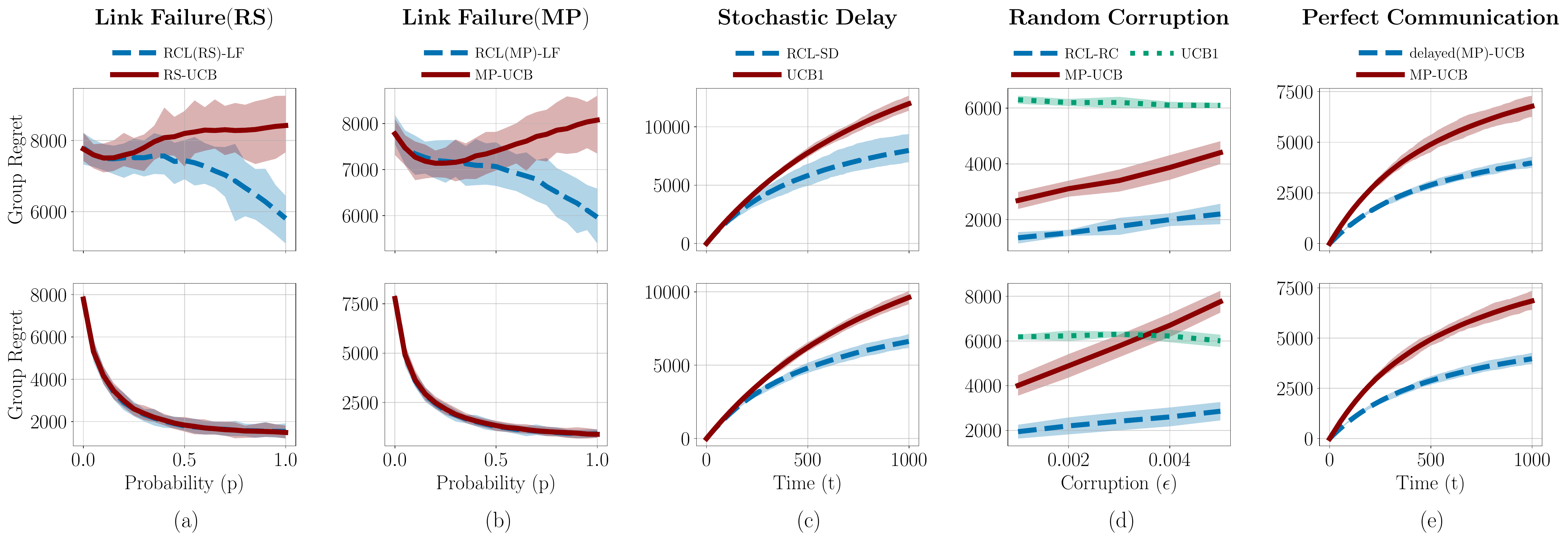}
    \vspace{-5pt}
    \caption{\small{ Experimental results for various imperfect communication settings.}}
    \label{fig:performance}
    \vspace{-10pt}
\end{figure*}


\section{Conclusions}
\label{sec:conclusion}
In this paper, we studied the cooperative bandit problem in three different imperfect communication settings. For each setting, we proposed algorithms with competitive empirical performance and provided theoretical guarantees on the incurred regret. Further, we provided an algorithm for perfect communication that comfortably outperforms existing baseline approaches. We additionally provided a tighter network-dependent minimax lower bound for the cooperative bandit problem. We believe that our contributions can be of immediate utility in applications. Moreover, future inquiry can be pursued in several different directions, including multi-agent reinforcement learning and contextual bandit learning.

{\bf Ethical Considerations}. Our work is primarily theoretical, and we do not foresee any negative societal consequences arising specifically from our contributions in this paper.

\section*{Acknowledgement} 
This research has been supported in part by ONR grants N00014-18-1-2873 and N00014-19-1-2556, ARO grant W911NF-18-1-0325, and the MIT Trust::Data Consortium.

\clearpage

\bibliography{neurips2021}
\bibliographystyle{apsr}

\clearpage
\appendix

\section{Proof of Theorem \ref{thm:regretB}}
We consider the case where each message fails with probability $1-p$ and each agent $i$ uses the messages it receives from its neighbors with probability $p_i.$ This is equivalent to each agent $i$ receiving messages from its neighbors with probability $p_ip.$ Let $\indicate{(i,j)\in E_t}$ be the indicator random variable that takes value 1 if agent $i$ receives reward value and arm id from agent $j$ at time $t$ and 0 otherwise.

We start by proving some useful lemmas.

\begin{lemma} {\bf{(Restatement of results from \citep{auer2002finite})}}\label{lem:tailrestate}
Let $\eta_k=\left(\frac{8(\xi+1)\sigma^2}{\Delta^2_k}\right)\log T.$ For any suboptimal arm $k$ and $\forall i,t$ we have
\begin{align*}
 \P\left(A_i(t+1) = k, N_k^{i}(t)> \eta_k\right)
    \leq 
    \P \left(\widehat{\mu}_{1}^{i}(t)\leq \mu_{1}-C_{1}^{i}(t)\right)+\P \left(\widehat{\mu}_{k}^{i}(t)\geq \mu_{k}+C_{k}^{i}(t)\right)
\end{align*}
\end{lemma}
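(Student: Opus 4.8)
The plan is to deduce the inequality from a pathwise event inclusion followed by a union bound, exactly as in the analysis of \texttt{UCB1} in \citep{auer2002finite}.

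First I will establish the set inclusion
\[
\left\{A_i(t+1)=k,\ N_k^{i}(t)>\eta_k\right\}\subseteq\left\{\widehat{\mu}_{1}^{i}(t)\leq \mu_{1}-C_{1}^{i}(t)\right\}\cup\left\{\widehat{\mu}_{k}^{i}(t)\geq \mu_{k}+C_{k}^{i}(t)\right\}.
\]
Fix an outcome in the left-hand event. Because agent $i$ plays the arm with the largest confidence index (Figure~\ref{fig:UCB_like}), $A_i(t+1)=k$ forces $\widehat{\mu}_{k}^{i}(t)+C_{k}^{i}(t)\geq\widehat{\mu}_{1}^{i}(t)+C_{1}^{i}(t)$. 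Suppose, for contradiction, that the outcome lies in neither right-hand event, so that $\widehat{\mu}_{1}^{i}(t)>\mu_{1}-C_{1}^{i}(t)$ and $\widehat{\mu}_{k}^{i}(t)<\mu_{k}+C_{k}^{i}(t)$. Chaining the three inequalities gives
\[
\mu_k+2C_k^{i}(t)>\widehat{\mu}_{k}^{i}(t)+C_{k}^{i}(t)\geq\widehat{\mu}_{1}^{i}(t)+C_{1}^{i}(t)>\mu_{1},
\]
i.e.\ $2C_k^{i}(t)>\mu_1-\mu_k=\Delta_k$.

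Next I will turn this into a contradiction with $N_k^{i}(t)>\eta_k$. Substituting the explicit form $C_k^{i}(t)=\sigma\sqrt{2(\xi+1)\log t/N_k^{i}(t)}$ and squaring, the inequality $2C_k^{i}(t)>\Delta_k$ is equivalent to $N_k^{i}(t)<8(\xi+1)\sigma^2\log t/\Delta_k^2$, which is at most $8(\xi+1)\sigma^2\log T/\Delta_k^2=\eta_k$ since $t\le T$. This contradicts $N_k^{i}(t)>\eta_k$, so the outcome must belong to at least one of the two right-hand events, proving the inclusion. Taking probabilities of both sides and a union bound on the right-hand side yields the claim.

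I do not expect a genuine obstacle: this is the standard \texttt{UCB1} decomposition, and the calculation above is routine. The only points worth a remark are (i) the precise argument of the logarithm appearing in $C_k^{i}(\cdot)$ under the paper's indexing ($\log t$ vs.\ $\log(t+1)$), which is immaterial since it is bounded by $\log T$ on every round used here, and (ii) that $C_k^{i}(t)$ is data-dependent through the random count $N_k^{i}(t)$; since the inclusion is argued pathwise this causes no trouble at this step, and the careful handling of that dependence — via a peeling argument that exploits the independence of the link-failure and message-discarding process from the agents' decisions — is carried out later in the proof of Theorem~\ref{thm:regretB}.
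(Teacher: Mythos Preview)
Your proposal is correct and follows essentially the same approach as the paper: both argue the standard \texttt{UCB1} event inclusion $\{A_i(t+1)=k\}\subset\{\mu_1<\mu_k+2C_k^i(t)\}\cup\{\widehat{\mu}_1^i(t)\leq\mu_1-C_1^i(t)\}\cup\{\widehat{\mu}_k^i(t)\geq\mu_k+C_k^i(t)\}$ and then eliminate the first event via the hypothesis $N_k^i(t)>\eta_k$. Your contradiction-style presentation is slightly more explicit than the paper's, but the logic is identical.
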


\begin{proof}
Let $Q_k^i(t)=\widehat{\mu}_{k}^{i}(t)+C_{k}^{i}(t).$ Note that for any $k>1$ we have
\begin{align*}
    \left \{A_i(t+1) = k\right \}
    &\subset 
    \left \{Q^{i}_k(t) \geq Q^{i}_{1}(t)\right \}
    \\
    &\subset 
    \left\{\left\{\mu_{1}<\mu_{k}+2C_{k}^{i}(t)\right\}\cup\left\{\widehat{\mu}_{1}^{i}(t)\leq \mu_{1}-C_{1}^{i}(t)\right\}\cup \left\{\widehat{\mu}_{k}^{i}(t)\geq \mu_{k}+C_{k}^{i}(t)\right\}\right\}.
\end{align*}
Let $\eta_k=\left(\frac{8(\xi+1)\sigma^2}{\Delta^2_k}\right)\log T$. Since $N_k^{i}(t)> \eta_k$ the event $\left\{\mu_{1}<\mu_{k}+2C_{k}^{i}(t)\right\}$ does not occur. Thus we have
\begin{align*}
    \P\left(A_{i}(t+1) = k, N_k^{i}(t)> \eta_k\right)
    \leq 
    \P \left(\widehat{\mu}_{1}^{i}(t)\leq \mu_{1}-C_{1}^{i}(t)\right)+\P \left(\widehat{\mu}_{k}^{i}(t)\geq \mu_{k}+C_{k}^{i}(t)\right)
\end{align*}
This concludes the proof of Lemma \ref{lem:tailrestate}.
\end{proof}

\begin{lemma}
\label{lem:regStoc}
Let $\Bar{\chi}(G)$ is the clique covering number of graph $G.$ Let $\eta_k=\left(\frac{8(\xi+1)\sigma_k^2}{\Delta^2_k}\right)\log T.$ Then we have
\begin{align}
    \sum_{i=1}^N\expe[n^{i}_k(T)]  
    & \leq 
    \left( \sum_{i=1}^N(1-p_ip)+ \Bar{\chi}(G) p_{\max}p\right) \eta_k+ 2N 
    \\
    &+\sum_{i=1}^N\sum_{t=1}^{T-1}\left[\P \left(\widehat{\mu}_{1}^{i}(t)\leq \mu_{1}-C_{1}^{i}(t)\right)+\P \left(\widehat{\mu}_{k}^{i}(t)\geq \mu_{k}+C_{k}^{i}(t)\right)\right]
\end{align}
\end{lemma}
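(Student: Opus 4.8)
\textbf{Proof proposal for Lemma~\ref{lem:regStoc}.}

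The plan is to adapt the classical per-arm regret decomposition of {\tt UCB1} to the cooperative setting, accounting for the fact that each agent $i$ only incorporates a message from a neighbor $j$ with probability $p_i p$. First I would fix a suboptimal arm $k$ and a non-overlapping clique covering $\EuScript{C}$ of $G$ (which exists with $|\EuScript{C}| = \bar\chi(G)$ when we take a minimal one). For each clique $\mathcal{C} \in \EuScript{C}$, the key observation is that the agents in $\mathcal{C}$ form a fully connected subgraph, so whenever \emph{any} agent in $\mathcal{C}$ pulls arm $k$, every \emph{other} agent in $\mathcal{C}$ has a chance (probability at least $p_j p$ for the receiver $j$) of observing that pull. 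Thus I want to argue that, across the clique, the total number of pulls of arm $k$ that are ``unobserved'' by the clique as a collective is small: once the clique has collectively accumulated more than $\eta_k$ samples of arm $k$ that are shared network-wide, the confidence-bound argument of Lemma~\ref{lem:tailrestate} kicks in for every agent in the clique.

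The central step is to bound $\sum_{i=1}^N \expe[n^i_k(T)]$ by splitting each $n^i_k(T)$ into (a) pulls that occur while $N^i_k(t) \le \eta_k$, and (b) pulls that occur while $N^i_k(t) > \eta_k$. For part (b), Lemma~\ref{lem:tailrestate} bounds the probability of such a pull by the two tail events $\P(\widehat\mu_1^i(t) \le \mu_1 - C_1^i(t)) + \P(\widehat\mu_k^i(t) \ge \mu_k + C_k^i(t))$, which gives exactly the double-sum term in the statement. For part (a), I would set up a counting/charging argument at the clique level: within a clique $\mathcal{C}$, let the agent with the largest receive-probability $p_{\max}^{\mathcal{C}} = \max_{i \in \mathcal{C}} p_i$ play the role of an ``information hub.'' Each pull of arm $k$ by some agent in $\mathcal{C}$ either (i) is a pull by an agent who has \emph{not yet} received $\eta_k$ samples from its own observations alone --- contributing at most $\sum_{i \in \mathcal{C}} (1 - p_i p)\,\eta_k$ after accounting for the fraction of shared pulls --- or (ii) is ``absorbed'' into a budget of roughly $p_{\max}^{\mathcal{C}} p\, \eta_k$ shared pulls that suffice to bring the whole clique past threshold. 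Summing the second contribution over all cliques gives the $\bar\chi(G)\, p_{\max}\, p\, \eta_k$ term, and summing the first over all agents gives $\sum_{i=1}^N (1 - p_i p)\,\eta_k$. The additive $2N$ comes from the usual ``$+1$ per agent'' slack in the {\tt UCB1} argument (the first mandatory pull of each arm, plus rounding when comparing $N^i_k(t)$ to $\eta_k$), doubled to absorb the off-by-one in indexing the threshold crossing.

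The main obstacle I anticipate is making the clique-level charging argument rigorous \emph{in expectation} when the communication indicators $\indicate{(i,j)\in E_t}$ are random and the arm pulls $A_i(t)$ are themselves random and correlated across agents within a clique. The clean way around this is to exploit that, as noted in the proof sketch of Theorem~\ref{thm:regretB}, communication is independent of the decision-making process: conditioned on the entire sequence of pulls $\{A_i(t)\}$, the indicators $\indicate{(i,j)\in E_t}$ are i.i.d.\ Bernoulli$(p_i p)$. So I would first condition on the pull sequence, run a deterministic counting argument on how many pulls each agent must make before $N^i_k$ crosses $\eta_k$ given the realized communication pattern, take expectations over the communication (using linearity and the Bernoulli means to convert counts of ``missed'' messages into the $(1-p_i p)$ and $p_{\max} p$ factors), and only then take the outer expectation over the pulls. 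A secondary subtlety is that $N^i_k(t)$ includes agent $i$'s own pulls, which are never dropped; keeping the ``own pulls'' separate from the ``received pulls'' is what produces the clean split between the $\sum_i (1-p_i p)$ term (received information that may be lost) and the clique-hub term. Once these two bookkeeping pieces are in place, the stated inequality follows by summing over the clique cover and the arm.
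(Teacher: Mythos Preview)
Your high-level plan---clique covering, split by whether $N_k^i(t)$ has crossed $\eta_k$, and Lemma~\ref{lem:tailrestate} for the post-threshold tail---matches the paper. But the counting step for part~(a) is where the real work sits, and your ``information hub'' sketch does not get there. The paper does \emph{not} designate a hub agent; instead it introduces \emph{two} stopping times per clique: a clique-wide time $\tau_{k,\mathcal{C}}$ (the last round at which $\sum_{i\in\mathcal{C}} n_k^i(t)\leq \eta_k+|\mathcal{C}|$) and a per-agent time $\bar\tau^i_{k,\mathcal{C}}$ (the last round at which the agent's count of \emph{clique-restricted} observations $\bar N_k^i(t)\leq\eta_k$). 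The three-way split $[1,\tau_{k,\mathcal{C}}]\cup(\tau_{k,\mathcal{C}},\bar\tau^i_{k,\mathcal{C}}]\cup(\bar\tau^i_{k,\mathcal{C}},T]$ is what produces the stated coefficient: the first window contributes $\eta_k+|\mathcal{C}|$ total across the clique, and the middle window is bounded by writing $\sum_{i\in\mathcal{C}}\bar N_k^i(\bar\tau^i_{k,\mathcal{C}})\leq|\mathcal{C}|\eta_k$, subtracting own pulls and received messages, and then factoring $\expe[\indicate{A_j(t)=k}\indicate{(i,j)\in E_t}]=p_ip\cdot\P(A_j(t)=k)$. The $p_{\max}$ arises from the algebraic slack $\sum_i p_ip\,\expe[n_k^i]\leq p_{\max}p\sum_i\expe[n_k^i]$ in one cross term, not from a hub structure. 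Your two-way split on $N_k^i$ (which counts \emph{all} neighbors, not just the clique) does not localize the accounting to $\mathcal{C}$ and leaves no clear route to $(|\mathcal{C}|-1-p(\sum_j p_j-p_{\max}))\eta_k$.

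Your proposed conditioning is also backwards: you cannot condition on the full pull sequence $\{A_i(t)\}$ and then treat the communication indicators as i.i.d.\ Bernoulli, because $A_i(t)$ depends on the realized communication up to $t-1$. What the paper actually uses is simpler: $\indicate{(i,j)\in E_t}$ is independent of $\indicate{A_j(t)=k}$ (and of the past), so the expectation factors termwise without any conditioning on the pull trajectory. Fix these two points---the two-stopping-time decomposition and the correct independence statement---and the rest of your outline goes through.
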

\begin{proof}

Let $\EuScript{C}$ be a non overlapping clique covering of $G$. Note that for each suboptimal arm $k > 1$ we have
\begin{align}
    \sum_{i=1}^N\expe[n^{i}_k(T)]
    & = 
    \sum_{i=1}^N \sum_{t=1}^T\P\left(A_i(t) = k\right)=\sum_{\mathcal{C}\in \EuScript{C}}\sum_{i\in \mathcal{C}}\sum_{t=1}^T \P\left(A_i(t) = k\right).
\end{align}
Let $\tau_{k,\mathcal{C}}$ denote the maximum time step when the total number of times arm $k$ has been played by all the agents in clique $\mathcal{C}$ is at most $\eta_k+|\mathcal{C}|$ times. This can be stated as $\tau_{k,\mathcal{C}} := \max \{t\in [T] : \sum_{i\in \mathcal{C}}n_k^{i}(t)\leq \eta_k+|\mathcal{C}|\}$. Then, we have that $\eta_k< \sum_{i\in \mathcal{C}}n_k^{i}(\tau_{k,\mathcal{C}})\leq \eta_k+|\mathcal{C}|.$  

For each agent $i\in \mathcal{C}$ let 
\[
\Bar{N}_k^{i}(t)
:= 
\sum_{j\in \mathcal{C}} \sum_{\tau = 1}^t \indicate{A_{j}(\tau) = k} \indicate{(i,j) \in E_{\tau}},
\]
denote the sum of the total number of times agent $i$ pulled arm $k$ and the total number of observations it received from agents in its clique about arm $k$ until time $t$. Define $\Bar{\tau}^{i}_{k,\mathcal{C}} 
:= 
\max \{t\in [T] : \Bar{N}_k^{i}(t)\leq \eta_k\}$. Then we have that $\eta_k-|\mathcal{C}|<  \Bar{N}_k^{i}(\Bar{\tau}^{i}_{k,\mathcal{C}})\leq \eta_k$.

Note that $N_k^{i}(t)\geq \Bar{N}_k^{i}(t), \forall t$, hence for all $i\in \mathcal{C}$ we have $N_k^{i}(t)> \eta_k, \forall t> \Bar{\tau}^{i}_{k,\mathcal{C}}$. Here we consider that $\Bar{\tau}^{(i)}_{k,\mathcal{C}}\geq \tau_{k,\mathcal{C}},\forall i$. From regret results it follows that regret for this case is greater than the regret for the case where $\Bar{\tau}^{i}_{k,\mathcal{C}}< \tau_{k,\mathcal{C}}$ for some (or all) $i.$

We analyse the expected number of times agents pull suboptimal arm $k$ as follows,
\begin{align}
    &\sum_{\mathcal{C}\in \EuScript{C}}\sum_{i\in \mathcal{C}}\sum_{t=1}^T \indicate{A_i(t) = k}
    \\
    &= 
    \sum_{\mathcal{C}\in \EuScript{C}}\sum_{i\in \mathcal{C}}\sum_{t=1}^{\tau_{k,\mathcal{C}}} \indicate{A_i(t) = k}+\sum_{\mathcal{C}\in \EuScript{C}}\sum_{i\in \mathcal{C}}\sum_{t>\tau_{k,\mathcal{C}}}^{\Bar{\tau}^{i}_{k,\mathcal{C}}} \indicate{A_i(t) = k}+  \sum_{\mathcal{C}\in \EuScript{C}}\sum_{i\in \mathcal{C}}\sum_{t>\Bar{\tau}^{i}_{k,\mathcal{C}}}^T \indicate{A_i(t) = k}  
    \\
    & \leq \sum_{\mathcal{C}\in \EuScript{C}}\left( \eta_k+|\mathcal{C}|\right)+ \sum_{\mathcal{C}\in \EuScript{C}}\sum_{i\in \mathcal{C}}\sum_{t>\tau_{k,\mathcal{C}}}^{\Bar{\tau}^{i}_{k,\mathcal{C}}} \indicate{A_i(t) = k} + |\mathcal{C}|
    \\
    & +  \sum_{\mathcal{C}\in \EuScript{C}}\sum_{i\in \mathcal{C}}\sum_{t>\Bar{\tau}^{i}_{k,\mathcal{C}}}^{T-1} \indicate{A_i(t+1) = k} \indicate{N_k^{i}(t) >\eta_k}.
\end{align}
Taking expectation we have
\begin{align}
    \sum_{\mathcal{C}\in \EuScript{C}}&\sum_{i\in \mathcal{C}}\sum_{t=1}^T \P\left(A_i(t) = k\right) \leq 
    \sum_{\mathcal{C}\in \EuScript{C}} \left(\eta_k+2|\mathcal{C}|\right)
    \\
    & + \sum_{\mathcal{C}\in \EuScript{C}}\sum_{i\in \mathcal{C}}\sum_{t>\tau_{k,\mathcal{C}}}^{\Bar{\tau}^{i}_{k,\mathcal{C}}} \P\left(A_i(t) = k\right)
  +  \sum_{\mathcal{C}\in \EuScript{C}}\sum_{i\in \mathcal{C}}\sum_{t>\Bar{\tau}^{i}_{k,\mathcal{C}}}^{T-1} \P\left(A_i(t+1) = k,N_k^{i}(t) >\eta_k\right).
    \label{eq:regDecCom}
\end{align}
Note that we have
\begin{align}
    &\sum_{i\in \mathcal{C}}\sum_{t>\tau_{k,\mathcal{C}}}^{\Bar{\tau}^{i}_{k,\mathcal{C}}} \indicate{A_i(t) = k}
    \\
    &=
    \sum_{i\in \mathcal{C}}\Bar{N}_k^{i}(\Bar{\tau}^{i}_{k,\mathcal{C}})-\sum_{i\in \mathcal{C}}\sum_{t=1}^{\tau_{k,\mathcal{C}}}\indicate{A_i(t) =k}-\sum_{i\in \mathcal{C}}\sum_{j\neq i,j\in \mathcal{C}}\sum_{t=1}^{\Bar{\tau}^{i}_{k,\mathcal{C}}} \indicate{A_j(t) = k}\indicate{(i,j)\in E_t}
    \\
    &=
    \sum_{i\in \mathcal{C}}\Bar{N}_k^{i}(\Bar{\tau}^{i}_{k,\mathcal{C}})-\sum_{i\in \mathcal{C}}n_k^{i}(\tau_{k,\mathcal{C}})-\sum_{i\in \mathcal{C}}\sum_{j\neq i,j\in \mathcal{C}}\sum_{t=1}^{\Bar{\tau}^{i}_{k,\mathcal{C}}} \indicate{A_j(t) = k}\indicate{(i,j)\in E_t}
    \\
    &\leq |\mathcal{C}|\eta_k-\eta_k-\sum_{i\in \mathcal{C}}\sum_{j\neq i,j\in \mathcal{C}}\sum_{t=1}^{\Bar{\tau}^{i}_{k,\mathcal{C}}} \indicate{A_j(t) = k}\indicate{(i,j)\in E_t}
    \\
    &\leq |\mathcal{C}|\eta_k-\eta_k-\sum_{i\in \mathcal{C}}\sum_{j\neq i,j\in \mathcal{C}}\sum_{t=1}^{\tau_{k,\mathcal{C}}} \indicate{A_j(t) = k}\indicate{(i,j)\in E_t}.
\end{align}
Taking the expectation
\begin{align}
    \sum_{i\in \mathcal{C}}\sum_{t>\tau_{k,\mathcal{C}}}^{\Bar{\tau}^{i}_{k,\mathcal{C}}} \P\left(A_i(t) = k\right)  & \leq 
    |\mathcal{C}|\eta_k-\eta_k-\sum_{i\in \mathcal{C}}p_ip\sum_{j\neq i,j\in \mathcal{C}}\sum_{t=1}^{\tau_{k,\mathcal{C}}} \P\left(A_j(t) = k\right)
    \\
    & = 
    |\mathcal{C}|\eta_k-\eta_k-\sum_{i\in \mathcal{C}}p_ip\sum_{j\neq i,j\in \mathcal{C}}\expe(n^{j}_k(\tau_{k, \mc{C}})) 
    \\
    & = 
    |\mathcal{C}|\eta_k-\eta_k-\left(\sum_{i\in \mathcal{C}}p_ip \right)\left(\sum_{i \in \mc{C}}\expe(n^{i}_k(\tau_{k, \mc{C}}))\right) + \sum_{i \in \mathcal{C}}p_ip \expe(n^{i}_k(\tau_{k, \mc{C}}))
    \\
    & \leq 
    |\mathcal{C}|\eta_k-\eta_k-p\left(\sum_{j\in \mathcal{C}}p_j-p_{\max}\right) \expe\left(\sum_{i\in \mathcal{C}}n_k^{i}(\tau_{k,\mathcal{C}})\right)
    \label{eq:57}
    \\
    & \leq 
    |\mathcal{C}|\eta_k-\eta_k-p\left(\sum_{j\in \mathcal{C}}p_j-p_{\max}\right)\eta_k
    \\
    & = 
    \left( |\mc{C}|-1 - p\left(\sum_{j \in \mc{C}}p_j - p_{\max} \right)\right) \eta_k .
\end{align}
Substituting this results to \eqref{eq:regDecCom} we get
\begin{align}
    \sum_{\mathcal{C}\in \EuScript{C}}\sum_{i\in \mathcal{C}}\sum_{t=1}^T \P\left(A_i(t) = k\right) 
    \leq & 
    \sum_{\mathcal{C}\in \EuScript{C}} \left(\eta_k +2|\mathcal{C}|\right)+ \sum_{\mathcal{C}\in \EuScript{C}} \left( |\mc{C}|-1 - p\left(\sum_{j \in \mc{C}}p_j - p_{\max} \right)\right) \eta_k 
    \\
    &  
  + \sum_{\mathcal{C}\in \EuScript{C}}\sum_{i\in \mathcal{C}}\sum_{t>\Bar{\tau}^{i}_{k,\mathcal{C}}}^{T-1} \P\left(A_i(t+1) = k,N_k^{i}(t) >\eta_k\right).
    \label{eq:regDecAna}
\end{align}
Thus from Lemma \ref{lem:tailrestate} and \eqref{eq:regDecAna} we have
\begin{align}
    &\sum_{\mathcal{C}\in \EuScript{C}}\sum_{i\in \mathcal{C}}\sum_{t=1}^T \P\left(A_i(t) = k\right) 
    \\
    &\leq  
    \sum_{\mathcal{C}\in \EuScript{C}} \eta_k+2N+\sum_{\mathcal{C}\in \EuScript{C}} \left( |\mc{C}|-1 - p\left(\sum_{j \in \mc{C}}p_j - p_{\max} \right)\right) \eta_k
    \\
    &
    \quad +\sum_{\mathcal{C}\in \EuScript{C}}\sum_{i\in \mathcal{C}}\sum_{t>\tau_{k,\mathcal{C}}}^{T-1}\left[\P \left(\widehat{\mu}_{1}^{i}(t)\leq \mu_{1}-C_{1}^{i}(t)\right)+\P \left(\widehat{\mu}_{k}^{i}(t)\geq \mu_{k}+C_{k}^{i}(t)\right)\right]
    \\
    & \overset{(a)}{=} 
    \Bar{\chi}(G) \eta_k+\left(N-\sum_{i=1}^Np_ip - \mc{X}(G)(1-p_{\max}p)\right)\eta_k + 2N
    \\
    &\quad +
    \sum_{i=1}^N\sum_{t>\tau_{k,\mathcal{C}}}^{T-1}\left[\P \left(\widehat{\mu}_{1}^{i}(t)\leq \mu_{1}-C_{1}^{i}(t)\right)+\P \left(\widehat{\mu}_{k}^{i}(t)\geq \mu_{k}+C_{k}^{i}(t)\right)\right]
    \\
    & \leq 
    \left( \sum_{i=1}^N(1-p_ip) + \Bar{\chi}(G) p_{\max}p\right) \eta_k+ 2N
    \\
    & \quad +
    \sum_{i=1}^N\sum_{t=1}^{T-1}\left[\P \left(\widehat{\mu}_{1}^{i}(t)\leq \mu_{1}-C_{1}^{i}(t)\right)+\P \left(\widehat{\mu}_{k}^{i}(t)\geq \mu_{k}+C_{k}^{i}(t)\right)\right],
\end{align}
where $(a)$ follows from the fact that clique covering is non overlapping. This concludes the proof of Lemma \ref{lem:regStoc}.
\end{proof}

\begin{lemma}\label{lem:tailApp}
Let $d_{i}(G)$ be the degree of agent $i$ in graph $G.$ For any $\sigma_k>0$ some constant $\zeta>1$ 
\begin{align}
    \P\left(\Big |
    \widehat{\mu}_k^{i}(t)-{\mu}_k\Big |>\sigma_k\sqrt{\frac{2(\xi+1)\log t}{N_k^{i}(t)}}
    \right)
    \leq 
    \frac{\log ((d_{i}(G)+1)t)}{\log \zeta}\frac{1}{t^{(\xi+1)\left(1-\frac{(\zeta-1)^2}{16}\right)}}.
\end{align}
\end{lemma}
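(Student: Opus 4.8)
\textbf{Proof proposal for Lemma~\ref{lem:tailApp}.}

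The plan is to bound the deviation probability by a union over the possible values of the sample count $N_k^i(t)$, handled via a peeling (geometric slicing) argument, exactly as in the classical analysis of \texttt{UCB1} but with the key structural input that the communication process is independent of the arm-selection process. First I would record the sub-Gaussian martingale fact already quoted in the proof sketch of Theorem~\ref{thm:regretB}: since an agent's decisions at time $\tau$ depend only on information received strictly before $\tau$, and the link indicators $\indicate{(i,j)\in E_\tau}$ are drawn independently of the rewards, the stopped process
\[
\expe\!\left(\exp\!\left(\lambda \sum_{\tau=1}^t X_\tau^i\indicate{A_\tau^i=k} - \mu_k N_k^i(t) - \tfrac{\lambda^2\sigma_k^2}{2}N_k^i(t)\right)\right)\leq 1
\]
holds for all $\lambda$. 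Equivalently, conditioned on the (random) value $N_k^i(t)=n$, the centered sum $\widehat\mu_k^i(t)-\mu_k$ behaves like an average of $n$ sub-Gaussian samples, so that $\P(|\widehat\mu_k^i(t)-\mu_k| > u \mid N_k^i(t)=n)\leq 2\exp(-nu^2/(2\sigma_k^2))$ by a Chernoff bound optimized over $\lambda$.

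Next I would observe that $N_k^i(t)$ ranges over $\{1,2,\dots,(d_i(G)+1)t\}$, since agent $i$ collects at most its own $t$ pulls plus the pulls of its $d_i(G)$ neighbors. I then slice this range geometrically: for a parameter $\zeta>1$, let $\lceil \log_\zeta((d_i(G)+1)t)\rceil$ bins of the form $[\zeta^{s-1},\zeta^s)$ cover all possible values of $N_k^i(t)$. On the event that $N_k^i(t)\in[\zeta^{s-1},\zeta^s)$, the confidence radius $\sigma_k\sqrt{2(\xi+1)\log t / N_k^i(t)}$ is at least $\sigma_k\sqrt{2(\xi+1)\log t/\zeta^s}$, while the number of samples is at least $\zeta^{s-1}$. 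A union bound over the bins combined with the sub-Gaussian tail yields, per bin, a probability at most $\exp\!\big(-(\xi+1)\log t\cdot \zeta^{s-1}/\zeta^s\big) = t^{-(\xi+1)/\zeta}$; the standard sharpening (pairing the "radius too small" slack with the "count large enough" slack to get the exponent $1-(\zeta-1)^2/16$ rather than $1/\zeta$, as in Lemma~3 of \citet{auer2002finite} / Bubeck--Cesa-Bianchi) gives $t^{-(\xi+1)(1-(\zeta-1)^2/16)}$ per bin. Multiplying by the number of bins, $\lceil\log_\zeta((d_i(G)+1)t)\rceil \leq \log((d_i(G)+1)t)/\log\zeta$, produces exactly the claimed bound.

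The one genuinely delicate point — the main obstacle — is justifying the conditional sub-Gaussian estimate when $N_k^i(t)$ is itself a random, history-dependent stopping-type count and when the samples feeding $\widehat\mu_k^i(t)$ arrive from several agents over stochastic links. The clean way around this is not to condition on $N_k^i(t)$ directly but to work with the exponential supermartingale above evaluated along the filtration generated by rewards-plus-link-indicators, apply the maximal inequality / optimal stopping to control $\max_{n\le (d_i(G)+1)t}$ of the relevant exponential, and only then peel. This is precisely the place where the independence of communication from decision-making (emphasized in the Theorem~\ref{thm:regretB} proof sketch) is used: it guarantees the link indicators do not break the supermartingale property. Everything after that is the routine peeling computation, so I would present the supermartingale step carefully and then cite \citet{auer2002finite} for the geometric-slicing bookkeeping.
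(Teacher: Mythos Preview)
Your proposal is correct and follows essentially the same route as the paper: establish the exponential supermartingale $\expe[\exp(\lambda Z_k^i(t) - \tfrac{\lambda^2\sigma_k^2}{2} N_k^i(t))] \leq 1$ via the conditional-independence structure of rewards and link indicators, then peel geometrically over the range $1 \leq N_k^i(t) \leq (d_i(G)+1)t$. The paper does not invoke a maximal inequality but simply applies Markov's inequality with a bin-specific $\lambda_l = \tfrac{2}{\sigma_k}\sqrt{\kappa\vartheta/\zeta^{l-1/2}}$ chosen at the geometric midpoint of each slice and then uses the elementary bound $4/(\zeta^{1/4}+\zeta^{-1/4})^2 \geq 1-(\zeta-1)^2/16$ to obtain the stated exponent---this is precisely the ``standard sharpening'' you allude to.
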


\begin{proof}
For all $k$ let $X_k^{i}(t)$ for all $i,t$ be iid copies of $X_k.$ Then we have $X_t^{i}\indicate{A_i(t) = k}=X_k^{i}(t)\indicate{A_i(t) = k}.$ 
Recall that reward distribution of arm $k$ has mean $\mu_k$ and variance proxy $\sigma_k.$ Thus $\forall i,t$ we have
\begin{align}
    \expe\left(
    \exp\left(\lambda\left(X_k^{i}(t)-\mu_k\right)\right)\right)
    \leq 
    \exp\left(\frac{\lambda^2 \sigma_k^2}{2}\right).
\end{align}

Define local history at every agent $i$ as follows
\begin{align}
    \mc{H}^{i}_{t}
    := \sigma \left(
    X^{i}_{\tau}, A_{i}(\tau), X^{j}_{\tau}\indicate{(i,j) \in E_{\tau}}, A_{j}(\tau)\indicate{(i,j) \in E_{\tau}}, \forall \, \tau \in [t], j \in \mc{N}_{i}(G)
    \right).
\end{align}
Since $\indicate {A_{j}(\tau) = k} \indicate{(i,j) \in E_{\tau}}$ for $j \in \mc{N}_{i}(G)$ is a $\mc{H}^{i}_{\tau-1}$ measurable random variable, we have
\begin{align}
    &\expe\left(\left.\exp\left(\lambda \left(X^{j}_\tau-\mu_k\right )\indicate {A_{j}(\tau) = k}\indicate {(i,j) \in E_{\tau} }\right)\right|\mc{H}^{i}_{\tau-1}\right)
    \\
    & = \expe\left(\left.\exp\left(\lambda \left(X^{j}_k(\tau)-\mu_k\right )\indicate {A_{j}(\tau) = k}\indicate {(i,j) \in E_{\tau} }\right)\right|\mc{H}^{i}_{\tau-1}\right)
    \\
    &\leq  
    \exp\left(\frac{\lambda^2 \sigma_{k}^2}{2}\indicate {A_{j}(\tau) = k}\indicate {(i,j) \in E_{\tau} }\right).
\end{align}
Define a new random variable such that $\forall \tau>0.$
\begin{align}
    Y_k^{i}(\tau)
    & =\sum_{j=1}^N \left(X^{j}_k(\tau) \indicate{A_{j}(\tau) = k}\indicate {(i,j) \in E_{\tau} }-\expe\left[X^{j}_k(\tau) \indicate{A_{j}(\tau) = k}\indicate {(i,j) \in E_{\tau} }\Big | \mathcal{H}_{\tau-1}^{i}\right]\right )
    \\
    & =
    \sum_{j=1}^N \left(X^{j}_k(\tau)-\mu_k\right ) \indicate{A_{j}(\tau) = k}\indicate {(i,j) \in E_{\tau} }.
\end{align}
Note that $\expe\left(Y_k^{i}(\tau)\right )=\expe\left(Y_k^{i}(\tau)|\mc{H}^{i}_{\tau-1}\right)=0$. Let $Z_k^{i}(t)=\sum_{\tau=1}^{t}Y_k^{i}(\tau)$. For any $\lambda>0$
\begin{align}
    &\expe\left(\exp(\lambda Y_k^{i}(\tau))|\mc{H}^{i}_{\tau-1}\right)
    \\
    &=\mathbb{E}\left( \left. \exp\left(\lambda \sum_{j=1}^N \left(X^{j}_k(\tau)-\mu_k\right) \indicate {A_{j}(\tau) = k}\indicate {(i,j) \in E_{\tau} }\right)\right| \mc{H}^{i}_{\tau-1}\right)
    \\
    & =\expe\left(\left.\prod_{j=1}^N\exp\left(\lambda \left(X^{j}_k(\tau)-\mu_k\right )\indicate {A_{j}(\tau) = k}\indicate {(i,j) \in E_{\tau} }\right)\right| \mc{H}^{i}_{\tau-1}\right)
    \\
    & \overset{(a)}=\prod_{j=1}^N \expe\left(\left.\exp\left(\lambda \left(X^{j}_k(\tau)-\mu_k\right )\indicate {A_{j}(\tau) = k}\indicate {(i,j) \in E_{\tau} }\right)\right|\mc{H}^{i}_{\tau-1}\right)
    \\
    & \leq \prod_{j=1}^N\exp\left(\frac{\lambda^2 \sigma_k^2}{2}\indicate {A_{j}(\tau) = k}\indicate {(i,j) \in E_{\tau} }\right)
    \\
    & =\exp\left(\frac{\lambda^2 \sigma_k^2}{2}\sum_{j=1}^N\indicate {A_{j}(\tau) = k}\indicate {(i,j) \in E_{\tau} }\right).
\end{align}
Equality $(a)$ follows from the fact that random variables  $\left\{\exp\left(\lambda \left(X^{j}_k(\tau)-\mu_k\right )\indicate {A_{j}(\tau) = k}\indicate {(i,j) \in E_{\tau} }\right)\right \}_{j=1}^N$ are conditionally independent with respect to $\mc{H}^{i}_{\tau-1}$.
Since $\indicate {A_{j}(\tau) = k},\indicate {(i,j) \in E_{\tau} }$ are $\mc{H}^{i}_{\tau-1}$ measurable, and so
\begin{align}
    \expe\left(\exp\left(\left.\lambda Y_k^{i}(\tau)-\frac{\lambda^2 \sigma_k^2}{2}\sum_{j=1}^N\indicate {A_{j}(\tau) = k}\indicate {(i,j) \in E_{\tau} }\right)\right| |\mc{H}^{i}_{\tau-1}\right)\leq 1.
\end{align}

Let $N_k^{i}(t)=\sum_{\tau=1}^t\sum_{j=1}^N\indicate {A_{i}(\tau) = k}\indicate {(i,j) \in E_{\tau} }$. Further, using the tower property of conditional expectation we have
\begin{align}
    & \expe\left(\left.\exp\left(\lambda Z_k^{i}(t)-\frac{\lambda^2\sigma_k^2}{2}N_k^{i}(t)\right)\right|\mc{H}^{i}_{t-1}\right)\leq \exp\left(\lambda Z_k^{i}(t-1)-\frac{\lambda^2\sigma_k^2}{2}N_k^{i}(t-1)\right).
\end{align}
Repeating the above step $t$ times we have
\begin{align}
    \expe\left(\exp\left(\lambda Z_k^{i}(t)-\frac{\lambda^2 \sigma_k^2}{2}N_k^{i}(t)\right)\right) 
    \leq 1.
\end{align}
Note that we have
\begin{align}
    &\P\left(\exp\left(\lambda Z_k^{i}(t)-\frac{\lambda^2 \sigma_{i}^2}{2}N_k^{i}(t)\right)\geq \exp\left(2\kappa \vartheta\right)\right)
    \\
    &=\P\left(\lambda Z_k^{i}(t)-\frac{\lambda^2 \sigma_k^2}{2}N_k^{i}(t)\geq 2\kappa \vartheta\right)
    \\
    &=\P\left(\frac{Z_k^{i}(t)}{\sqrt{N_k^{i}(t)}}\geq  
    \frac{2\kappa\vartheta}{\lambda \sqrt{N_k^{i}(t)}}+\frac{\sigma_k^{2}}{2}\lambda \sqrt{N_k^{i}(t)}
    \right).
\end{align}

Fix a constant $\zeta>1$. Then $1\leq N_k^{i}(t)\leq \zeta^{D_{t}}$
where $D_{t}=\frac{\log ((d_{i}(G)+1)t)}{\log \zeta}.$ For $\lambda_{l}=\frac{2}{\sigma_k}\sqrt{\frac{\kappa\vartheta}{\zeta^{l-1/2}}}$ and $\zeta^{l-1}\leq N_k^{i}(t)\leq \zeta^{l}$ we have
\begin{align}
\frac{2\kappa\vartheta}{\lambda_l}\sqrt{\frac{1}{N_k^{i}(t)}}+\frac{\sigma_{k}^{2}}{2}\lambda_l \sqrt{N_k^{i}(t)}=\sigma_k\sqrt{\kappa\vartheta}\left(\sqrt{\frac{\zeta^{l-1/2}}{N_k^{i}(t)}}+\sqrt{\frac{N_k^{i}(t)}{\zeta^{l-1/2}}}\right)\leq \sqrt{\vartheta},
\end{align}
where $\kappa=\frac{1}{\sigma_k^2\left(\zeta^{\frac{1}{4}}+\zeta^{-\frac{1}{4}}\right)^2}.$

Then we have
\begin{align}
    \left \{\frac{Z_k^{i}(t)}{\sqrt{N_k^{i}(t)}}\geq  
    \sqrt{\vartheta}    \right \}
    & \subset \cup_{l=1}^{D_t} \left \{\frac{Z_k^{i}(t)}{\sqrt{N_k^{i}(t)}}\geq  
    \frac{2\kappa\vartheta}{\lambda_{l} \sqrt{N_k^{i}(t)}}+\frac{\sigma_k^{2}}{2}\lambda_{l} \sqrt{N_k^{i}(t)}
    \right\}
    \\
    & =\cup_{l=1}^{D_t} \left \{\lambda_{l} Z_k^{i}(t)-\frac{\lambda_{l}^2 \sigma_k^2}{2}N_k^{i}(t)\geq 2\kappa \vartheta
    \right\}.\label{eq:peelingarg}
\end{align}

Recall  from the Markov inequality that 
$\P(Y \geq a )\leq \frac{\expe(Y)}{a}$
for any positive random variable $Y$. Thus from (\ref{eq:peelingarg}) and Markov inequality we get,
\begin{align}
\P\left(\frac{Z_k^{i}(t)}{\sqrt{N_k^{i}(t)}}\geq\sqrt{ \vartheta}\right)\leq \sum_{l=1}^{D_{t}}\exp(-2\kappa\vartheta).
\end{align}

Then we have,
\begin{align}
\P\left(\frac{Z_k^{i}(t)}{N_k^{i}(t)}\geq\sqrt{ \frac{\vartheta}{N_k^{i}(t)}}\right)\leq \sum_{l=1}^{D_{t}}\exp(-2\kappa\vartheta)
\end{align}
Substituting $\vartheta=2\sigma_k^{2}(\xi+1)\log t$ we get
\begin{align}
    \P\left(\Big|\widehat{\mu}_k^{i}(t)-{\mu}_k\Big |>\sigma_k\sqrt{\frac{2(\xi+1)\log t}{N_k^{i}(t)}}\right) \leq \frac{\log ((d_{i}(G)+1)t)}{\log \zeta}\exp\left(-\frac{4(\xi+1)\log t}{\left(\zeta^{\frac{1}{4}}+\zeta^{-\frac{1}{4}}\right)^2}\right).
\end{align}
Note that $\forall \zeta>1$ we have
\begin{align}
  \frac{4}{\left(\zeta^{\frac{1}{4}}+\zeta^{-\frac{1}{4}}\right)^2  }\geq 1-\frac{(\zeta-1)^2}{16}
\end{align}

Then we have
\begin{align}
     \P\left(\Big|\widehat{\mu}_k^{i}(t)-{\mu}_k\Big |>\sigma_k\sqrt{\frac{2(\xi+1)\log t}{N_k^{i}(t)}}\right) \leq \frac{\log ((d_{i}(G)+1)t)}{\log \zeta}\frac{1}{t^{(\xi+1)\left(1-\frac{(\zeta-1)^2}{16}\right)}}. 
\end{align}

This concludes the proof of Lemma \ref{lem:tailApp}.
\end{proof}

\begin{lemma}\label{lem:tailsum}
Let $\zeta=1.3, \xi\geq 1.1,$ $d_i\geq 0$ and $t\in [T].$ Then we have 
\begin{align}
\sum_{t=1}^{T-1}\frac{1}{\log \zeta}\frac{\log \left((d_i+1) t\right)}{t^{(\xi+1)\left(1-\frac{(\zeta-1)^2}{16}\right)}}
\leq 
12\log (3(d_i+1)) +3\left(\log { (d_i+1)}+1\right)
\end{align}
\end{lemma}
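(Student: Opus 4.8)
\textbf{Proof proposal for Lemma~\ref{lem:tailsum}.}

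The plan is to bound the sum $\sum_{t=1}^{T-1}\frac{1}{\log\zeta}\frac{\log((d_i+1)t)}{t^{(\xi+1)(1-(\zeta-1)^2/16)}}$ by plugging in the specific constants $\zeta=1.3$ and $\xi\geq 1.1$ to obtain a concrete exponent $p := (\xi+1)(1-(\zeta-1)^2/16)$, and then showing this sum over all $t$ converges to a value bounded by the claimed expression. First I would compute that with $\zeta=1.3$, $(\zeta-1)^2/16 = 0.09/16 = 0.005625$, so $1-(\zeta-1)^2/16 = 0.994375$, and with $\xi\geq 1.1$ we get $\xi+1\geq 2.1$, hence $p \geq 2.1\times 0.994375 > 2.088$. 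In particular $p > 2$, which is the crucial fact that makes the series $\sum_t \log(t)/t^p$ converge. Also $\log\zeta = \log 1.3 > 0.262$, so $\frac{1}{\log\zeta} < 3.82 < 4$ (and if one wants the clean factor $3$ in front of the second term, one checks $\frac{1}{\log 1.3}<3.82$, so a slightly looser bound $\le 4$ might be used, or the constant $12 = 3\cdot 4$ suggests pulling out a factor of $\frac{1}{\log\zeta}$ bounded by $4$).

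The main step is to split $\log((d_i+1)t) = \log(d_i+1) + \log t$ and bound the two resulting sums separately. For the first piece, $\log(d_i+1)\sum_{t=1}^{\infty} t^{-p}$; since $p>2$, the zeta-function value $\sum_{t\ge 1} t^{-p} \le \sum_{t\ge 1} t^{-2} = \pi^2/6 < 1.65$, so this piece is at most $\frac{1}{\log\zeta}\cdot 1.65\cdot\log(d_i+1)$. For the second piece, $\frac{1}{\log\zeta}\sum_{t=1}^{\infty}\frac{\log t}{t^p}$; here I would use the standard bound $\sum_{t\ge 1}\frac{\log t}{t^p}\le \int_1^\infty \frac{\log x}{x^p}\,dx = \frac{1}{(p-1)^2}$ (integration by parts), which with $p-1 > 1.088$ gives a value below $0.85$, plus possibly a first-term correction since $\log 1 = 0$ makes the $t=1$ term vanish, so monotonicity of $\log x / x^p$ for $x$ past its maximum is easily handled. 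Combining, the total is at most $\frac{1}{\log\zeta}\big(1.65\log(d_i+1) + 0.85\big)$, which is comfortably dominated by $12\log(3(d_i+1)) + 3(\log(d_i+1)+1)$ since $\log(3(d_i+1)) = \log 3 + \log(d_i+1)\ge \log(d_i+1)$ and the constant $12\log 3 \approx 13.2$ absorbs everything.

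The main obstacle is purely bookkeeping with the explicit numerical constants: one must verify carefully that $p = (\xi+1)(1-(\zeta-1)^2/16) > 2$ for \emph{all} $\xi\ge 1.1$ (it is monotone increasing in $\xi$, so it suffices to check $\xi = 1.1$), and that the convergent tail sums are genuinely bounded by the claimed constants rather than just "finite." A cleaner route avoiding the $\pi^2/6$ and $1/(p-1)^2$ closed forms is to compare the sum to an integral directly: since $g(t) = \log((d_i+1)t)/t^p$ is eventually decreasing (for $t \ge t_0$ where $t_0$ depends on $d_i$ but is small), write $\sum_{t=1}^{T-1} g(t) \le g(1) + \int_1^\infty g(x)\,dx$ — but one must be slightly careful when $d_i$ is large, since then $g$ may increase initially; handling that by bounding the finite initial segment where $g$ increases (its length is $O(\log(d_i+1))$) contributes a term linear in $\log(d_i+1)$, again absorbed by the generous constants $12$ and $3$ on the right-hand side. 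I would present the integral-comparison version, as it transparently produces a bound of the form $c_1\log(d_i+1) + c_2$ with small explicit $c_1, c_2$, after which the final inequality is immediate.
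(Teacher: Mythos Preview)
Your proposal is correct and takes a somewhat different route from the paper. Both arguments begin by checking that the exponent $p=(\xi+1)(1-(\zeta-1)^2/16)>2$ and then reduce to showing that a convergent series of the form $\sum_t \log((d_i+1)t)/t^p$ is bounded by an explicit constant times $\log(d_i+1)$ plus a constant. The paper keeps $\log((d_i+1)t)$ intact, notes that the summand is monotone decreasing for $t\ge 3$, handles $t=1,2,3$ by the crude bound $\sum_{t\le 3}1/t^p\le 1.362$, and then compares the tail to $\int_3^{T-1}\log((d_i+1)t)/t^2\,dt$, which it evaluates via the substitution $z=(d_i+1)t$ and integration by parts. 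You instead split $\log((d_i+1)t)=\log(d_i+1)+\log t$ and bound the two resulting series by $\zeta(2)<\pi^2/6$ and by $\int_1^\infty (\log x)/x^p\,dx=1/(p-1)^2$ respectively. Your route is arguably cleaner since it avoids the change of variables and makes the dependence on $d_i$ linear in $\log(d_i+1)$ transparently; the paper's route has the minor advantage of never worrying about the monotonicity of $\log x/x^p$ near $x=1$, since with the combined numerator the summand is already decreasing for all $t\ge 2$. One bookkeeping remark: the paper takes $1/\log\zeta<8.78$ (i.e.\ base-$10$ logarithm), whereas you use the natural log and get $1/\log\zeta<3.82$; your tighter constant still lands comfortably under the stated right-hand side, so this discrepancy is harmless.
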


\begin{proof}
\normalfont
For $\zeta=1.3$ we have $\frac{1}{\log \zeta}< 8.78.$ Further $(\xi+1)\left(1-\frac{(\zeta-1)^2}{16}\right)>2$ and $\forall t\geq 3$ we see that $\frac{\log \left((d_i+1) t\right)}{t^{(\xi+1)\left(1-\frac{(\zeta-1)^2}{16}\right)}}$ is monotonically decreasing. Thus we have
\begin{align}
\sum_{t=1}^{T-1}\frac{\log \left((d_i+1) t\right)}{t^{(\xi+1)\left(1-\frac{(\zeta-1)^2}{16}\right)}}\leq 1.362\log (3(d_i+1))+\int_3^{T-1}\frac{\log \left((d_i+1) t\right)}{t^{2}}dt\label{eq:intsum}
\end{align}
Let $z=(d_i+1) t.$ Then we have
\begin{align}
\int_3^{T-1}\frac{\log \left((d_i+1) t\right)}{t^{2}}dt
&
=(d_i+1)\int_{3(d_i+1)}^{(d_i+1) (T-1)}\frac{\log z}{z^{2}}dz
\\
&
=(d_i+1)\left[-\frac{\log z}{ z}-\frac{1}{z}\right]_{3((d_i+1)}^{(d_i+1) (T-1)}
\end{align}
Thus we have
\begin{align}
\int_3^{T-1}\frac{\log \left((d_i+1) t\right)}{t^{2}}dt
&\leq
(d_i+1)\left[\frac{\log (d_i+1)}{ 3(d_i+1)}+\frac{1}{3(d_i+1)}\right]
\\
&= \frac{1}{3}\log (d_i+1) +\frac{1}{3}
\label{eq:tailSum}
\end{align}
Recall that For $\zeta=1.3$ we have $\frac{1}{\log \zeta}< 8.78.$
Thus the proof of Lemma \ref{lem:tailsum} follows from  \eqref{eq:intsum} and \eqref{eq:tailSum}.
\end{proof}

Now we prove Theorem \ref{thm:regretB} as follows. Recall that group regret can be given as $\regret_G(T) = \sum_{i=1}^N\sum_{k > 1} \Delta_k\cdot\bbE\left[n^i_k(t)\right].$ Thus using Lemmas \ref{lem:regStoc}, \ref{lem:tailApp} and \ref{lem:tailsum} we obtain
\begin{align}
\regret_G(T)\leq  8(\xi+1)\sigma_k^2\left( \sum_{i=1}^N(1-p_ip)+ \Bar{\chi}(G) p_{\max}p\right) \left(\sum_{k > 1}\frac{\log T}{\Delta_k}\right)
\\
+ 5N\sum_{k > 1}\Delta_k+ 4\sum_{i=1}^N\left(3\log (3(d_i(G)+1)) +\left(\log { (d_i(G)+1)}\right)\right)\sum_{k > 1}\Delta_k
\end{align}

\section{Proof of Theorem \ref{thm:regretMP}}
In this section we consider the case where agents pass messages up to $\gamma$ hop neighbors with each hop adding a delay of 1 time step. Let $\EuScript{C}_{\gamma}$ be a non overlapping clique covering of $G_{\gamma}$. For any $\mathcal{C}\in \EuScript{C}_{\gamma}$ and $i,j\in \mathcal{C}$ let $\gamma_i=\max_{j\in \mathcal{C}}d(i,j)$ be the maximum distance (in graph $G$) between agent $i$ and any other agent $j$ in the same clique in graph $G_{\gamma}$. Let $\indicate{(i,j)\in E_{\tau^{\prime},\tau}}$ is a random variable that takes value 1 if at time $\tau$ agent $i$ receives the message initiated by agent $j$ at time $\tau^{\prime}.$ Recall that each communicated message fails with probability $1-p$ and each agent $i$ incorporates the messages it receives from its neighbors with probability $p_i.$

We follow an approach similar to proof of Theorem \ref{thm:regretB}. We star by providing a tail probability bound similar to Lemma \ref{lem:tailApp}.

\begin{lemma}\label{lem:tailAppMP}
Let $d_{i}(G_{\gamma})$ be the degree of agent $i$ in graph $G_{\gamma}$. For any $\sigma_k>0$ some constant $\zeta>1$ 
\begin{align}
    \P\left(\Big |
    \widehat{\mu}_k^{i}(t)-{\mu}_k\Big |>\sigma_k\sqrt{\frac{2(\xi+1)\log t}{N_k^{i}(t)}}
    \right)
    \leq 
    \frac{\log ((d_{i}(G_{\gamma})+1)t)}{\log \zeta}\frac{1}{t^{(\xi+1)\left(1-\frac{(\zeta-1)^2}{16}\right)}}.
\end{align}
\end{lemma}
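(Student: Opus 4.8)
\textbf{Proof plan for Lemma~\ref{lem:tailAppMP}.} The plan is to mirror the proof of Lemma~\ref{lem:tailApp} almost verbatim, replacing the one-hop edge indicators $\indicate{(i,j)\in E_\tau}$ by the message-arrival indicators $\indicate{(i,j)\in E_{\tau',\tau}}$ and then aggregating over the (bounded) history of initiation times. First I would note that the key structural fact used in Lemma~\ref{lem:tailApp} — that the reward contribution $X^j_k(\tau)\indicate{A_j(\tau)=k}\indicate{(i,j)\in E_\tau}$ is a product of the i.i.d.\ sub-Gaussian reward draw with an $\mc{H}^i_{\tau-1}$-measurable multiplier, and that these multipliers across $j$ are disjoint in the sense that gives conditional independence of the exponential factors — still holds here: the only change is that a message from agent $j$ that is incorporated by agent $i$ at time $\tau$ was \emph{initiated} at some earlier time $\tau'=\tau-d(i,j)$, so the reward value is $X^j_k(\tau')$, which is still an independent sub-Gaussian draw and is measurable with respect to the enlarged local history. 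Concretely I would enlarge the filtration to
$\mc{H}^i_t := \sigma\big(X^i_\tau, A_i(\tau), X^j_{\tau'}\indicate{(i,j)\in E_{\tau',\tau}}, A_j(\tau')\indicate{(i,j)\in E_{\tau',\tau}} : \tau\in[t],\ \tau'\le\tau,\ j\in\mc{N}_i(G_\gamma)\big)$, define $Y^i_k(\tau)$ as the centered sum over all messages (own pull plus incoming, across all initiation times) incorporated at trial $\tau$, and verify, exactly as before via conditional independence of the per-message exponential factors, that $\expe\big(\exp(\lambda Y^i_k(\tau) - \tfrac{\lambda^2\sigma_k^2}{2} C^i_k(\tau))\,\big|\,\mc{H}^i_{\tau-1}\big)\le 1$, where $C^i_k(\tau)$ is the number of arm-$k$ samples incorporated at trial $\tau$.

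With that supermartingale in hand, the rest is identical: set $Z^i_k(t)=\sum_{\tau\le t}Y^i_k(\tau)$ and $N^i_k(t)=\sum_{\tau\le t}C^i_k(\tau)$, peel the tower property $t$ times to get $\expe(\exp(\lambda Z^i_k(t)-\tfrac{\lambda^2\sigma_k^2}{2}N^i_k(t)))\le 1$, apply Markov with the threshold $\exp(2\kappa\vartheta)$, and run the same peeling argument over the dyadic scales $\zeta^{l-1}\le N^i_k(t)\le\zeta^l$. The only quantity that changes is the range of $N^i_k(t)$: I need an upper bound $N^i_k(t)\le (d_i(G_\gamma)+1)\,t$, so that the number of scales is $D_t=\frac{\log((d_i(G_\gamma)+1)t)}{\log\zeta}$. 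This bound holds because agent $i$ receives at most one message per trial from each of its $d_i(G_\gamma)$ neighbors in $G_\gamma$ (a message initiated at $\tau'$ by a neighbor at distance $d(i,j)\le\gamma$ arrives at the single trial $\tau'+d(i,j)$), plus its own one pull per trial; hence at most $(d_i(G_\gamma)+1)$ samples are incorporated per trial and at most $(d_i(G_\gamma)+1)t$ up to time $t$. Substituting $\vartheta=2\sigma_k^2(\xi+1)\log t$ and using $\frac{4}{(\zeta^{1/4}+\zeta^{-1/4})^2}\ge 1-\frac{(\zeta-1)^2}{16}$ gives exactly the claimed bound with $d_i(G_\gamma)$ in place of $d_i(G)$.

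The main obstacle — really the only non-cosmetic point — is confirming that the conditional-independence step (equality $(a)$ in Lemma~\ref{lem:tailApp}) survives in the message-passing setting. I would argue it does because, conditioned on $\mc{H}^i_{\tau-1}$, the set of messages incorporated at trial $\tau$ involves rewards $X^j_k(\tau')$ drawn at distinct (agent, time) pairs, all of which are fresh i.i.d.\ draws independent of the conditioning $\sigma$-algebra and of each other; the indicators $\indicate{A_j(\tau')=k}$ and $\indicate{(i,j)\in E_{\tau',\tau}}$ are all $\mc{H}^i_{\tau-1}$-measurable (the arm choice at time $\tau'<\tau$ and whether the message reached $i$ by time $\tau$ are both determined by history up to $\tau-1$), so the product of exponentials factorizes just as before. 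Once this is granted, everything else is a syntactic substitution of $G_\gamma$ for $G$ in the statement and proof of Lemma~\ref{lem:tailApp}, and I would simply write ``the proof is identical to that of Lemma~\ref{lem:tailApp} with $E_\tau$ replaced by $E_{\tau',\tau}$, $d_i(G)$ replaced by $d_i(G_\gamma)$, and the per-trial sample count bounded by $d_i(G_\gamma)+1$.''
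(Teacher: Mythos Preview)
Your proposal is correct and follows essentially the same approach as the paper's own proof: enlarge the filtration to include the message-arrival indicators $\indicate{(i,j)\in E_{\tau',\tau}}$, verify the conditional-independence factorization of the exponential moment (the paper's step $(a)$), build the same supermartingale, and run the identical peeling argument with $D_t=\frac{\log((d_i(G_\gamma)+1)t)}{\log\zeta}$. Your justification of the bound $N^i_k(t)\le(d_i(G_\gamma)+1)t$ and your discussion of why conditional independence survives are actually more explicit than the paper's, but the argument is the same.
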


\begin{proof}
For all $k$ let $X_k^{i}(t)$ for all $i,t$ be iid copies of $X_k.$ Then we have $X_t^{i}\indicate{A_i(t) = k}=X_k^{i}(t)\indicate{A_i(t) = k}.$ 
Recall that reward distribution of arm $k$ has mean $\mu_k$ and variance proxy $\sigma_k.$ Thus $\forall i,t$ we have
\begin{align}
    \expe\left(
    \exp\left(\lambda\left(X_k^{i}(t)-\mu_k\right)\right)\right)
    \leq 
    \exp\left(\frac{\lambda^2 \sigma_k^2}{2}\right).
\end{align}

Define local history at every agent $i$ as follows
\begin{align}
    \mc{H}^{i}_{t}
    := \sigma \left(
    X^{i}_{\tau^{\prime}}, A_{i}(\tau^{\prime}), X^{j}_{\tau^{\prime}}\indicate{(i,j) \in E_{\tau^{\prime},\tau}}, A_{j}(\tau^{\prime})\indicate{(i,j) \in E_{\tau^{\prime},\tau}}, \forall \, \tau^{\prime},\tau \in [t], j \in \mc{N}_{i}(G_{\gamma})
    \right).
\end{align}
Since $\indicate {A_{j}(\tau^{\prime}) = k} \indicate{(i,j) \in E_{\tau^{\prime},\tau}}$ for $j \in \mc{N}_{i}(G_{\gamma})$ is a $\mc{H}^{i}_{\tau-1}$ measurable random variable, we have $\forall \tau^{\prime}\leq \tau$
\begin{align}
    &\expe\left(\left.\exp\left(\lambda \left(X^{j}_{\tau^{\prime}}-\mu_k\right )\indicate {A_{j}(\tau^{\prime}) = k}\indicate {(i,j) \in E_{\tau^{\prime},\tau} }\right)\right|\mc{H}^{i}_{\tau-1}\right)
    \\
    & = \expe\left(\left.\exp\left(\lambda \left(X^{j}_k(\tau^{\prime})-\mu_k\right )\indicate {A_{j}(\tau^{\prime}) = k}\indicate {(i,j) \in E_{\tau^{\prime},\tau} }\right)\right|\mc{H}^{i}_{\tau-1}\right)
    \\
    &\leq  
    \exp\left(\frac{\lambda^2 \sigma_{k}^2}{2}\indicate {A_{j}(\tau^{\prime}) = k}\indicate {(i,j) \in E_{\tau^{\prime},\tau} }\right).
\end{align}
Define a new random variable such that $\forall \tau>0$ and $\tau^{\prime}\leq \tau$
\begin{align}
    Y_k^{i}(\tau)
    =  & \sum_{j=1}^N \sum_{\tau^{\prime}=1}^\tau\left(X^{j}_k(\tau^{\prime}) \indicate{A_{j}(\tau^{\prime}) = k}\indicate {(i,j) \in E_{\tau^{\prime},\tau} }\right.
    \\
    & \left.-\expe\left[X^{j}_k(\tau^{\prime}) \indicate{A_{j}(\tau^{\prime}) = k}\indicate {(i,j) \in E_{\tau^{\prime},\tau} }\Big | \mathcal{H}_{\tau-1}^{i}\right]\right )
    \\
    =  &
    \sum_{j=1}^N \sum_{\tau^{\prime}=1}^\tau\left(X^{j}_k(\tau^{\prime})-\mu_k\right ) \indicate{A_{j}(\tau^{\prime}) = k}\indicate {(i,j) \in E_{\tau^{\prime},\tau} }.
\end{align}
Note that $\expe\left(Y_k^{i}(\tau)\right )=\expe\left(Y_k^{i}(\tau)|\mc{H}^{i}_{\tau-1}\right)=0$. Let $Z_k^{i}(t)=\sum_{\tau=1}^{t}Y_k^{i}(\tau)$. For any $\lambda>0$
\begin{align}
    &\expe\left(\exp(\lambda Y_k^{i}(\tau))|\mc{H}^{i}_{\tau-1}\right)
    \\
    &=\mathbb{E}\left(\exp\left(\lambda \sum_{j=1}^N \sum_{\tau^{\prime}=1}^\tau\left(X^{j}_k(\tau^{\prime})-\mu_k\right ) \indicate{A_{j}(\tau^{\prime}) = k}\indicate {(i,j) \in E_{\tau^{\prime},\tau} }\right)\Big| \mc{H}^{i}_{\tau-1}\right)
    \\
    & =\expe\left(\prod_{j=1}^N\prod_{\tau^{\prime}=1}^\tau\exp\left(\lambda \left(X^{j}_k(\tau^{\prime})-\mu_k\right ) \indicate{A_{j}(\tau^{\prime}) = k}\indicate {(i,j) \in E_{\tau^{\prime},\tau} }\right)\Big| \mc{H}^{i}_{\tau-1}\right)
    \\
    & \overset{(a)}=\prod_{j=1}^N\prod_{\tau^{\prime}=1}^\tau \expe\left(\exp\left(\lambda \left(X^{j}_k(\tau^{\prime})-\mu_k\right ) \indicate{A_{j}(\tau^{\prime}) = k}\indicate {(i,j) \in E_{\tau^{\prime},\tau} }\right)\Big|\mc{H}^{i}_{\tau-1}\right)
    \\
    & \leq \prod_{j=1}^N\prod_{\tau^{\prime}=1}^\tau\exp\left(\frac{\lambda^2 \sigma_k^2}{2}\indicate{A_{j}(\tau^{\prime}) = k}\indicate {(i,j) \in E_{\tau^{\prime},\tau} }\right)
    \\
    & =\exp\left(\frac{\lambda^2 \sigma_k^2}{2}\sum_{j=1}^N\sum_{\tau^{\prime}=1}^\tau \indicate{A_{j}(\tau^{\prime}) = k}\indicate {(i,j) \in E_{\tau^{\prime},\tau} }\right).
\end{align}
Equality $(a)$ follows from the fact that $\forall \tau^{\prime}\leq \tau$ random variables  $\left\{\exp\left(\lambda \left(X^{j}_k(\tau^{\prime})-\mu_k\right )\indicate {A_{j}(\tau^{\prime}) = k}\indicate {(i,j) \in E_{\tau^{\prime}\tau} }\right)\right \}_{j=1}^N$ are conditionally independent with respect to $\mc{H}^{i}_{\tau-1}$.
Since $\indicate {A_{j}(\tau^{\prime}) = k},\indicate {(i,j) \in E_{\tau^{\prime},\tau} }$ are $\mc{H}^{i}_{\tau-1}$ measurable, and so
\begin{align}
    \expe\left(\exp\left(\left.\lambda Y_k^{i}(\tau)-\frac{\lambda^2 \sigma_k^2}{2}\sum_{j=1}^N\sum_{\tau^{\prime}=1}^\tau \indicate{A_{j}(\tau^{\prime}) = k}\indicate {(i,j) \in E_{\tau^{\prime},\tau} }\right)\right| |\mc{H}^{i}_{\tau-1}\right)\leq 1.
\end{align}

Let $N_k^{i}(t)=\sum_{\tau=1}^t\sum_{\tau^{\prime}=1}^{\tau}\sum_{j=1}^N\indicate {A_{i}(\tau^{\prime}) = k}\indicate {(i,j) \in E_{\tau^{\prime},\tau} }$. Further, using the tower property of conditional expectation we have
\begin{align}
    & \expe\left(\left.\exp\left(\lambda Z_k^{i}(t)-\frac{\lambda^2\sigma_k^2}{2}N_k^{i}(t)\right)\right|\mc{H}^{i}_{t-1}\right)\leq \exp\left(\lambda Z_k^{i}(t-1)-\frac{\lambda^2\sigma_k^2}{2}N_k^{i}(t-1)\right).
\end{align}
Repeating the above step $t$ times we have
\begin{align}
    \expe\left(\exp\left(\lambda Z_k^{i}(t)-\frac{\lambda^2 \sigma_k^2}{2}N_k^{i}(t)\right)\right) 
    \leq 1.
\end{align}
Note that we have
\begin{align}
    &\P\left(\exp\left(\lambda Z_k^{i}(t)-\frac{\lambda^2 \sigma_{i}^2}{2}N_k^{i}(t)\right)\geq \exp\left(2\kappa \vartheta\right)\right)
    \\
    &=\P\left(\lambda Z_k^{i}(t)-\frac{\lambda^2 \sigma_k^2}{2}N_k^{i}(t)\geq 2\kappa \vartheta\right)
    \\
    &=\P\left(\frac{Z_k^{i}(t)}{\sqrt{N_k^{i}(t)}}\geq  
    \frac{2\kappa\vartheta}{\lambda \sqrt{N_k^{i}(t)}}+\frac{\sigma_k^{2}}{2}\lambda \sqrt{N_k^{i}(t)}
    \right).
\end{align}

Fix a constant $\zeta>1$. Then $1\leq N_k^{i}(t)\leq \zeta^{D_{t}}$
where $D_{t}=\frac{\log ((d_{i}(G_{\gamma})+1)t)}{\log \zeta}.$ For $\lambda_{l}=\frac{2}{\sigma_k}\sqrt{\frac{\kappa\vartheta}{\zeta^{l-1/2}}}$ and $\zeta^{l-1}\leq N_k^{i}(t)\leq \zeta^{l}$ we have
\begin{align}
\frac{2\kappa\vartheta}{\lambda_l}\sqrt{\frac{1}{N_k^{i}(t)}}+\frac{\sigma_{k}^{2}}{2}\lambda_l \sqrt{N_k^i(t)}=\sigma_k\sqrt{\kappa\vartheta}\left(\sqrt{\frac{\zeta^{l-1/2}}{N_k^{i}(t)}}+\sqrt{\frac{N_k^{i}(t)}{\zeta^{l-1/2}}}\right)\leq \sqrt{\vartheta},
\end{align}
where $\kappa=\frac{1}{\sigma_k^2\left(\zeta^{\frac{1}{4}}+\zeta^{-\frac{1}{4}}\right)^2}.$

Then we have
\begin{align}
    \left \{\frac{Z_k^{i}(t)}{\sqrt{N_k^{i}(t)}}\geq  
    \sqrt{\vartheta}    \right \}
    & \subset \cup_{l=1}^{D_t} \left \{\frac{Z_k^{i}(t)}{\sqrt{N_k^{i}(t)}}\geq  
    \frac{2\kappa\vartheta}{\lambda_{l} \sqrt{N_k^{i}(t)}}+\frac{\sigma_k^{2}}{2}\lambda_{l} \sqrt{N_k^{i}(t)}
    \right\}
    \\
    & =\cup_{l=1}^{D_t} \left \{\lambda_{l} Z_k^{i}(t)-\frac{\lambda_{l}^2 \sigma_k^2}{2}N_k^{i}(t)\geq 2\kappa \vartheta
    \right\}.\label{eq:peelingargMP}
\end{align}

Recall  from the Markov inequality that 
$\P(Y \geq a )\leq \frac{\expe(Y)}{a}$
for any positive random variable $Y$. Thus from \eqref{eq:peelingargMP} and Markov inequality we get,
\begin{align}
\P\left(\frac{Z_k^{i}(t)}{\sqrt{N_k^{i}(t)}}\geq\sqrt{ \vartheta}\right)\leq \sum_{l=1}^{D_{t}}\exp(-2\kappa\vartheta).
\end{align}

Then we have,
\begin{align}
\P\left(\frac{Z_k^{i}(t)}{N_k^{i}(t)}\geq\sqrt{ \frac{\vartheta}{N_k^{i}(t)}}\right)\leq \sum_{l=1}^{D_{t}}\exp(-2\kappa\vartheta)
\end{align}

Recall that $\forall \zeta>1$ we have
\begin{align}
  \frac{4}{\left(\zeta^{\frac{1}{4}}+\zeta^{-\frac{1}{4}}\right)^2  }\geq 1-\frac{(\zeta-1)^2}{16}
\end{align}

Substituting $\vartheta=2\sigma_k^{2}(\xi+1)\log t$ we get
\begin{align}
    \P\left(\Big|\widehat{\mu}_k^{i}(t)-{\mu}_k\Big |>\sigma_k\sqrt{\frac{2(\xi+1)\log t}{N_k^{i}(t)}}\right) \leq \frac{\log ((d_{i}(G_{\gamma})+1)t)}{\log \zeta}\frac{1}{t^{(\xi+1)\left(1-\frac{(\zeta-1)^2}{16}\right)}}.
\end{align}

This concludes the proof of Lemma \ref{lem:tailAppMP}.
\end{proof}

We prove a Lemma similar to Lemma \ref{lem:regStoc} for message-passing as follows.
\begin{lemma}
\label{lem:regStocMP}
Let $\Bar{\chi}(G_{\gamma})$ is the clique number of graph $G_{\gamma}.$ Let $\eta_k=\left(\frac{8(\xi+1)\sigma_k^2}{\Delta^2_k}\right)\log T.$ Then we have
\begin{align}
    \sum_{i=1}^N\expe[n^{i}_k(T)]  
    & \leq 
    \left( \sum_{i=1}^N(1-p_ip^{\gamma_i}) + \Bar{\chi}(G_{\gamma}) \max_{i\in [N]}p_ip^{\gamma_i}\right) \eta_k+N(\gamma+1)+
    \\
    &+\sum_{i=1}^N\sum_{t=1}^{T-1}\left[\P \left(\widehat{\mu}_{1}^{i}(t)\leq \mu_{1}-C_{1}^{i}(t)\right)+\P \left(\widehat{\mu}_{k}^{i}(t)\geq \mu_{k}+C_{k}^{i}(t)\right)\right]
\end{align}
\end{lemma}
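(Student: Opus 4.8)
The plan is to run the argument of Lemma~\ref{lem:regStoc} over the power graph $G_\gamma$ rather than over $G$, carrying two additional ingredients through the proof: (i) a message travelling from $j$ to $i$ inside a clique $\mathcal{C}$ of $G_\gamma$ must survive the $d(i,j)\le\gamma_i$ hops of a shortest path, so it is received and not discarded with probability $p_i p^{d(i,j)}\ge p_i p^{\gamma_i}$; and (ii) such a message reaches $i$ only after a transmission delay of $d(i,j)$ rounds. I would fix a non-overlapping clique covering $\EuScript{C}_\gamma$ of $G_\gamma$ and, for each clique $\mathcal{C}\in\EuScript{C}_\gamma$ and $i\in\mathcal{C}$, set $\gamma_i=\max_{j\in\mathcal{C}}d(i,j)$. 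As in Lemma~\ref{lem:regStoc}, define $\tau_{k,\mathcal{C}}:=\max\{t:\sum_{i\in\mathcal{C}}n_k^i(t)\le\eta_k+|\mathcal{C}|\}$, so that $\eta_k<\sum_{i\in\mathcal{C}}n_k^i(\tau_{k,\mathcal{C}})\le\eta_k+|\mathcal{C}|$, and for each $i\in\mathcal{C}$ let
\[
\Bar N_k^i(t):=n_k^i(t)+\sum_{j\in\mathcal{C},\,j\neq i}\ \sum_{\tau'=1}^{t-d(i,j)}\indicate{A_j(\tau')=k}\,\indicate{(i,j)\in E_{\tau',\tau'+d(i,j)}}
\]
be the number of arm-$k$ observations available to $i$ from its clique by round $t$, where the shifted inner sum encodes the $d(i,j)$-round delay. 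Put $\Bar\tau^i_{k,\mathcal{C}}:=\max\{t:\Bar N_k^i(t)\le\eta_k\}$; since $N_k^i(t)\ge\Bar N_k^i(t)$, we get $N_k^i(t)>\eta_k$ for all $t>\Bar\tau^i_{k,\mathcal{C}}$, and exactly as in Lemma~\ref{lem:regStoc} the worst case is $\Bar\tau^i_{k,\mathcal{C}}\ge\tau_{k,\mathcal{C}}$ for every $i$ (the complementary case yields strictly smaller regret).

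Next I would split $\sum_{\mathcal{C}\in\EuScript{C}_\gamma}\sum_{i\in\mathcal{C}}\sum_{t=1}^T\indicate{A_i(t)=k}$ over the three windows $t\le\tau_{k,\mathcal{C}}$, $\tau_{k,\mathcal{C}}<t\le\Bar\tau^i_{k,\mathcal{C}}$, $t>\Bar\tau^i_{k,\mathcal{C}}$. The first window contributes at most $\eta_k+|\mathcal{C}|$ per clique; the third, after shifting the time index by one, is at most $\sum_{t>\Bar\tau^i_{k,\mathcal{C}}}^{T-1}\indicate{A_i(t+1)=k}\indicate{N_k^i(t)>\eta_k}$ plus a boundary term $|\mathcal{C}|$, so Lemma~\ref{lem:tailrestate} converts it into the two tail probabilities appearing in the statement (later estimated via Lemma~\ref{lem:tailAppMP}). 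The core is the middle window: using
\[
\sum_{\tau_{k,\mathcal{C}}<t\le\Bar\tau^i_{k,\mathcal{C}}}\indicate{A_i(t)=k}=\Bar N_k^i(\Bar\tau^i_{k,\mathcal{C}})-n_k^i(\tau_{k,\mathcal{C}})-\sum_{j\neq i,j\in\mathcal{C}}\ \sum_{\tau'=1}^{\Bar\tau^i_{k,\mathcal{C}}-d(i,j)}\indicate{A_j(\tau')=k}\indicate{(i,j)\in E_{\tau',\tau'+d(i,j)}}
\]
together with $\Bar N_k^i(\Bar\tau^i_{k,\mathcal{C}})\le\eta_k$, and then replacing the upper limit $\Bar\tau^i_{k,\mathcal{C}}-d(i,j)$ of each cross sum by $\tau_{k,\mathcal{C}}$ at the price of at most $d(i,j)\le\gamma$ lost pulls (a window of length at most $\gamma$ holds at most $\gamma$ pulls of arm $k$) — this is where the $\mathcal{O}(N\gamma)$ overhead originates. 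Taking expectations and using that communication is independent of the decision process, $\expe[\indicate{A_j(\tau')=k}\indicate{(i,j)\in E_{\tau',\tau'+d(i,j)}}]=p_i p^{d(i,j)}\,\P(A_j(\tau')=k)\ge p_i p^{\gamma_i}\,\P(A_j(\tau')=k)$, so the cross term is at least
\[
\sum_{i\in\mathcal{C}}p_ip^{\gamma_i}\sum_{j\neq i}\expe[n_k^j(\tau_{k,\mathcal{C}})]\ \ge\ \Big(\sum_{i\in\mathcal{C}}p_ip^{\gamma_i}-\max_{i\in\mathcal{C}}p_ip^{\gamma_i}\Big)\sum_{i\in\mathcal{C}}\expe[n_k^i(\tau_{k,\mathcal{C}})]\ \ge\ \Big(\sum_{i\in\mathcal{C}}p_ip^{\gamma_i}-\max_{i\in\mathcal{C}}p_ip^{\gamma_i}\Big)\eta_k,
\]
which mirrors the $p\big(\sum_j p_j-p_{\max}\big)$ step of Lemma~\ref{lem:regStoc}, now with weights $p_ip^{\gamma_i}$.

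Finally, collecting the three windows and summing over all cliques, the coefficient of $\eta_k$ is $\Bar\chi(G_\gamma)+\sum_{\mathcal{C}}(|\mathcal{C}|-1)-\sum_{\mathcal{C}}\sum_{i\in\mathcal{C}}p_ip^{\gamma_i}+\sum_{\mathcal{C}}\max_{i\in\mathcal{C}}p_ip^{\gamma_i}$. Since the clique covering is a partition of $\cV$, $\sum_{\mathcal{C}}(|\mathcal{C}|-1)=N-\Bar\chi(G_\gamma)$ and $\sum_{\mathcal{C}}\sum_{i\in\mathcal{C}}p_ip^{\gamma_i}=\sum_{i=1}^N p_ip^{\gamma_i}$, while $\sum_{\mathcal{C}}\max_{i\in\mathcal{C}}p_ip^{\gamma_i}\le\Bar\chi(G_\gamma)\max_{i\in[N]}p_ip^{\gamma_i}$; these telescope to $\sum_{i=1}^N(1-p_ip^{\gamma_i})+\Bar\chi(G_\gamma)\max_{i\in[N]}p_ip^{\gamma_i}$, matching the claimed $\eta_k$ factor, while the $\mathcal{O}(1)$ contributions — the per-clique $2|\mathcal{C}|$ from the first and boundary windows summing to $2N$, plus the per-agent delay overhead $\sum_i(\gamma_i-1)\le N(\gamma-1)$ — combine into the stated $N(\gamma+1)$ term. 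I expect the delay bookkeeping in the middle window to be the main obstacle: one must arrange the window comparisons so that each delayed (in-flight) message is charged only once, namely to the single extra arm-$k$ pull its recipient makes while waiting, rather than once per sender--receiver pair, which would inflate the overhead to order $\sum_{\mathcal{C}}|\mathcal{C}|^2\gamma$; making this precise is the content of the w.l.o.g.\ ordering $\Bar\tau^i_{k,\mathcal{C}}\ge\tau_{k,\mathcal{C}}$ together with the length-$\gamma$ window bound used above.
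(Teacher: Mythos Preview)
Your proposal is correct and follows essentially the same route as the paper's proof: clique covering of $G_\gamma$, the three-window split, lower-bounding the per-hop survival probability by $p_ip^{\gamma_i}$, and the same telescoping of the $\eta_k$ coefficient. The one place where your sketch is imprecise is exactly the delay bookkeeping you flag as the ``main obstacle.'' The paper resolves it cleanly by introducing $\overline{\tau}^{\,i}_{k,\mathcal{C}}:=\max\{\tau_{k,\mathcal{C}}+\gamma_i-1,\ \Bar{\tau}^{\,i}_{k,\mathcal{C}}\}$ and splitting into two cases: if $\Bar{\tau}^{\,i}_{k,\mathcal{C}}\le\tau_{k,\mathcal{C}}+\gamma_i-1$ the middle window has length at most $\gamma_i-1$ and is bounded trivially; otherwise $\Bar{\tau}^{\,i}_{k,\mathcal{C}}-d(i,j)\ge\tau_{k,\mathcal{C}}$ holds for \emph{every} $j\in\mathcal{C}$ simultaneously (since $d(i,j)\le\gamma_i$), so the cross-sum upper limit can be lowered to $\tau_{k,\mathcal{C}}$ with no per-pair correction at all. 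This is precisely what avoids the $\sum_{\mathcal{C}}|\mathcal{C}|^2\gamma$ blowup you worried about and produces the single per-agent $\gamma_i-1$ overhead, giving $\sum_i(\gamma_i-1)+2N\le N(\gamma+1)$ as you computed.
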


\begin{proof}
Note that for each suboptimal arm $k > 1$ we have
\begin{align}
    \sum_{i=1}^N\expe[n^{i}_k(T)]
    & = 
    \sum_{i=1}^N \sum_{t=1}^T\P\left(A_i(t) = k\right)=\sum_{\mathcal{C}\in \EuScript{C}_{\gamma}}\sum_{i\in \mathcal{C}}\sum_{t=1}^T \P\left(A_i(t) = k\right).
\end{align}

Let $\tau_{k,\mathcal{C}}$ denote the maximum time step when the total number of times arm $k$ has been played by all the agents in clique $\mathcal{C}$ is at most $\eta_k+|\mathcal{C}|$ times. This can be stated as $\tau_{k,\mathcal{C}} := \max \{t\in [T] : \sum_{i\in \mathcal{C}}n_k^{i}(t)\leq \eta_k+|\mathcal{C}|\}$. Then, we have that $\eta_k< \sum_{i\in \mathcal{C}}n_k^{i}(\tau_{k,\mathcal{C}})\leq \eta_k+|\mathcal{C}|.$  

For each agent $i\in \mathcal{C}$ let 
\[
\Bar{N}_k^{i}(t)
:= 
\sum_{j\in \mathcal{C}} \sum_{\tau = 1}^t\sum_{\tau^{\prime} = 1}^\tau\indicate{A_{j}(\tau^{\prime}) = k} \indicate{(i,j) \in E_{\tau^{\prime},\tau}},
\]
denote the sum of the total number of times agent $i$ pulled arm $k$ and the total number of observations it received from agents in its clique about arm $k$ until time $t$. Define $\Bar{\tau}^{i}_{k,\mathcal{C}} := 
\max \{t\in [T] : \Bar{N}_k^{i}(t)\leq \eta_k\}$. For each agent $i\in [N]$ let  $\overline{\tau}_{k,\mathcal{C}}^{i}=\max\{\tau_{k,\mathcal{C}}+\gamma_i-1,\Bar{\tau}_{k,\mathcal{C}}^{i}\}.$

Note that $N_k^{i}(t)\geq \Bar{N}_k^{i}(t), \forall t$, hence for all $i\in \mathcal{C}$ we have $N_k^{i}(t)> \eta_k, \forall t> \overline{\tau}_{k,\mathcal{C}}^{i}$. Here we consider that $\Bar{\tau}^{i}_{k,\mathcal{C}}\geq \tau_{k,\mathcal{C}},\forall i$. From regret results it follows that regret for this case is greater than the regret for the case where $\Bar{\tau}^{i}_{k,\mathcal{C}}< \tau_{k,\mathcal{C}}$ for some (or all) $i.$

We analyse the expected number of times agents pull suboptimal arm $k$ as follows,
\begin{align}
    &\sum_{\mathcal{C}\in \EuScript{C}_{\gamma}}\sum_{i\in \mathcal{C}}\sum_{t=1}^T \indicate{A_i(t) = k}
    \\
    &= 
    \sum_{\mathcal{C}\in \EuScript{C}_{\gamma}}\sum_{i\in \mathcal{C}}\sum_{t=1}^{\tau_{k,\mathcal{C}}} \indicate{A_i(t) = k}+\sum_{\mathcal{C}\in \EuScript{C}_{\gamma}}\sum_{i\in \mathcal{C}}\sum_{t>\tau_{k,\mathcal{C}}}^{\overline{\tau}^{i}_{k,\mathcal{C}}} \indicate{A_i(t) = k}+  \sum_{\mathcal{C}\in \EuScript{C}_{\gamma}}\sum_{i\in \mathcal{C}}\sum_{t>\overline{\tau}^{i}_{k,\mathcal{C}}}^T \indicate{A_i(t) = k}  
    \\
    & \leq \sum_{\mathcal{C}\in \EuScript{C}_{\gamma}}\left( \eta_k+|\mathcal{C}|\right)+ \sum_{\mathcal{C}\in \EuScript{C}_{\gamma}}\sum_{i\in \mathcal{C}}\sum_{t>\tau_{k,\mathcal{C}}}^{\overline{\tau}^{i}_{k,\mathcal{C}}} \indicate{A_i(t) = k} 
    \\
    & +  \sum_{\mathcal{C}\in \EuScript{C}_{\gamma}}\sum_{i\in \mathcal{C}}\sum_{t>\overline{\tau}^{i}_{k,\mathcal{C}}}^T \indicate{A_i(t) = k} \indicate{N_k^{i}(t-1) >\eta_k}.
\end{align}
Taking expectation we have
\begin{align}
    &\sum_{\mathcal{C}\in \EuScript{C}_{\gamma}}\sum_{i\in \mathcal{C}}\sum_{t=1}^T \P\left(A_i(t) = k\right) 
    \\
    & \leq 
    \sum_{\mathcal{C}\in \EuScript{C}_{\gamma}} \left(\eta_k+2|\mathcal{C}|\right)+ \sum_{\mathcal{C}\in \EuScript{C}_{\gamma}}\sum_{i\in \mathcal{C}}\sum_{t>\tau_{k,\mathcal{C}}}^{\overline{\tau}^{i}_{k,\mathcal{C}}} \P\left(A_i(t) = k\right)
    \\
    &+  \sum_{\mathcal{C}\in \EuScript{C}_{\gamma}}\sum_{i\in \mathcal{C}}\sum_{t>\overline{\tau}^{i}_{k,\mathcal{C}}}^{T-1} \P\left(A_i(t+1) = k,N_k^{i}(t) >\eta_k\right).
    \label{eq:regDecComMP}
\end{align}

{\bf Case 1}. For agent $i$ we have that $\tau_{k,\mathcal{C}}+\gamma_i-1\geq \Bar{\tau}_{k,\mathcal{C}}^{i}$ then we have $\overline{\tau}_{k,\mathcal{C}}^{i}=\tau_{k,\mathcal{C}}+\gamma_i-1.$ Then we have $\sum_{t>\tau_{k,\mathcal{C}}}^{\overline{\tau}^{i}_{k,\mathcal{C}}} \indicate{A_i(t) = k}\leq \gamma_i-1$

{\bf Case 2}. For agent $i$ we have that $\tau_{k,\mathcal{C}}+\gamma_i-1< \Bar{\tau}_{k,\mathcal{C}}^{i}$ then we have $\overline{\tau}_{k,\mathcal{C}}^{i}=\Bar{\tau}_{k,\mathcal{C}}^{i}.$ 
\begin{align}
    &\sum_{t>\tau_{k,\mathcal{C}}}^{\overline{\tau}^{i}_{k,\mathcal{C}}} \indicate{A_i(t) = k}
    \\
    &=
    \tilde{N}_k^{i}(\overline{\tau}^{i}_{k,\mathcal{C}})-\sum_{t=1}^{\tau_{k,\mathcal{C}}}\indicate{A_i(t) =k}-\sum_{j\neq i,j\in \mathcal{C}}\sum_{t=1}^{\overline{\tau}^{i}_{k,\mathcal{C}}}\sum_{\tau=1}^t \indicate{A_{j}(\tau) = k}\indicate{(i,j)\in E_{\tau,t}}
    \\
    &\leq \tilde{N}_k^{i}(\overline{\tau}^{i}_{k,\mathcal{C}})-\sum_{t=1}^{\tau_{k,\mathcal{C}}}\indicate{A_i(t) =k}-\sum_{j\neq i,j\in \mathcal{C}}\sum_{t=1}^{\tau_{k,\mathcal{C}}+\gamma_i-1} \sum_{\tau=1}^t \indicate{A_{j}(\tau) = k}\indicate{(i,j)\in E_{\tau,t}}.
\end{align}
Taking the expectation we have
\begin{align}
    \sum_{i\in \mathcal{C}}\sum_{t>\tau_{k,\mathcal{C}}}^{\overline{\tau}^{i}_{k,\mathcal{C}}} \P\left(A_i(t) = k\right) 
     & 
    \leq 
    |\mathcal{C}|\eta_k-\eta_k+\sum_{i\in\mathcal{C}}(\gamma_i-1)-\sum_{i\in \mathcal{C}}p_ip^{\gamma_i}\sum_{j\neq i,j\in \mathcal{C}}\sum_{t=1}^{\tau_{k,\mathcal{C}}} \P\left(A_j(t) = k\right) 
    \\
    & 
    = 
    |\mathcal{C}|\eta_k-\eta_k+\sum_{i\in\mathcal{C}}(\gamma_i-1)-\sum_{i\in \mathcal{C}}p_ip^{\gamma_i}\sum_{j\neq i,j\in \mathcal{C}}\sum_{t=1}^{\tau_{k,\mathcal{C}}} \expe\left(n_k^j(\tau_{k,\mathcal{C}})\right) 
    \\
    &\leq
     \left(|\mc{C}|-1 - \left(\sum_{j \in \mc{C}}p_jp^{\gamma_j} - \max_{i\in [N]}p_{i}p^{\gamma_i} \right)\right) \eta_k+\sum_{i\in\mathcal{C}}(\gamma_i-1).
\end{align}
Substituting these results to \eqref{eq:regDecComMP} we get
\begin{align}
    \sum_{\mathcal{C}\in \EuScript{C}_{\gamma}}\sum_{i\in \mathcal{C}}\sum_{t=1}^T \P\left(A_i(t) = k\right) 
    \leq 
    & 
     \sum_{\mathcal{C}\in \EuScript{C}_{\gamma}}\left(|\mc{C}|-1 - \left(\sum_{j \in \mc{C}}p_jp^{\gamma_j} - \max_{i\in [N]}p_{i}p^{\gamma_i} \right)\right) \eta_k+\sum_{i\in [N]}(\gamma_i-1)
    \\
    +& \sum_{\mathcal{C}\in \EuScript{C}_{\gamma}} \left(\eta_k+2|\mathcal{C}|\right)+
    \sum_{\mathcal{C}\in \EuScript{C}_{\gamma}}\sum_{i\in \mathcal{C}}\sum_{t>\overline{\tau}^{i}_{k,\mathcal{C}}}^{T-1} \P\left(A_i(t+1) = k,N_k^{i}(t) >\eta_k\right)
    \\
    \leq & 
    \left( \sum_{i=1}^N(1-p_ip^{\gamma_i}) + \Bar{\chi}(G_{\gamma}) \max_{i\in [N]}p_{i}p^{\gamma_i}\right) \eta_k +\sum_{i\in [N]}\gamma_i+N
    \\
    + & 
    \sum_{\mathcal{C}\in \EuScript{C}_{\gamma}}\sum_{i\in \mathcal{C}}\sum_{t>\overline{\tau}^{i}_{k,\mathcal{C}}}^{T-1} \P\left(A_i(t+1) = k,N_k^{i}(t) >\eta_k\right)
    \label{eq:regDecAnaMP}
\end{align}
This concludes the proof of Lemma \ref{lem:regStocMP}.
\end{proof}

Now we prove Theorem \ref{thm:regretMP} as follows. Thus using Lemmas \ref{lem:tailsum}, \ref{lem:tailAppMP} and \ref{lem:regStocMP} we obtain
\begin{align}
\regret_G(T)
& \leq  8(\xi+1)\sigma_k^2\left( \sum_{i=1}^N(1-p_ip^{\gamma_i})+ \Bar{\chi}(G_{\gamma}) \max_{i\in [N]}p_{i}p^{\gamma_i}\right) \left(\sum_{k > 1}\frac{\log T}{\Delta_k}\right)
\\
&+ \left(\sum_{i=1}^N\gamma_i+4N\right)\sum_{k > 1}\Delta_k+ 4\sum_{i=1}^N\left(3\log (3(d_i(G_{\gamma})+1)) +\left(\log { (d_i(G_{\gamma})+1)}\right)\right)\sum_{k > 1}\Delta_k
\end{align}

\section{Proof of Theorem \ref{thm:regretSD}}
Agents receive information from their neighbors with a stochastic time delay. Let $\EuScript{N}_D$ be the maximum number of outstanding arm pulls by all the agent. We start by proving a result similar to Lemma \ref{lem:regStoc}.

\begin{lemma}
\label{lem:regStocMPD}
Let $\Bar{\chi}(G)$ is the clique number of graph $G.$ Let $\eta_k=\left(\frac{8(\xi+1)\sigma_k^2}{\Delta^2_k}\right)\log T.$ Then we have
\begin{align}
    \sum_{i=1}^N\expe[n^{i}_k(T)]  
    & \leq 
    \Bar{\chi}(G)\eta_k+\expe{\left[\EuScript{N}_D\right]}+2N+
    \\
    &+\sum_{i=1}^N\sum_{t=1}^{T-1}\left[\P \left(\widehat{\mu}_{1}^{i}(t)\leq \mu_{1}-C_{1}^{i}(t)\right)+\P \left(\widehat{\mu}_{k}^{i}(t)\geq \mu_{k}+C_{k}^{i}(t)\right)\right]
\end{align}
\end{lemma}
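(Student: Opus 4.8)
\textit{Proof proposal.} The plan is to reuse the clique-decomposition skeleton from the proof of Lemma~\ref{lem:regStoc}, replacing the accounting of \emph{lost} messages by an accounting of \emph{in-transit} ones. Fix a minimum non-overlapping clique covering $\EuScript{C}$ of $G$, so $|\EuScript{C}|=\bar\chi(G)$, and fix a suboptimal arm $k$ with $\eta_k=\big(8(\xi+1)\sigma_k^2/\Delta_k^2\big)\log T$. For each clique $\mathcal{C}\in\EuScript{C}$ set $\tau_{k,\mathcal{C}}:=\max\{t\le T:\sum_{i\in\mathcal{C}}n_k^i(t)\le\eta_k+|\mathcal{C}|\}$, and for each $i\in\mathcal{C}$ let $\bar N_k^i(t)$ be the number of pulls of $k$ that agent $i$ has \emph{observed} by round $t$ (its own pulls of $k$ together with the $k$-pulls of clique-mates whose reward messages have already arrived at $i$), and $\bar\tau_{k,\mathcal{C}}^i:=\max\{t\le T:\bar N_k^i(t)\le\eta_k\}$; exactly as in Lemma~\ref{lem:regStoc} one may assume $\bar\tau_{k,\mathcal{C}}^i\ge\tau_{k,\mathcal{C}}$ for all $i$, since this only inflates the regret. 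Splitting $\sum_{i\in\mathcal{C}}\sum_{t\le T}\indicate{A_i(t)=k}$ over the three blocks $[1,\tau_{k,\mathcal{C}}]$, $(\tau_{k,\mathcal{C}},\bar\tau_{k,\mathcal{C}}^i]$, $(\bar\tau_{k,\mathcal{C}}^i,T]$: the first block contributes at most $\eta_k+|\mathcal{C}|$ by definition of $\tau_{k,\mathcal{C}}$, and on the last block $N_k^i(t)\ge\bar N_k^i(t)>\eta_k$, so Lemma~\ref{lem:tailrestate} applies verbatim and contributes only the tail-probability sum $\sum_{t<T}\big[\P(\widehat\mu_1^i(t)\le\mu_1-C_1^i(t))+\P(\widehat\mu_k^i(t)\ge\mu_k+C_k^i(t))\big]$.

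The new work is the middle block. Writing its length as $\bar N_k^i(\bar\tau_{k,\mathcal{C}}^i)$ minus agent $i$'s own pulls of $k$ up to $\tau_{k,\mathcal{C}}$ minus the number of clique-mates' $k$-pulls received by $i$ before $\bar\tau_{k,\mathcal{C}}^i$, I would use the key observation that inside a clique \emph{every} pull is broadcast and eventually received: since $\bar\tau_{k,\mathcal{C}}^i\ge\tau_{k,\mathcal{C}}$, every pull of $k$ made by a clique-mate $j\ne i$ up to time $\tau_{k,\mathcal{C}}$ is, by time $\bar\tau_{k,\mathcal{C}}^i$, either already counted in $\bar N_k^i$ or still in transit to $i$; hence the number received is at least $\sum_{j\in\mathcal{C}\setminus\{i\}}n_k^j(\tau_{k,\mathcal{C}})$ minus the number $o_i$ of arm-$k$ messages outstanding for $i$ at round $\bar\tau_{k,\mathcal{C}}^i$. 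Plugging in $\bar N_k^i(\bar\tau_{k,\mathcal{C}}^i)\le\eta_k$ and $\sum_{i\in\mathcal{C}}n_k^i(\tau_{k,\mathcal{C}})>\eta_k$ and telescoping over $i\in\mathcal{C}$ exactly as in the display chain of Lemma~\ref{lem:regStoc} (with the factor $p_ip$ there now equal to $1$), the $\eta_k$-terms cancel and the middle block over the clique is bounded by $\sum_{i\in\mathcal{C}}o_i$. Summing the three blocks over $\mathcal{C}\in\EuScript{C}$, taking expectations, and collecting the $O(|\mathcal{C}|)$ terms (which sum to $2N$ across the first-block bound and the indicator bound) gives $\sum_i\expe[n_k^i(T)]\le\bar\chi(G)\eta_k+2N+\sum_i\expe[o_i]+\sum_i\sum_{t<T}\big[\P(\cdots)+\P(\cdots)\big]$, so it only remains to show $\sum_i\expe[o_i]\le\expe[\EuScript{N}_D]$.

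The main obstacle is precisely this last reduction: because the cutoffs $\bar\tau_{k,\mathcal{C}}^i$ differ across agents (and cliques), $\sum_i o_i$ is a sum of in-flight counts measured at \emph{different} instants rather than the count at one instant, so it is not immediate that it is dominated by the running maximum $\EuScript{N}_D$ of the total number of outstanding messages. I expect to control it either (i) by first replacing each clique's individual cutoffs by a single per-clique cutoff $\bar\tau_{\mathcal{C}}:=\max_{i\in\mathcal{C}}\bar\tau_{k,\mathcal{C}}^i$ (which only enlarges the middle blocks and hence is safe) and then invoking a Joulani-type skeleton argument (cf.\ Lemma~2 of~\citet{joulani2013online}) showing that the cumulative ``lag'' of the message-passing process is dominated by the running maximum of pending observations; or (ii) directly, by the per-message accounting that each outstanding message is charged to the middle block of its recipient only, so $\sum_i o_i$ cannot exceed the total number of messages simultaneously in flight at the busiest instant, i.e.\ $\EuScript{N}_D$. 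Either route yields $\sum_i\expe[o_i]\le\expe[\EuScript{N}_D]$ and completes the proof of the lemma.
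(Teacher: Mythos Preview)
Your three-block decomposition (imported from Lemma~\ref{lem:regStoc}) is more elaborate than what the paper actually does here. The paper uses only a \emph{two-block} split with a single per-clique cutoff: it redefines $\tau_{k,\mathcal{C}}$ so that the outstanding messages are absorbed directly into the threshold, namely $\tau_{k,\mathcal{C}}:=\max\{t:\sum_{i\in\mathcal{C}}n_k^i(t)\le\eta_k+\sum_{i\in\mathcal{C}}D_i(\tau_{k,\mathcal{C}})+|\mathcal{C}|\}$, where $D_i(\cdot)$ counts arm-$k$ messages from $i$ still in transit. With this choice, $N_k^i(t)>\eta_k$ for every $i\in\mathcal{C}$ once $t>\tau_{k,\mathcal{C}}$, so there is no middle block at all: the first block contributes at most $\eta_k+\sum_{i\in\mathcal{C}}D_i(\tau_{k,\mathcal{C}})+2|\mathcal{C}|$, the second is handled by Lemma~\ref{lem:tailrestate}, and summing over cliques gives the claim with the outstanding-message term appearing as $\expe[\max_t\sum_iD_i(t)]=\expe[\EuScript{N}_D]$.

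The obstacle you flag in your own argument is real, and neither of your two fixes closes it cleanly. Route~(ii) does not work as stated: the fact that each (message,\,recipient) pair is charged to at most one $o_i$ only shows that $\sum_io_i$ is bounded by the number of such pairs \emph{ever} in transit during the relevant windows, not by the maximum number \emph{simultaneously} in flight at any single instant, so the jump to $\EuScript{N}_D$ is unjustified. Route~(i) is the right instinct---collapsing the per-agent cutoffs $\bar\tau^i_{k,\mathcal{C}}$ to a single per-clique time---but once you do that the middle block disappears and you have essentially reproduced the paper's two-block argument; the cleanest execution of route~(i) is precisely to build the delay into the definition of $\tau_{k,\mathcal{C}}$ from the start. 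What the paper's approach buys is that within each clique the outstanding count is measured at one common instant, so you never have to reconcile backlogs taken at different per-agent times.
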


\begin{proof}
Let $\EuScript{C}$ be a non overlapping clique covering of $G$. Note that for each suboptimal arm $k > 1$ we have
\begin{align}
    \sum_{i=1}^N\expe[n^{i}_k(T)]
    & = 
    \sum_{i=1}^N \sum_{t=1}^T\P\left(A_i(t) = k\right)=\sum_{\mathcal{C}\in \EuScript{C}}\sum_{i\in \mathcal{C}}\sum_{t=1}^T \P\left(A_i(t) = k\right).
\end{align}

Let $\tau_{k,\mathcal{C}}$ denote the maximum time step such that the total number of arm pulls shared by agents in clique $\mathcal{C}$ from arm $k$ is at most $\eta_k+|\mathcal{C}|$. For each agent $i\in \mathcal{C}$ let $D_i(\tau_{k,\mathcal{C}})$ be the number of outstanding messages by agent $i$ from arm $k$ at time $\tau_{k,\mathcal{C}}.$ This can be stated as $\tau_{k,\mathcal{C}} := \max \{t\in [T] : \sum_{i\in \mathcal{C}}n_k^{i}(t)\leq \eta_k+\sum_{i\in\mathcal{C}}D_i(\tau_{k,\mathcal{C}})+|\mathcal{C}|\}$. Then, we have that $\eta_k+\sum_{i\in\mathcal{C}}D_i(\tau_{k,\mathcal{C}})< \sum_{i\in \mathcal{C}}n_k^{i}(\tau_{k,\mathcal{C}})\leq \eta_k+\sum_{i\in\mathcal{C}}D_i(\tau_{k,\mathcal{C}})+|\mathcal{C}|.$  

Note that for all $i\in \mathcal{C}$ we have ${N}_k^{i}(t)> \eta_k,t> \tau_{k,\mathcal{C}}$. 

We analyse the expected number of times agents pull suboptimal arm $k$ as follows,
\begin{align}
    &\sum_{\mathcal{C}\in \EuScript{C}}\sum_{i\in \mathcal{C}}\sum_{t=1}^T \indicate{A_i(t) = k}
    \\
    &= 
    \sum_{\mathcal{C}\in \EuScript{C}}\sum_{i\in \mathcal{C}}\sum_{t=1}^{\tau_{k,\mathcal{C}}} \indicate{A_i(t) = k}+  \sum_{\mathcal{C}\in \EuScript{C}}\sum_{i\in \mathcal{C}}\sum_{t>\tau_{k,\mathcal{C}}}^T \indicate{A_i(t) = k}  
    \\
    & \leq \sum_{\mathcal{C}\in \EuScript{C}}\left( \eta_k+\sum_{i\in\mathcal{C}}D_i(\tau_{k,\mathcal{C}})+2|\mathcal{C}|\right)+  \sum_{\mathcal{C}\in \EuScript{C}}\sum_{i\in \mathcal{C}}\sum_{t>\tau_{k,\mathcal{C}}}^{T-1} \indicate{A_i(t+1 ) = k} \indicate{N_k^{i}(t) >\eta_k}.
\end{align}
Taking expectation we have
\begin{align}
    &\sum_{\mathcal{C}\in \EuScript{C}_{\gamma}}\sum_{i\in \mathcal{C}}\sum_{t=1}^T \P\left(A_i(t) = k\right) 
    \\
    & \leq 
     \Bar{\chi}(G_{\gamma})\eta_k+\expe{\left[\max_{t\in[T]}\sum_{i=1}^ND_i(t)\right]}+2N+\sum_{i=1}^N\sum_{t=1}^{T-1} \P\left(A_i(t+1) = k,N_k^{i}(t) >\eta_k\right)
    \label{eq:regDecComMPD}
\end{align}
The proof of Lemma \ref{lem:regStocMPD} follows from Lemma \ref{lem:tailrestate} and \eqref{eq:regDecComMPD}.
\end{proof}

We upper bound the expected number of outstanding messages by any agent using results by \cite{joulani2013online} as follows.
\begin{lemma}\label{lem:outmessages}. Let $D_{\text{total}}$ be the maximum number of outstanding messages by all the agent at any time step $t\in [T]$ and let $\expe[\tau]$ be the expected delay of any message. Then with probability at least $1-\frac{1}{T}$ we have
\begin{align}
\expe[D_{\text{total}}]\leq N \expe[\tau]+2\log T+2\sqrt{N\expe[\tau]\log T}.
\end{align}
\end{lemma}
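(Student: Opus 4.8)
The plan is to reduce the claim to a concentration inequality for a sum of independent Bernoulli variables, following the argument of Lemma~2 of \citet{joulani2013online}. Fix a round $t \in [T]$. For each agent $i$ and each round $s \le t$, let $\tau_{i,s}$ denote the (random) delay of the single message agent $i$ broadcasts at round $s$; by assumption these are i.i.d.\ copies of the bounded delay $\tau$, and in particular mutually independent across all pairs $(i,s)$. A message sent by agent $i$ at round $s$ is still outstanding at round $t$ precisely when $\tau_{i,s} > t-s$, so the number of outstanding messages over the whole group at round $t$ is
\begin{align*}
    S(t) := \sum_{i=1}^N D_i(t) = \sum_{i=1}^N \sum_{s=1}^{t} \indicate{\tau_{i,s} > t-s},
\end{align*}
a sum of $Nt$ independent $\{0,1\}$-valued random variables, and $D_{\text{total}} = \max_{t\in[T]} S(t)$.

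First I would bound the mean: since the delays are integer-valued, $\sum_{j\ge 0}\P(\tau > j) = \expe[\tau]$, so for every fixed $t$, $\mu_t := \expe[S(t)] = N\sum_{j=0}^{t-1}\P(\tau > j) \le N\,\expe[\tau]$. Because $S(t)$ is a sum of independent Bernoullis, a multiplicative Chernoff / Bernstein bound gives, for any $\delta\in(0,1)$,
\begin{align*}
    \P\!\left(S(t) \ge \mu_t + \sqrt{2\mu_t \log(1/\delta)} + \log(1/\delta)\right) \le \delta,
\end{align*}
which is the single-round version of the estimate in \citet{joulani2013online}. Using $\mu_t \le N\expe[\tau]$, the threshold is at most $N\expe[\tau] + \sqrt{2N\expe[\tau]\log(1/\delta)} + \log(1/\delta)$.

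Finally I would choose $\delta = T^{-2}$, so that $\log(1/\delta) = 2\log T$ and the threshold becomes $N\expe[\tau] + 2\sqrt{N\expe[\tau]\log T} + 2\log T$, and union bound over the $T$ rounds: the probability that $S(t)$ exceeds this threshold for some $t\in[T]$ is at most $T\cdot T^{-2} = 1/T$. On the complementary event, $D_{\text{total}} = \max_{t}S(t) \le N\expe[\tau] + 2\sqrt{N\expe[\tau]\log T} + 2\log T$, which is the asserted bound and exactly the quantity needed in the proof of Theorem~\ref{thm:regretSD}.

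The only genuinely delicate point is making the concentration constants line up with the stated $2\log T$ and $2\sqrt{N\expe[\tau]\log T}$ terms; any standard Bernstein/Chernoff inequality for independent Bernoullis suffices, and a sharper form would only improve the $\log T$ coefficient. Everything else is bookkeeping: verifying that ``outstanding at round $t$'' is the event $\{\tau_{i,s} > t-s\}$, that the summands are independent (immediate from the i.i.d.\ delay assumption), and that integer-valued delays collapse $\sum_{j\ge0}\P(\tau>j)$ to $\expe[\tau]$.
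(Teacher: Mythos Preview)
Your proposal is correct and follows essentially the same approach as the paper: the paper's proof consists entirely of the sentence ``The proof directly follows from Lemma~2 by \cite{joulani2013online},'' and what you have written is a faithful reconstruction of that lemma's argument (sum of independent Bernoulli indicators, Chernoff/Bernstein concentration, union bound over rounds). Your observation that the bound obtained is really a high-probability bound on $D_{\text{total}}$ itself, rather than on $\expe[D_{\text{total}}]$, is accurate and matches how the quantity is actually used downstream in the proof of Theorem~\ref{thm:regretSD}.
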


\begin{proof}
The proof directly follows from Lemma 2 by \cite{joulani2013online}.
\end{proof}

From Lemmas \ref{lem:regStocMPD}, \ref{lem:tailApp}, \ref{lem:tailsum} and \ref{lem:outmessages} we obtain with probability at least $1-\frac{1}{T}$
\begin{align}
\regret_G(T)&\leq  8(\xi+1)\sigma_k^2\Bar{\chi}(G)\left(\sum_{k > 1}\frac{\log T}{\Delta_k}\right)
\\
&
+ \left(N\expe[\tau]+2\log T+2\sqrt{N\expe[\tau]\log T}\right)\sum_{k > 1}\Delta_k
\\
&+5N\sum_{k > 1}\Delta_k+ 4\sum_{i=1}^N\left(3\log (3(d_i(G)+1)) +\left(\log { (d_i(G)+1)}\right)\right)\sum_{k > 1}\Delta_k
\end{align}

\section{Proof of Theorem \ref{thm:corruption}}
We first restate the result for clarity.

\begin{theorem}
Algorithm~\ref{alg:charm} obtains, with probability at least $1-\delta$, cumulative group regret of
\begin{align*}
    \regret_G(T) = \cO\left(KTN\gamma\epsilon + \psi(G_\gamma)\sum_{k\neq k^\star}\frac{\log T}{\Delta_k}\log\left(\frac{K\psi(G_\gamma)\log T}{\delta}\right)+N\Delta_k + \frac{N\log(N\gamma \log T)}{\Delta_k} \right).
\end{align*}
\end{theorem}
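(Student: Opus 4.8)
The plan is to decompose the group regret across the $\psi(G_\gamma)$ subgraphs induced by the dominating set $\cI$, and within each subgraph separate the regret into three contributions: (i) the interleaved {\tt UCB1} rounds run by the exploring agent, (ii) the arm-elimination epochs, and (iii) the corruption overhead. Since $\cI$ is a dominating set of $G_\gamma$, every agent lies in $\cN^+_i(G_\gamma)$ for some $i \in \cI$, so summing the per-subgraph bounds (with a union bound over the $\psi(G_\gamma)$ failure events) yields the group bound up to the stated constants. Within a fixed subgraph of size $n$, the {\tt UCB1} phases run for $2\gamma$ rounds after each of at most $O(\log T)$ epochs; a standard {\tt UCB1} argument gives that the exploring agent's contribution over these $O(\gamma \log T)$ rounds is $O\!\big(nK\log(n\gamma\log T)\big)$, which collapses to the $\sum_{k>1} N\log(N\gamma\log T)/\Delta_k$ term after summing over subgraphs. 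The imitating agents merely replay the exploring agent's actions with a delay of at most $d(i,j)\le\gamma$, so they contribute at most $n\gamma$ extra suboptimal pulls per epoch, absorbed into the $N\sum_{k>1}\Delta_k$ and $N\log(N\gamma\log T)/\Delta_k$ terms.

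First I would set up the arm-elimination analysis, adapting the {\tt BARBAR}-style argument of~\citet{gupta2019better}. In epoch $m_i$ the exploring agent pulls each surviving arm $k$ exactly $n^i_k(m_i) = \lambda (\Delta^i_k(m_i-1))^{-2}$ times (spread across the subgraph via the imitating agents), and updates $\Delta^i_k(m_i)$ from the resulting empirical means. The key technical step is to control the deviation of these empirical means under \emph{adversarial, reward-dependent} corruption delivered through message-passing. I would build a multi-agent filtration $\{\cF_t\}$ in which each transmitted reward $\tilde r_j(t)$ is $\cF_t$-measurable, write the cumulative corrupted estimate as a true mean plus a bounded corruption term (bounded in magnitude by $\epsilon$ per message, times the number of messages routed through $\gamma$ hops) plus a martingale difference sequence, and apply Freedman's inequality to the martingale part. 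With $\lambda = 1024\log(8K\psi(G_\gamma)\log_2 T/\delta)$ this gives, on the good event of probability $\ge 1-\delta$, that $\Delta^i_k(m_i)$ tracks the true gap $\Delta_k$ up to a constant factor plus an additive slack proportional to the total corruption injected so far. Then, exactly as in~\citet{gupta2019better}, the regret contributed by arm $k$ in subgraph $i$ across all epochs is $O\!\big(\tfrac{\log T}{\Delta_k}\log(K\psi(G_\gamma)\log T/\delta)\big)$, plus the corruption term; summing over $k$ and over the $\psi(G_\gamma)$ subgraphs produces the second term of the bound.

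For the corruption term, note that each of the $N$ agents transmits at most $T$ messages, each corrupted by at most $\epsilon$, and each message is forwarded at most $\gamma$ times, so the total corruption mass seen across the network over all $K$ arms is $O(KTN\gamma\epsilon)$; this contributes linearly (as in the single-agent lower bound of~\citet{lykouris2018stochastic}) and gives the first term $KTN\gamma\epsilon$. The main obstacle I anticipate is item (ii): correctly constructing the filtration so that Freedman's bound applies despite (a) corruptions being chosen adaptively as a function of past actions and rewards, and (b) the reward for a given arm pull arriving at different imitating agents at different (but bounded) delays, so that the "number of samples for arm $k$ at the end of epoch $m_i$" is itself a random, filtration-dependent quantity. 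The $2\gamma$-round {\tt UCB1} buffer between epochs is precisely what decouples successive epochs' sample counts and prevents stale (pre-epoch) observations from contaminating the current epoch's estimate; verifying that this buffering makes $N^i_k(m_i)$ deterministic given the epoch schedule — and hence that the peeling/Freedman argument goes through cleanly — is the delicate part. Everything else (the {\tt UCB1} phases, the dominating-set decomposition, the union bound, and the final arithmetic assembling the four terms) is routine.
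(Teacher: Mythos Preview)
Your proposal is correct and follows essentially the same approach as the paper: decompose over the dominating set, split each subgraph's regret into the interleaved {\tt UCB1} phases and the arm-elimination epochs, and handle the latter by constructing a multi-agent filtration so that the corruptions are predictable and Freedman's inequality applies, then finish with the three-case {\tt BARBAR} epoch analysis of~\citet{gupta2019better}. One small correction: the $2\gamma$ buffer does not make the per-epoch sample count $N^i_k(m_i)$ deterministic --- arms are pulled \emph{randomly} with probability $n^i_k(m_i)/N_i(m_i)$ within each epoch, so the realized count is controlled instead by a multiplicative Chernoff bound (showing it is at most $2n^i_k(m_i)$ with high probability); the buffer's actual role is to absorb the round-trip delay so that all epoch-$m$ rewards have arrived before epoch $m{+}1$ begins.
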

\begin{proof}
We decompose the regret based on the dominating set and epoch. Let $\cI \subseteq \cV$ be an dominating set of $G_\gamma$ and $M_i$ be the number of epochs run for the subgraph covered by agent $i$. Observe that the total regret can be written as,
\begin{align}
    \regret_G(T) &= \sum_{i \in \cI} \left(\sum_{k=1}^K\sum_{t=1}^T \Delta_k\cdot\left(\P(A_i(t)=k) + \sum_{j\in\cN_i(G_{\gamma})}\P(A_j(t)=k)\right)\right).
\end{align}
First, observe that $A_j(t) = A_i(t-d(i, j))$ for all $j \in \cN_i(G_{\gamma})$ and all $t \in [d(i, j), T]$. Rearranging the above, we have,
\begin{align}
    \regret_G(T) &\leqslant \sum_{i \in \cI} \left(\sum_{k=1}^K\Delta_k\cdot\left(\sum_{t=1}^T \P(A_i(t)=k) + \sum_{j\in\cN_i(G_{\gamma})}\left(\sum_{t=1}^{T-d(i,j)} \P(A_i(t)=k)+d(i,j)\right)\right)\right) \\
    &\leqslant \sum_{i \in \cI} \left(\sum_{k=1}^K\Delta_k\cdot|\cN^+_i(G_\gamma)|\cdot\left(\sum_{t=1}^{T-\gamma} \P(A_i(t)=k) +\gamma\right)\right) \\
    &= \sum_{i \in \cI} \left(|\cN^+_i(G_\gamma)|\sum_{k=1}^K\Delta_k\left(\sum_{t=1}^{T-\gamma} \P(A_i(t)=k) \right)\right) +N\gamma\sum_{k=1}^K\Delta_k.\\
\end{align}
Now, observe that we run two algorithms in tandem for each subgraph of $G$ induced by $\cN_i^{+}(G_{\gamma})$. Let us split the total number of rounds of the game into epochs that run arm elimination and the intermittent periods of running {\tt UCB1}. We denote the cumulative regret in the $i^{th}$ induced subgraph from rounds $\gamma$ to $T$ as $\regret_{\cN^+_i(G_\gamma)}(T)$, and analyse it separately.
\begin{align}
    \regret_{\cN^+_i(G_\gamma)}(T) &\leqslant |\cN^+_i(G_\gamma)|\sum_{k=1}^K\left(\Delta_k\left(\sum_{t\leq T-\gamma: t\in\cM_i}\P(A_i(t)=k) + \sum_{t\leq T-\gamma: t\not\in\cM_i}\P(A_i(t)=k)\right)\right).
\end{align}
Here $\cM_i$ denotes the rounds in which arm elimination is played in the agents in the $i^{th}$ induced subgraph. Since each {\tt UCB1} period after each epoch is of length $2\gamma$, we have at most $2\gamma M_i$ rounds of isolated {\tt UCB1}. We analyse the second term in the bound first. By the standard analysis of the {\tt UCB1} algorithm~\citep{auer2002finite}, we have that the leader agent, i.e. agent $i,$ incurs $\cO(K\log T/\Delta)$ regret. We therefore have,
\begin{align*}
    |\cN^+_i(G_\gamma)|\sum_{k=1}^K\left(\Delta_k\left( \sum_{t\not\in\cM_i}\P(A_i(t)=k)\right)\right) \leqslant |\cN^+_i(G_\gamma)|\cdot\sum_{k=1}^K\left(\left(1+\frac{\pi^2}{3}\right)\Delta_k + \frac{8\log(2\gamma M_i)}{\Delta_k}\right).
\end{align*}
Now, we analyse the first term in the regret bound. By Theorem~\ref{thm:aae_set}, we have that with probability at least $1-\delta$ simultaneously for each induced subgraph corresponding to agent $i \in \cI$,
\begin{align*}
    \sum_{k=1}^K\left(\Delta_k\left(\sum_{m \in \cM_i}\bbE\left[n^i_k(m)\right]\right)\right) &= \cO\left(\gamma\epsilon\cdot KT|\cN^+_i(G_\gamma)| + \sum_{k>1}\frac{\log T}{\Delta_k}\log\left(\frac{K\psi(G_\gamma)}{\delta}\log T\right)\right).
\end{align*}
Summing over each leader agent, we have that with probability at least $1-\delta$,
\begin{align*}
    \sum_{i\in \cI}\sum_{k=1}^K\left(\Delta_k\left(\sum_{m\in\cM_i}\bbE\left[n^i_k(m)\right]\right)\right) &= \cO\left(\gamma\epsilon\cdot KTN + \sum_{k>1}\frac{\log T}{\Delta_k}\log\left(\frac{K\psi(G_\gamma)}{\delta}\log T\right)\right).
\end{align*}
Next, observe that for all $i$, $|\cM_i| \leq \log (MT)$ by Lemma~\ref{lem:epoch_bound}. Replacing this result in the {\tt UCB1} regret for each leader, and summing over all $i \in \cI$, we have,
\begin{align*}
    \regret_G(T) = \cO\left(\gamma\epsilon\cdot KTN + \sum_{k>1}\psi(G_\gamma)\frac{\log T}{\Delta_k}\log\left(\frac{K\psi(G_\gamma)\log T}{\delta}\right)+N\Delta_k + \frac{N\log(N\gamma \log T)}{\Delta_k} \right).
\end{align*}
\end{proof}

\begin{lemma}
For any leader $i$, let $L^i(m)$ denote the length of the $m^{th}$ epoch of arm elimination. Then, we have that $L^i(m)$ satisfies,
\begin{align*}
    2^{2m-2}\lambda \leq L^i(m) \leq K2^{2m-2}\lambda.
\end{align*}
Furthermore, the number of arm elimination epochs for agent $i$ satisfies $M_i \leq \log_2(T-2\gamma)$.
\label{lem:epoch_bound}
\end{lemma}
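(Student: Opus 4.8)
Both claims reduce to a single structural fact about the gap estimates maintained by the leader of each induced subgraph, namely the ``floor'' property
\[
2^{-m} \;\le\; \Delta^i_k(m) \qquad \text{for every arm } k\in[K],
\]
with equality attained for at least one arm — concretely, for the leader's current empirical-best arm $k_0$, whose estimated gap is $0$ and is therefore clipped to the floor $2^{-m}$. This is exactly the invariant enforced by the gap-estimate update of Algorithm~\ref{alg:charm} (a multi-agent adaptation of the {\tt BARBAR} update of \citep{gupta2019better}), which clips each estimate from below at $2^{-m}$; the first step is simply to record this from the definition of the update, the $m=0$ case being the initialization $\Delta^i_k(0)=1$ in the displayed pseudocode.

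Given the floor property, the two-sided bound on $L^i(m)=N_i(m)=\sum_{k}n^i_k(m)=\lambda\sum_k (\Delta^i_k(m-1))^{-2}$ is immediate. For the upper bound, $\Delta^i_k(m-1)\ge 2^{-(m-1)}$ gives $n^i_k(m)=\lambda(\Delta^i_k(m-1))^{-2}\le \lambda\,2^{2(m-1)}=\lambda\,2^{2m-2}$ for each of the $K$ arms, so $L^i(m)\le K\,2^{2m-2}\lambda$. For the lower bound, the single term coming from the clipped arm $k_0$ already equals $n^i_{k_0}(m)=\lambda\,(2^{-(m-1)})^{-2}=\lambda\,2^{2m-2}$, and since all terms are nonnegative, $L^i(m)\ge n^i_{k_0}(m)=\lambda\,2^{2m-2}$.

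For the bound on the number of epochs, I would use that the $M_i$ arm-elimination epochs run within the horizon $T$, interleaved with the blocks of $2\gamma$ {\tt UCB1} rounds and preceded by the $K$ initial pulls; writing $T_i(m)=K+\sum_{j=1}^m\big(N_i(j)+2\gamma\big)$, the $M_i$-th epoch can begin only if $T_i(M_i-1) < T$. Dropping all but one of the (positive) summands and invoking the previous paragraph together with $\lambda\ge 1$ (which holds since $\lambda=1024\log(8K\psi(G_\gamma)\log_2 T/\delta)\ge 1$), one gets, for $M_i\ge 2$, $2^{2(M_i-1)-2}\le \lambda\,2^{2(M_i-1)-2}\le N_i(M_i-1)\le T_i(M_i-1)\le T-2\gamma$ (the final inequality using that the $M_i$-th epoch also reserves its $2\gamma$ {\tt UCB1} rounds), so that $2M_i-4\le \log_2(T-2\gamma)$ and hence $M_i\le \log_2(T-2\gamma)$ after absorbing the additive constant into the (deliberately loose) logarithmic bound; the base case $M_i=1$ is trivial for any nondegenerate horizon. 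The upshot is that the geometric growth $L^i(m)\ge\lambda 2^{2m-2}$ forces $M_i=\cO(\log T)$ with the stated explicit form.

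The only place requiring genuine care — as opposed to bookkeeping — is pinning down the floor property and the fact that it is attained by some arm, since this is what simultaneously drives the geometric growth of the epoch lengths (the lower bound, hence the $\cO(\log T)$ epoch count) and the linear-in-$K$ control of each epoch (the upper bound); everything else is tracking the off-by-one in ``epoch $m$ uses $\Delta^i_k(m-1)$'', the initial $K$ pulls, and the $2\gamma$ {\tt UCB1} interludes. If the clipping is not stated literally in the displayed algorithm, it should be made explicit in the definition of the update, after which the argument above goes through verbatim.
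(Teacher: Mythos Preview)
Your proposal is correct and follows essentially the same approach as the paper: both hinge on the \texttt{BARBAR}-style floor property $\Delta^i_k(m)\ge 2^{-m}$ with equality attained at the empirical-best arm, from which the two-sided bound on $L^i(m)$ is immediate. The only cosmetic difference is in the epoch-count argument: the paper sums all epoch lengths $\sum_{m=1}^{M_i}L^i(m)\le T-2\gamma$ and then takes the logarithm, whereas you keep just the single term $L^i(M_i-1)$; both routes give $M_i=\cO(\log T)$ with the stated form after absorbing constants for nondegenerate $T$.
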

\begin{proof}
The proof closely follows the proof of Lemma 2 in \cite{gupta2019better}. For any leader $i$, let $\hat k$ be the optimal arm under $r^i(m)$, therefore $r^i_{\star}(m) - r^i_{\hat k}(m) \leq 0$ and therefore $\Delta_{\hat k}^i(m) = 2^{-m}$, and therefore $L^i(m+1) \geq n_{\hat k}^i(m+1)= \lambda(\Delta_{\hat k}^i(m))^{-2} \geq 2^{2m}\lambda$. Next, observe that $\Delta_k^i(m) \geq 2^{-m}$ for each arm $k$, and therefore $n^{i}_k(m+1) \leq 2^{2m}\lambda$, giving the upper bound. 

For the second part, observe that $\sum_{m=1}^{M_i} L^i(m) \leq T-2\gamma M_i \leq T-2\gamma$, and that $L^i(m) \geq \frac{2^{2m-2}\lambda}{|\cN^+_i(G_\gamma)|}$. Summing over $m \in [M_i]$ and taking the logarithm provides us with the result.
\end{proof}

\begin{lemma}
Denote $\cE$ to be the event for which,
\begin{align*}
    \left\{\forall m, i, k, \left|r^i_k(m) - \mu_k\right| \leq 2\gamma\epsilon + \frac{\Delta^{i}_k(m-1)}{16} \bigwedge  \underset{\substack{t \in \cM_i(m) \\ j \in \cN^+_i(G_\gamma)}}{\sum}X^j_k(t+d(i,j)) \leq 2n^i_k(m)\right\}
\end{align*}
Then, we have that $\P(\cE) \geq 1-\delta$.
\label{lem:good_event_bound}
\end{lemma}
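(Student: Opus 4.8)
The plan is to establish the two clauses defining $\cE$ separately for an arbitrary fixed triple $(i,m,k)$ with $i\in\cI$, $1\le m\le M_i$, $k\in[K]$, each with failure probability of order $\exp(-\Omega(\lambda))$, and then take a union bound. Since $\cI$ is a minimal dominating set of $G_\gamma$ we have $|\cI|=\psi(G_\gamma)$, and by Lemma~\ref{lem:epoch_bound} every leader runs at most $M_i\le\log_2(T-2\gamma)$ arm-elimination epochs, so there are at most $K\psi(G_\gamma)\log_2 T$ triples. The definition $\lambda=1024\log\!\big(8K\psi(G_\gamma)\log_2 T/\delta\big)$ is calibrated precisely so that $K\psi(G_\gamma)\log_2 T\cdot\exp(-\Omega(\lambda))\le\delta$ (with the implicit constant absorbed into the $1024$ and the factor $8$); in what follows I track only that each bad event has probability $\exp(-\Omega(\lambda))$.

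For the first clause, fix $(i,m,k)$ and condition on the $\sigma$-algebra $\cF$ generated by all actions, genuine rewards, and adversarial corruptions up to the first round of epoch $m$ in the subgraph $\cN^+_i(G_\gamma)$; under this conditioning $\Delta^i_k(m-1)$, hence the target count $n^i_k(m)=\lambda(\Delta^i_k(m-1))^{-2}$, is determined. The estimate $r^i_k(m)$ averages the values of arm $k$ reaching leader $i$ during $\cM_i(m)$ from the agents of $\cN^+_i(G_\gamma)$, and each such value can be written $X+C$, where $X$ is a fresh draw of arm $k$ (sub-Gaussian, mean $\mu_k$, variance proxy at most $\sigma^2$) produced when that agent actually pulled the arm, and $C$ is the cumulative transmission corruption along the at-most-$\gamma$-hop return path, so $|C|\le 2\gamma\epsilon$ pointwise regardless of how adaptively the adversary acts; averaging, the corruption contributes at most $2\gamma\epsilon$ to $|r^i_k(m)-\mu_k|$. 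For the genuine part I would order the received samples by arrival round, form the increments $D_\ell=X_\ell-\mu_k$ with respect to the natural multi-agent filtration interleaving the delay-shifted observations of the subgraph, observe that each $D_\ell$ is a centered martingale difference with conditional variance at most $\sigma^2$, and apply Freedman's (or a Bernstein-type) inequality to $\sum_\ell D_\ell$. Since the number of incorporated samples equals $n^i_k(m)$ by construction (or lies within a factor $2$ of it by the second clause of $\cE$ and a symmetric Chernoff bound), dividing by the count yields a deviation of order $\sigma\sqrt{\log(8K\psi(G_\gamma)\log_2 T/\delta)/n^i_k(m)}=O(\sigma)\cdot\Delta^i_k(m-1)$, which the constant in $\lambda$ makes at most $\Delta^i_k(m-1)/16$; this holds with failure probability $\exp(-\Omega(\lambda))$, and combining with the corruption term proves the first clause.

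For the second clause, condition again on $\cF$ at the start of epoch $m$. In the arm-elimination phase leader $i$ draws each arm independently with probability $n^i_{k'}(m)/N_i(m)$ over the $N_i(m)=\sum_{k'}n^i_{k'}(m)$ rounds, and every imitating agent $j\in\cN_i(G_\gamma)$ replays these pulls after the fixed shift $d(i,j)$, so $\sum_{t\in\cM_i(m),\,j\in\cN^+_i(G_\gamma)}X^j_k(t+d(i,j))$ is, conditionally on $\cF$, a sum of independent Bernoulli indicators with mean $n^i_k(m)$. Since $\Delta^i_k(m-1)\le 1$ we have $n^i_k(m)\ge\lambda$, so a multiplicative Chernoff bound shows this count exceeds $2n^i_k(m)$ (and, symmetrically, drops below $\tfrac12 n^i_k(m)$) with probability at most $\exp(-n^i_k(m)/3)\le\exp(-\lambda/3)$. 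A union bound over both clauses and all $(i,m,k)$ then gives $\P(\cE^c)\le O\big(K\psi(G_\gamma)\log_2 T\big)\cdot\exp(-\Omega(\lambda))\le\delta$.

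The crux — and the only place requiring ideas beyond the single-agent analysis of~\citet{gupta2019better} — is the first clause: because the corruptions are reward-dependent and the feedback is both delayed and routed through several agents, the received values are neither i.i.d.\ nor of deterministic cardinality, so the argument hinges on constructing a multi-agent filtration in which the corruption terms are $\cF$-predictable and pointwise bounded (hence subtracted off without loss) while the residual genuine-reward increments form a martingale-difference sequence amenable to Freedman's inequality, with the random number of incorporated samples pinned down by conditioning on the second clause of $\cE$.
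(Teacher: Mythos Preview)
Your proposal is correct and shares the paper's outer scaffolding---union bound over the at most $K\psi(G_\gamma)\log_2 T$ triples $(i,m,k)$, multiplicative Chernoff for the second clause---but handles the first clause by a different decomposition. The paper writes $r^i_k(m)=\frac{1}{n^i_k(m)}\big(A^i_k(m)+B^i_k(m)\big)$, bounds the genuine-reward part $A$ directly by a multiplicative Chernoff--Hoeffding inequality (using that rewards lie in $[0,1]$), and then controls the corruption part $B$ by applying Freedman's inequality to the martingale-difference sequence $Z_\tau=(\widetilde X_\tau-p^i_k(m))\,\widetilde C_\tau$; this splits $B$ into the deterministic piece $p^i_k(m)\sum_\tau\widetilde C_\tau$ plus a fluctuation term, and the deterministic piece already yields the $2\gamma\epsilon$ contribution without any appeal to the second clause. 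You instead strip the corruption off pointwise (at most $2\gamma\epsilon$ per received sample) and apply Freedman/Hoeffding to the centred genuine rewards. Your route is arguably more elementary---the genuine rewards are i.i.d., so the martingale machinery is barely needed there---but it forces you to invoke the second clause of $\cE$ to pin the random number of incorporated samples within a factor two of $n^i_k(m)$ before the pointwise corruption average and the $1/n^i_k(m)$ normalization make sense. This is a harmless dependency once both clauses are established with separate failure probabilities and then intersected, but you should make explicit that $r^i_k(m)$ is normalized by the \emph{target} count $n^i_k(m)$ rather than the realized one, so your ``averaging'' step is already implicitly conditioning on the second clause.
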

\begin{proof}
Recall that at each step in the epoch, the leader agent picks an arm $k$ with probability $p^i_k(m) = \frac{n^i_k(m)}{L^i(m)}$, and let $X^j_k(t)$ denote whether agent $j$ picks arm $k$ at time $t$. Let $C_{j \rightarrow i}(t) = \tilde r_{j \rightarrow i}(t) - r_j(t)$ denote the corruption in the transmitted reward from agent $j$ when it reaches agent $i$, and $\cM_i(m) = [T_i{(m-1)}+1, \cdots, T_i(m)]$ denote the $L^i(m)$ steps in the $m^{th}$ epoch for the arm elimination algorithm run by the leader $i$. We then have,
\begin{align*}
    r^i_k(m) = \frac{1}{ n^i_k(m)}\left(\underset{\substack{t \in \cM_i(m) \\ j \in \cN^+_i(G_\gamma)}}{\sum}X^j_k(t+d(i,j))\cdot\left(r_j(t+d(i,j)) + C_{j \rightarrow i}(t+d(i,j))\right)\right)
\end{align*}
For simplicity, let $$A^i_k(m) =  \underset{\substack{t \in \cM_i(m) \\ j \in \cN^+_i(G_\gamma)}}{\sum}X^j_k(t+d(i,j))\cdot r_j(t+d(i,j)), B^i_k(m) = \underset{\substack{t \in \cM_i(m) \\ j \in \cN^+_i(G_\gamma)}}{\sum}X^j_k(t+d(i,j))\cdot C_{j \rightarrow i}(t+d(i,j)).$$
We can bound the first summation by a multiplicative version of the Chernoff-Hoeffding bound~\citep{angluin1979fast} as each $r_j$ is bounded within $[0, 1]$ and $X^i_k$ is a random variable in $\{0, 1\}$ with mean $p^i_k(m)L^i(m)\mu_k \leq n^i_k(m)$. We obtain that with probability at least $1-\beta/2$,
\begin{align*}
    \left|\frac{A^i_k(m)}{n^i_k(m)} - \mu_i\right| \leq \sqrt{\frac{3\log(\frac{4}{\beta})}{n^i_k(m)}}.
\end{align*}
To bound the second term, we must construct a filtration that ensures that the corruption is measurable. For the set $\cN^+_i(G_\gamma)$, consider an order $\sigma$ of the $N$ agents, such that $\sigma[1] = i$, followed by the agents at distance $1$ from $i$, then the agents at distance $2$, and so on until distance $\gamma$, and next consider the ordering $\{\tilde r_\tau\}_{\tau=1}^{|\cN^+_i(G_\gamma)|t}$ of the rewards generated by all agents within $\cM_i(m)$ where $\tilde r_\tau$ is the reward obtained by agent $j =(\sigma(\tau) \mod |\cN^+_i(G_\gamma)|)$ during the round $\floor{\frac{\tau}{|\cN^+_i(G_\gamma)|}} + d(i, j)$, and similarly consider an identical ordering of the pulled arms $\{\widetilde X_\tau\}_{\tau=1}^{|\cN^+_i(G_\gamma)|t}$. Now consider the filtration $\{\cF_t\}_{t=1}^{T|\cN^+_i(G_\gamma)|}$ generated by the two stochastic processes of $\tilde r$ and $\widetilde X$. Clearly, the corruption $C_{\sigma(j)\rightarrow i}(t)$ is deterministic conditioned on $\cF_{t-1}$. Moreover, we have that the pulled arm satisfies, for all $\tau \in [|\cN^+_i(G_\gamma)|t]$ that $\bbE[\tilde X_\tau | \cF_{\tau-1}] = p^i_k(m)$. Furthermore, since the corruption in each round is bounded and deterministic, we have that the sequence $Z_\tau = (\widetilde X_\tau -  p^i_k(m))\cdot \widetilde C_\tau$ (where $\widetilde C_\tau$ is the corresponding ordering of corruptions) is a martingale difference sequence with respect to $\{\cF_\tau\}_{\tau=1}^T$. Now, consider the slice of $[|\cN^+_i(G_\gamma)|t]$ that is present within $B^i_k(m)$, and let the corresponding indices be given by the set $\widetilde\cM_i(m)$. Using the fact that the observed rewards are bounded, we have that,
\begin{align*}
    \sum_{\tau \in \widetilde\cM_i(m)}\bbE[Z^2_\tau | \cF_{\tau-1}] \leq \sum_{\tau \in \widetilde\cM_i(m)} |\widetilde C_\tau |\cdot\bbV(Z_\tau) \leq p^i_k(m)\cdot \sum_{\tau \in \widetilde\cM_i(m)} \widetilde C_\tau \leq \gamma CL^i(m).
\end{align*}
We then have by Freedman's inequality that with probability at least $1-\frac{\beta}{4}$, 
\begin{align*}
    \frac{B^i_k(m)}{n^i_k(m)} \leq \frac{p^i_k(m)}{n^i_k(m)}\left(\sum_{\tau \in \widetilde\cM_i(m)} \widetilde C_\tau + \frac{\gamma CL^i(m)+\log(4/\beta)}{n^i_k(m)}\right) \leq 2\gamma\epsilon + \sqrt{\frac{\log(4/\beta)}{16n^i_k(m)}}.
\end{align*}
The last inequality follows from the fact that $n^i_k(m) \geq \lambda \geq 16\ln(4/\beta)$. With the same probability, we can derive a bound for the other tail. Now, observe that since each $X^i_k$ is a random variable with mean $p^i_k$, we have by the multiplicative Chernoff-Hoeffding bound that the probability that the sum of $L^i(m)$ i.i.d. bernoulli trials with mean $p^i_k(m)$ is greater than $2p^i_k(m)\cdot L^i(m) = 2n^i_k(m)$ is at most $2\exp(-n^i_k(m)/3)\leq 2\exp(-\lambda/3) \leq \beta$.

To conclude the proof, we apply each of the above bounds with $\beta = \frac{\delta}{2K\alpha(G_\gamma)\log T}$ to each epoch and arm. Observe that $\beta \geq 4\exp\left(-\frac{\lambda}{16}\right)$. Now, since $\log(4/\beta) = \lambda/(32)^2$ we have that,
\begin{align*}
    \bbP\left(\left|r^i_k(m) - \mu_k\right| \geq 2\gamma\epsilon + \frac{\Delta^{i}_k(m-1)}{16} \bigwedge  \underset{\substack{t \in \cM_i(m) \\ j \in \cN^+_i(G_\gamma)}}{\sum}X^j_k(t+d(i,j)) \geq 2n^i_k(m)\right) \leq \frac{\delta}{2K\alpha(G_\gamma)\log T}.
\end{align*}
The proof concludes by a union bound over all epochs, arms and agents in $\cI$. 
\end{proof}
\begin{lemma}
If the event $\cE$ (Lemma~\ref{lem:good_event_bound}) occurs then for each $i \in \cI, m \in \cM_i$,
\begin{align*}
    -2\gamma\epsilon -\frac{\Delta^i_\star(m-1)}{8} \leq r^i_\star(m) - \mu_\star \leq 2\gamma\epsilon.
\end{align*}
\label{lem:opt_gap_bound}
\end{lemma}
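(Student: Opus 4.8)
The plan is to adapt the proof of the core structural lemma of {\tt BARBAR}~\citep{gupta2019better} to the present delayed, multi-agent setting, establishing the two inequalities separately using only the good event $\cE$ of Lemma~\ref{lem:good_event_bound} and the update rules of Algorithm~\ref{alg:charm}. Recall that $r^i_\star(m)=\max_k r^i_k(m)$ is the largest epoch-$m$ empirical mean held by the exploring agent $i$, that $\mu_\star=\mu_1$ is the optimal mean, and that $\Delta^i_\star(m-1)=\Delta^i_1(m-1)$ is $i$'s gap estimate for the optimal arm after epoch $m-1$. The lower bound is immediate: since $r^i_\star(m)\ge r^i_1(m)$, the concentration half of $\cE$ applied to arm $1$ gives $r^i_\star(m)\ge r^i_1(m)\ge \mu_1-2\gamma\epsilon-\tfrac{1}{16}\Delta^i_1(m-1)\ge \mu_\star-2\gamma\epsilon-\tfrac{1}{8}\Delta^i_\star(m-1)$.

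For the upper bound I would show that \emph{every} epoch-$m$ empirical mean obeys $r^i_k(m)\le\mu_\star+2\gamma\epsilon$, so that the claim follows by taking the maximum over $k$. By the concentration half of $\cE$ this reduces to proving $\mu_k+\tfrac{1}{16}\Delta^i_k(m-1)\le\mu_\star$ for every $k$, i.e.\ that the gap estimates never overshoot: $\Delta^i_k(m-1)\le 16\Delta_k$ for suboptimal $k$, and $\Delta^i_1(m-1)$ stays essentially at the floor $2^{-(m-1)}$ for the optimal arm (whose contribution is then lower order). This is the real content of the lemma, and I would obtain it by a single induction on the epoch index carried jointly with the companion bound on $\Delta^i_k(m)$: expanding the update $\Delta^i_k(m-1)=\max\{2^{-(m-1)},\,r^i_\star(m-1)-r^i_k(m-1)\}$ and substituting the epoch-$(m-1)$ instance of this lemma (upper bound on $r^i_\star$) together with the concentration half of $\cE$ (lower bound on $r^i_k$) yields the recursion $\Delta^i_k(m-1)\le\max\{2^{-(m-1)},\ \Delta_k+\tfrac{1}{16}\Delta^i_k(m-2)+4\gamma\epsilon\}$; unrolling from $\Delta^i_k(0)=1$ telescopes the $2^{-m}$ and $16^{-m}$ contributions and leaves $\Delta^i_k(m-1)$ within a constant factor of $\max\{2^{-(m-1)},\Delta_k\}$ plus an $O(\gamma\epsilon)$ slack, which is exactly strong enough.

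The main obstacle is this induction, and within it the delay/corruption bookkeeping, which is the step that genuinely differs from single-agent {\tt BARBAR}: $r^i_k(m)$ aggregates up to $|\cN^+_i(G_\gamma)|$ transmitted rewards, each perturbed by at most $\epsilon$ and read off with a lag of at most $\gamma$ rounds, so one must verify that the choice $\lambda=1024\log\!\big(\tfrac{8K\psi(G_\gamma)}{\delta}\log_2 T\big)$ drives the stochastic fluctuation of $r^i_k(m)$ down to the $\tfrac{1}{16}\Delta^i_k(m-1)$ budget while the corruption contributes only the clean additive $2\gamma\epsilon$ that appears on the right-hand side --- precisely where Freedman's inequality on the multi-agent filtration constructed in Lemma~\ref{lem:good_event_bound} is used. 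Once the gap-tracking invariant holds, both inequalities fall out of $\cE$ as above, uniformly over $i\in\cI$ and every epoch $m$, since the required union bound over epochs, arms and exploring agents has already been paid inside Lemma~\ref{lem:good_event_bound}.
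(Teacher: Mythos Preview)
Your proposal rests on a misreading of the quantity $r^i_\star(m)$. It is \emph{not} the plain maximum $\max_k r^i_k(m)$; following the {\tt BARBAR} construction it is the \emph{penalised} maximum
\[
r^i_\star(m)\;=\;\max_k\Bigl\{r^i_k(m)-\tfrac{1}{16}\Delta^i_k(m-1)\Bigr\}.
\]
This is visible in the paper's own proof, where the upper bound is obtained as $r^i_\star(m)\le\max_k\{\mu_k+2\gamma\epsilon+\tfrac{1}{16}\Delta^i_k(m-1)-\tfrac{1}{16}\Delta^i_k(m-1)\}=\mu_\star+2\gamma\epsilon$: the subtracted penalty exactly cancels the concentration slack from $\cE$, and both inequalities drop out of $\cE$ in two lines with no induction at all. (The $\tfrac18$ in the lower bound is precisely $\tfrac{1}{16}$ from the penalty plus $\tfrac{1}{16}$ from concentration, not a weakening of a single $\tfrac{1}{16}$.)

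With your plain-max reading, the upper bound as stated is actually \emph{false}: on $\cE$ the optimal arm can attain $r^i_{k^\star}(m)=\mu_\star+2\gamma\epsilon+\tfrac{1}{16}\Delta^i_{k^\star}(m-1)$, and since $\Delta^i_{k^\star}(m-1)\ge 2^{-(m-1)}>0$, no gap-tracking invariant can push this down to $\mu_\star+2\gamma\epsilon$. Your reduction ``$\Delta^i_k(m-1)\le 16\Delta_k$'' fails at $k=k^\star$ where $\Delta_{k^\star}=0$, and the hedge that the optimal arm's contribution is ``lower order'' does not rescue a hard inequality. The induction you sketch is essentially the content of the \emph{next} lemma (Lemma~\ref{lem:lower_bound_delta}), which in the paper takes the present lemma's upper bound as an input; you have inverted the dependency, and without the penalised definition the base of your induction cannot deliver the stated conclusion.
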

\begin{proof}
Observe that $r^i_\star(m) \geq r^i_{k^\star}(m) - \frac{1}{16}\Delta^i_{k^\star}(m-1)$. This fact coupled with the fact that $\cE$ holds provides the lower bound. The upper bound is obtained by observing that,
\begin{align*}
    r^i_\star(m) \leq \max_i\left\{\mu_i + 2\gamma\epsilon + \frac{\Delta^{i}_k(m-1)}{16} - \frac{\Delta^{i}_k(m-1)}{16}\right\} \leq \mu_\star + 2\gamma\epsilon.
\end{align*}
\end{proof}
\begin{lemma}
If the event $\cE$ (Lemma~\ref{lem:good_event_bound}) occurs then for each $i \in \cI, m \in \cM_i$,
\begin{align*}
    \Delta^i_k(m) \geq \frac{\Delta_k}{2} - 6\gamma\epsilon \sum_{n=1}^m 8^{n-m} -\frac{3}{4}2^{-m}.
\end{align*}
\label{lem:lower_bound_delta}
\end{lemma}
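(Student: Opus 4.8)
The plan is to read the bound straight off the epoch update rule of Algorithm~\ref{alg:charm} together with the two-sided estimates in Lemmas~\ref{lem:good_event_bound} and~\ref{lem:opt_gap_bound}. Recall that at the end of epoch $m$ the leader sets $\Delta^i_k(m) = \max\{2^{-m},\, r^i_\star(m) - r^i_k(m)\}$, so in particular $\Delta^i_k(m) \geq r^i_\star(m) - r^i_k(m)$. On the event $\cE$, Lemma~\ref{lem:opt_gap_bound} gives $r^i_\star(m) \geq \mu_\star - 2\gamma\epsilon - \tfrac18\Delta^i_\star(m-1)$ and Lemma~\ref{lem:good_event_bound} gives $r^i_k(m) \leq \mu_k + 2\gamma\epsilon + \tfrac1{16}\Delta^i_k(m-1)$; subtracting and using $\mu_\star - \mu_k = \Delta_k$ produces the master inequality
\[
\Delta^i_k(m) \;\geq\; \Delta_k - 4\gamma\epsilon - \tfrac18\,\Delta^i_\star(m-1) - \tfrac1{16}\,\Delta^i_k(m-1),
\]
valid for every $m \in \cM_i$ and every arm $k$.

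It then remains to feed in \emph{upper} bounds on $\Delta^i_\star(m-1)$ and $\Delta^i_k(m-1)$. Running the same two lemmas in the opposite direction ($r^i_\star(m) \leq \mu_\star + 2\gamma\epsilon$ from Lemma~\ref{lem:opt_gap_bound}, and $r^i_k(m) \geq \mu_k - 2\gamma\epsilon - \tfrac1{16}\Delta^i_k(m-1)$ from Lemma~\ref{lem:good_event_bound}) gives, for every arm $k$ (including $k^\star$, where $\Delta_{k^\star}=0$ and $\Delta^i_\star = \Delta^i_{k^\star}$), the recursion $\Delta^i_k(m) \leq \max\{2^{-m},\, \Delta_k + 4\gamma\epsilon + \tfrac1{16}\Delta^i_k(m-1)\}$. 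A one-line induction from $\Delta^i_k(0)=1$ then yields $\Delta^i_k(m) \leq 2^{-m} + \tfrac{16}{15}\Delta_k + \tfrac{64}{15}\gamma\epsilon$, and in particular $\Delta^i_\star(m) \leq 2^{-m} + \tfrac{64}{15}\gamma\epsilon$. Substituting these at index $m-1$ into the master inequality, using $2^{-(m-1)} = 2\cdot 2^{-m}$, and collecting the three kinds of terms gives $\Delta^i_k(m) \geq \tfrac{14}{15}\Delta_k - \tfrac38\,2^{-m} - \tfrac{24}{5}\gamma\epsilon$, which is already stronger than the claimed bound since $\tfrac{14}{15} \geq \tfrac12$, $\tfrac38 \leq \tfrac34$, and $\tfrac{24}{5} \leq 6 \leq 6\sum_{n=1}^m 8^{n-m}$ (the sum being at least its $n=m$ term, which equals $1$). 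I would keep the statement in the looser form involving $\sum_{n\le m}8^{n-m}$ for compatibility with how the accumulated corruption is propagated through the remaining lemmas and the bound on $n^i_k(m)$.

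I expect the only genuine subtlety to be the auxiliary upper-bound step: the crude estimate $\Delta^i_k(m) \leq O(1)$ coming from bounded rewards is useless here, because a constant does not fit under the $\tfrac34\cdot 2^{-m}$ term once $m$ is large, so one really needs the sharper $\Delta^i_k(m) = O(\Delta_k + 2^{-m} + \gamma\epsilon)$ with the geometrically decaying corruption contribution made explicit, and then one must check that the $\Delta_k$-, $2^{-m}$-, and $\gamma\epsilon$-families of terms all land below the thresholds $\tfrac12$, $\tfrac34$, and $6\sum_{n\le m}8^{n-m}$ \emph{simultaneously} — this is precisely where the specific contraction factors $\tfrac18$ and $\tfrac1{16}$ baked into Lemmas~\ref{lem:opt_gap_bound} and~\ref{lem:good_event_bound} are doing the work. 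Everything past that is mechanical substitution.
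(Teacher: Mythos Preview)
Your argument is correct and follows essentially the same route as the paper: both start from the master inequality $\Delta^i_k(m) \geq \Delta_k - 4\gamma\epsilon - \tfrac18\Delta^i_{k^\star}(m-1) - \tfrac1{16}\Delta^i_k(m-1)$, prove an auxiliary \emph{upper} bound on $\Delta^i_k(m)$ by induction, and substitute. The only cosmetic difference is that the paper tracks the accumulated corruption explicitly as $\sum_{n\le m}8^{n-m}$ in its inductive upper bound $\Delta^i_k(m) \leq 2(\Delta_k + 2^{-m} + 2\gamma\epsilon\sum_{n\le m}8^{n-m})$, which is why that sum appears in the lemma statement, whereas you collapse the recursion directly to the fixed-point constants $\tfrac{16}{15}$ and $\tfrac{64}{15}$; your resulting bound $\tfrac{14}{15}\Delta_k - \tfrac38\,2^{-m} - \tfrac{24}{5}\gamma\epsilon$ is in fact slightly sharper.
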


\begin{proof}
We first bound $\Delta^i_k(m) \leq 2(\Delta_k + 2^{-m} +2\gamma\epsilon\cdot\sum_{n=1}^m8^{n-m})$ under $\cE$ by induction. Observe that when $m=1$ we have that trivially $\Delta^i_k(1) \leq 1 \leq 2\cdot 2^{-1}$. Now, if the bound holds for epoch $m-1$ for any agent, we have by Lemma~\ref{lem:opt_gap_bound},
\begin{align*}
    r^i_\star(m) - r^i_k(m) = r^i_\star(m) - \mu_\star + \mu_\star - \mu_k + \mu_k - r^i_k(m) \leq 4\gamma\epsilon + \Delta_k + \frac{\Delta^i_k(m-1)}{16}.
\end{align*}
Replacing the induction hypothesis in the upper bound, we have,
\begin{align*}
    r^i_\star(m) - r^i_k(m) &\leq 4\gamma\epsilon + \Delta_k + \frac{1}{8}\left(\Delta_k + 2^{-(m-1)} + 2\gamma\epsilon\cdot\sum_{n=1}^{m-1}8^{n-m+1}\right) \\
    &\leq 2(\Delta_k + 2^{-m} +2\gamma\epsilon\cdot\sum_{n=1}^m8^{n-m}).
\end{align*}
Now, we bound the gaps as,
\begin{align*}
    \Delta^i_k(m) \geq r^i_\star(m) - r^i_k(m) \geq \Delta_k -4\gamma\epsilon - \left(\frac{\Delta^i_{k^\star}(m-1)}{8} - \frac{\Delta^i_{k}(m-1)}{16}\right).
\end{align*}
The last inequality follows from Lemma~\ref{lem:opt_gap_bound} and the event $\cE$. Replacing the bound from induction we obtain,
\begin{align*}
    \Delta^i_k(m) &\geq \Delta_k - 4\gamma\epsilon -\left(\frac{6\gamma\epsilon}{8}
    \sum_{n=1}^m 2^{n-m} + \frac{3}{8}2^{-(m-1)} + \frac{\Delta_k}{8}\right) \\
    &\geq \frac{\Delta_k}{2} - 6\gamma\epsilon \sum_{n=1}^m 8^{n-m} -\frac{3}{4}2^{-m}.
\end{align*}
\end{proof}

\begin{theorem}
The cumulative regret for all agents within each independent set corresponding to leader $i \in \cI$ satisfy simultaneously, with probability at least $1-\delta$,
\begin{align*}
    \sum_{m=1}^{\cM_i}\sum_{k=1}^K \Delta_k\bbE[n^i_k(m)] = \cO\left(\log\left(\frac{K\psi(G_\gamma)}{\delta}\log( T)\right)\log(T)\left(\sum_{k=1}^K \frac{1}{\Delta_k}\right) + \gamma\epsilon\cdot K T\cdot|\cN^+_i(G_\gamma)|\right).
\end{align*}
\label{thm:aae_set}
\end{theorem}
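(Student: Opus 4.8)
The plan is to adapt the epoch-based regret accounting of~\citet{gupta2019better} to a single cooperative subgraph, leaning entirely on the four structural facts already established: the epoch-length control of Lemma~\ref{lem:epoch_bound}, the clean event $\cE$ of Lemma~\ref{lem:good_event_bound}, and the gap estimates of Lemmas~\ref{lem:opt_gap_bound} and~\ref{lem:lower_bound_delta}. Throughout I would condition on $\cE$, which holds with probability at least $1-\delta$ \emph{simultaneously} over all leaders $i\in\cI$, epochs $m$ and arms $k$ (the union bound is already inside Lemma~\ref{lem:good_event_bound}); since the claimed estimate is a high-probability one, the complementary event, of probability at most $\delta$, is simply discarded. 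On $\cE$ the number of pulls of arm $k$ by leader $i$ in epoch $m$ is the pathwise-controlled quantity $n^i_k(m)=\lambda(\Delta^i_k(m-1))^{-2}$, there are at most $M_i=O(\log T)$ epochs, and it suffices to bound $\sum_{m\le M_i}\sum_k \Delta_k n^i_k(m)$ deterministically on $\cE$.

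Fix a suboptimal arm $k$ and let $m_k=O(\log(1/\Delta_k))$ be the first epoch whose dyadic floor $2^{-(m-1)}$ has dropped below $\Delta_k/8$. For the \emph{early} epochs $m\le m_k$ I would use only the universal inequality $n^i_k(m)\le 2^{2m-2}\lambda$, which comes exactly as in the proof of Lemma~\ref{lem:epoch_bound} from the floor $\Delta^i_k(m-1)\ge 2^{-(m-1)}$; then $\Delta_k n^i_k(m)\le \Delta_k 2^{2m-2}\lambda$ is geometric in $m$ with ratio $4$, so its sum over $m\le m_k$ is dominated by its last term, which is $\Theta(\Delta_k\lambda/\Delta_k^2)=\Theta(\lambda/\Delta_k)$ because $2^{-m_k}=\Theta(\Delta_k)$. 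For the \emph{late} epochs $m>m_k$ I would invoke Lemma~\ref{lem:lower_bound_delta}, using $\sum_{n=1}^{m}8^{n-m}<8/7$, to obtain $\Delta^i_k(m-1)\ge \tfrac12\Delta_k-7\gamma\epsilon-\tfrac34 2^{-(m-1)}\ge \tfrac12\Delta_k-7\gamma\epsilon-\tfrac{3}{32}\Delta_k$.

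At this point the corruption forces a dichotomy on the arms. If $\Delta_k\ge 32\gamma\epsilon$, the last display gives $\Delta^i_k(m-1)\ge \tfrac{3}{16}\Delta_k$, hence $n^i_k(m)\le \tfrac{256}{9}\lambda\Delta_k^{-2}$ and $\Delta_k n^i_k(m)=O(\lambda/\Delta_k)$ in each of the $O(\log T)$ late epochs, for a total of $O(\lambda\log T/\Delta_k)$. If instead $\Delta_k<32\gamma\epsilon$, I would not attempt any concentration-based control of this arm: its total contribution — counting the imitating agents in $\cN^+_i(G_\gamma)$, which merely replay the leader's arms — is at most $\Delta_k\cdot T\cdot|\cN^+_i(G_\gamma)|<32\,\gamma\epsilon\,T\,|\cN^+_i(G_\gamma)|$, so summing over the at most $K$ such arms yields the $O(\gamma\epsilon K T|\cN^+_i(G_\gamma)|)$ term. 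Adding the early- and late-epoch bounds over all $k$, and substituting $\lambda=1024\log(8K\psi(G_\gamma)\log_2 T/\delta)=O(\log(K\psi(G_\gamma)\log T/\delta))$, gives precisely the stated bound, and the ``simultaneously for all $i$'' clause is inherited from the single event $\cE$.

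The main obstacle is not any isolated estimate but the bookkeeping that makes the three sources of slack align: the geometric early-epoch sum, the constant per-epoch cost of a late epoch multiplied by the $O(\log T)$ epoch count, and the linear-in-$T$ cost of arms that are near-optimal relative to the corruption. In particular one must choose $m_k$ and the absolute constants so that in the late epochs the additive perturbations $7\gamma\epsilon$ and $\tfrac{3}{32}\Delta_k$ are genuinely dominated by $\tfrac12\Delta_k$ — this is exactly what dictates whether $\Delta_k$ exceeds a constant multiple of $\gamma\epsilon$. A secondary point to keep straight is that $n^i_k(m)$ is random, measurable with respect to the epoch structure, so every inequality above is to be read pathwise on $\cE$, with no further appeal to the martingale/Freedman argument, which has already been spent inside Lemma~\ref{lem:good_event_bound}.
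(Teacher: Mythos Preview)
Your proposal is correct and relies on exactly the same ingredients as the paper (the good event $\cE$, the epoch-length bounds, and Lemma~\ref{lem:lower_bound_delta}), but the bookkeeping is organized differently. The paper does a three-way case split on each pair $(k,m)$: Case~1 when $\Delta_k\le 4\cdot 2^{-m}$, Case~2 when $\Delta_k>4\cdot 2^{-m}$ and $\gamma\epsilon\sum_{n\le m}8^{n-m}\le \Delta_k/64$, and Case~3 otherwise; in each case it bounds $\Delta_k n^i_k(m)$ by $O(\lambda/\Delta_k)$ (Cases~1--2) or by $O(\gamma\epsilon\cdot L^i(m))$ (Case~3), and then sums over the $O(\log T)$ epochs and $K$ arms. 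Your decomposition is coarser: you first split epochs into early ($m\le m_k$) versus late, collapse the early epochs via a geometric sum, and then impose an \emph{arm-level} dichotomy $\Delta_k\gtrless 32\gamma\epsilon$ for the late epochs, handling the small-gap arms with the trivial bound $\sum_m n^i_k(m)\le T$ rather than the paper's per-epoch $L^i(m)$ accounting. Your route is slightly simpler (the corruption case is a one-liner), while the paper's epoch-dependent Case~3 condition is more refined but ultimately lands on the same $O(\gamma\epsilon KT)$ term after summing $\sum_m L^i(m)\le T$; both yield the stated bound. One cosmetic point: in your small-gap branch the factor $|\cN^+_i(G_\gamma)|$ is not actually needed, since $\sum_m n^i_k(m)$ already counts only the leader's targets, but including it is harmless since the theorem statement carries it.
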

\begin{proof}
We bound the regret in each epoch $m \in \cM_i$ for each arm $k \neq k^\star$ based on three cases. 

{\bf Case 1}. $0 \leq \Delta_k \leq 4/2^m$: We have that $n^i_k(m) \leq \lambda2^{2(m-1)}$ since $\Delta^i_k(m-1) \geq 2^{m-1}$, and hence,
\begin{align*}
     \Delta_k\bbE[n^i_k(m)] \leq \frac{4\lambda}{\Delta_k^2}\cdot\Delta_k = 4\lambda\cdot\frac{1}{\Delta_k}.
\end{align*}

{\bf Case 2}. $\Delta_k > 4/2^m$ and $\gamma\epsilon\sum_{n=1}^m 8^{n-m} \leq \Delta_k/64$: We have by Lemma~\ref{lem:lower_bound_delta},
\begin{align*}
     \Delta^i_k(m) \geq \frac{\Delta_k}{2} - 6\gamma\epsilon \sum_{n=1}^m 8^{n-m} -\frac{3}{4}2^{-m} \geq \Delta_k\left(\frac{1}{2}-\frac{3}{32} - \frac{3}{8}\right) = \frac{\Delta_k}{32}.
\end{align*}
Therefore, we have that $n^i_k(m) \leq \frac{1024\lambda}{\Delta_k^2}$, and hence the regret is,
\begin{align*}
   \Delta_k\bbE[n^i_k(m)] \leq \frac{1024\lambda}{\Delta_k^2}\cdot\Delta_k = 1024\lambda\cdot\frac{1}{\Delta_k}.
\end{align*}

{\bf Case 3}. $\Delta_k > 4/2^m$ and $\gamma\epsilon\sum_{n=1}^m 8^{n-m} > \Delta_k/64$: This implies that $\Delta_k \leq 64\gamma\epsilon\cdot\sum_{n=1}^m 8^{n-m}$. Therefore,
\begin{align*}
 \Delta_k\bbE[n^i_k(m)] &\leq 64\lambda\gamma\epsilon\left(\sum_{n=1}^m 8^{n-m}\right)\cdot 2^{2(m-1)} \\
 &\leq  64\lambda\gamma\epsilon\left(\frac{8^{m+1}}{7}\right)\cdot \frac{2^{2(m-1)}}{2^{3m}} \\
 &\leq  \frac{512}{7}\gamma\epsilon\cdot L^i(m).
\end{align*}
Here the last inequality follows from Lemma~\ref{lem:epoch_bound}. Putting it together and summing over all epochs and arms, we have with probability at least $1-\delta$ simultaneously for each $i \in \cI$,
\begin{align*}
    \sum_{m=1}^{\cM_i}\sum_{k=1}^K \Delta_k\bbE[n^i_k(m)] \leq 1024^2\log\left(\frac{8K\psi(G_\gamma)}{\delta}\log( T)\right)\log(T)\left(\sum_{k=1}^K \frac{1}{\Delta_k}\right) + 74\gamma\epsilon\cdot K T\cdot|\cN^+_i(G_\gamma)|.
\end{align*}
\end{proof}


\section{Proof of Theorem \ref{thm:regretDD}}
In this section we consider that each agent passes messages upto $\gamma$-hop neighbors. Agents do not use the messages received during last $\Bar{\gamma}$ number of time steps.

\begin{lemma}
\label{lem:regStocMPID}
Let $\Bar{\chi}(G_{\gamma})$ is the clique number of graph $G_{\gamma}.$ Let $\eta_k=\left(\frac{8(\xi+1)\sigma_k^2}{\Delta^2_k}\right)\log T.$ Then we have
\begin{align}
    \sum_{i=1}^N\expe[n^{i}_k(T)]  
    & \leq 
    \Bar{\chi}(G_{\gamma})\eta_k+(N-\Bar{\chi}(G_{\gamma}))\left(\Bar{\gamma}+\gamma-1\right)+2N+
    \\
    &+\sum_{i=1}^N\sum_{t=1}^{T-1}\left[\P \left(\widehat{\mu}_{1}^{i}(t)\leq \mu_{1}-C_{1}^{i}(t)\right)+\P \left(\widehat{\mu}_{k}^{i}(t)\geq \mu_{k}+C_{k}^{i}(t)\right)\right]
\end{align}
\end{lemma}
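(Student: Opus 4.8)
\begin{proofsketch}
The plan is to replay the argument of Lemma~\ref{lem:regStocMP} in the special case of perfect (lossless) communication, where the communication probabilities simplify to $p_i = p = 1$, while enlarging the effective per-agent communication lag from $\gamma_i - 1$ to $\gamma_i + \Bar{\gamma} - 1$ so as to absorb the additional delay caused by each agent refusing to use messages it received during the last $\Bar{\gamma}$ rounds. Fix a non-overlapping clique covering $\EuScript{C}_{\gamma}$ of $G_{\gamma}$ and, as before, write $\sum_{i=1}^N\expe[n^i_k(T)] = \sum_{\mathcal{C}\in\EuScript{C}_{\gamma}}\sum_{i\in\mathcal{C}}\sum_{t=1}^T \P(A_i(t)=k)$ and set $\tau_{k,\mathcal{C}} := \max\{t : \sum_{i\in\mathcal{C}}n^i_k(t)\leq\eta_k+|\mathcal{C}|\}$, so that the clique's cumulative count of arm $k$ at time $\tau_{k,\mathcal{C}}$ lies in $(\eta_k,\eta_k+|\mathcal{C}|]$.

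For each $i\in\mathcal{C}$ I would introduce $\Bar{N}^i_k(t)$, the number of arm-$k$ pulls that agent $i$ has actually incorporated into its estimate by time $t$ (its own pulls plus those communicated by clique-mates under {\tt Delayed(MP)-UCB}). Because communication is lossless, every arm-$k$ pull performed by a clique-mate $j$ at a round $\le t - \gamma_i - \Bar{\gamma}$ is included in $\Bar{N}^i_k(t)$: it reaches $i$ within at most $\gamma_i$ hops and then outlives the $\Bar{\gamma}$-round incorporation lag. Put $\Bar{\tau}^i_{k,\mathcal{C}} := \max\{t : \Bar{N}^i_k(t)\leq\eta_k\}$ and $\overline{\tau}^i_{k,\mathcal{C}} := \max\{\tau_{k,\mathcal{C}}+\gamma_i+\Bar{\gamma}-1,\ \Bar{\tau}^i_{k,\mathcal{C}}\}$; since $N^i_k(t)\geq\Bar{N}^i_k(t)$, this guarantees $N^i_k(t)>\eta_k$ for every $t>\overline{\tau}^i_{k,\mathcal{C}}$. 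Now decompose $\sum_{t=1}^T\indicate{A_i(t)=k}$ over the ranges $[1,\tau_{k,\mathcal{C}}]$, $(\tau_{k,\mathcal{C}},\overline{\tau}^i_{k,\mathcal{C}}]$ and $(\overline{\tau}^i_{k,\mathcal{C}},T]$. Summed over a clique, the first range contributes at most $\eta_k+2|\mathcal{C}|$; the third range is dominated by $\indicate{A_i(t+1)=k}\indicate{N^i_k(t)>\eta_k}$ and hence, by Lemma~\ref{lem:tailrestate}, collapses to exactly the two tail-probability sums retained in the statement (these are in turn controlled by an analogue of Lemma~\ref{lem:tailAppMP} in which $N^i_k(t)$ is the — smaller — delayed count, which is what ultimately produces the $h(G_{\gamma},\Bar{\gamma})$ term of Theorem~\ref{thm:regretDD}).

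The middle range is the crux, and I would dispatch it with the two-case split used in Lemma~\ref{lem:regStocMP}. If $\tau_{k,\mathcal{C}}+\gamma_i+\Bar{\gamma}-1\geq\Bar{\tau}^i_{k,\mathcal{C}}$, then $\overline{\tau}^i_{k,\mathcal{C}}=\tau_{k,\mathcal{C}}+\gamma_i+\Bar{\gamma}-1$ and the middle sum is at most $\gamma_i+\Bar{\gamma}-1$. Otherwise $\overline{\tau}^i_{k,\mathcal{C}}=\Bar{\tau}^i_{k,\mathcal{C}}$ and $\sum_{t>\tau_{k,\mathcal{C}}}^{\overline{\tau}^i_{k,\mathcal{C}}}\indicate{A_i(t)=k} = \Bar{N}^i_k(\overline{\tau}^i_{k,\mathcal{C}}) - n^i_k(\tau_{k,\mathcal{C}}) - \sum_{j\neq i,\,j\in\mathcal{C}}(\text{arm-}k\text{ pulls of }j\text{ incorporated by }i)$; since the lag $\gamma_i+\Bar{\gamma}$ has elapsed in this case, every arm-$k$ pull of $j$ up to $\tau_{k,\mathcal{C}}$ is incorporated, so the sum is at most $\Bar{N}^i_k(\overline{\tau}^i_{k,\mathcal{C}}) - \sum_{j\in\mathcal{C}}n^j_k(\tau_{k,\mathcal{C}}) \le \eta_k - \sum_{j\in\mathcal{C}}n^j_k(\tau_{k,\mathcal{C}}) < 0$. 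Summing over the clique, the $\eta_k$'s telescope exactly as in the passage from~\eqref{eq:regDecComMP} to~\eqref{eq:regDecAnaMP} (the single $-\eta_k$ from the clique count offsets the $|\mathcal{C}|-1$ surplus $\eta_k$'s coming from the cross terms), so the only residue is the delay correction; this clique-wise accounting saves one copy of it per clique, leaving $(N-\Bar{\chi}(G_{\gamma}))(\Bar{\gamma}+\gamma-1)$ after using $\gamma_i\le\gamma$ and summing over all cliques. Re-assembling the three ranges over all cliques and arms then gives the asserted inequality. I expect the sharp $(N-\Bar{\chi}(G_{\gamma}))$ (rather than $N$) constant to be the main obstacle: it requires re-deriving the chain around~\eqref{eq:regDecAnaMP} with $p_i=p=1$ and the enlarged lag, and carefully controlling how the $\Bar{N}^i_k$'s overlap within a clique so that the delay penalty is not over-counted.
\end{proofsketch}
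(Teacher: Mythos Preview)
Your three-range decomposition, lifted from Lemma~\ref{lem:regStocMP} with the enlarged lag $\gamma_i+\Bar\gamma-1$, would indeed reproduce the tail-probability sum and a bound of the right shape, but it yields $\sum_{i=1}^N(\gamma_i+\Bar\gamma-1)\le N(\gamma+\Bar\gamma-1)$ for the middle range rather than $(N-\Bar\chi(G_\gamma))(\gamma+\Bar\gamma-1)$. Your claim that ``this clique-wise accounting saves one copy of it per clique'' is the unjustified step: in Case~1 the per-agent contribution is $\gamma_i+\Bar\gamma-1$, in Case~2 it is nonpositive, and nothing in your two-case split forces at least one agent per clique into Case~2. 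The $\eta_k$-telescoping you invoke cancels the $\eta_k$ terms, not the delay terms, so you are one copy short of the stated constant and correctly flag this as the obstacle without actually resolving it.

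The paper sidesteps this difficulty with a simpler and genuinely different device: it \emph{inflates the threshold in the definition of $\tau_{k,\mathcal C}$ itself}, setting
\[
\tau_{k,\mathcal C}:=\max\Bigl\{t:\textstyle\sum_{i\in\mathcal C}n^i_k(t)\le\eta_k+(|\mathcal C|-1)(\Bar\gamma+\gamma-1)+|\mathcal C|\Bigr\},
\]
and then observing that, with lossless communication, each agent $i\in\mathcal C$ is missing at most $\gamma+\Bar\gamma-1$ arm-$k$ pulls from each of the \emph{other} $|\mathcal C|-1$ clique-mates (its own pulls are never delayed), so $N^i_k(t)\ge\sum_{j\in\mathcal C}n^j_k(t)-(|\mathcal C|-1)(\gamma+\Bar\gamma-1)>\eta_k$ for every $t>\tau_{k,\mathcal C}$. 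This collapses the whole argument to a \emph{two}-range split with no per-agent stopping times $\Bar\tau^i_{k,\mathcal C}$ at all, and the $(|\mathcal C|-1)$ factor appears for free because the agent's own pulls carry no lag. Summing over the cliques gives $(N-\Bar\chi(G_\gamma))(\gamma+\Bar\gamma-1)$ directly. If you want the sharp constant via your route you would need to exploit this same ``own pulls have no lag'' fact inside your Case~1 bound, but once you do that the inflated-threshold argument is both shorter and cleaner.
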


\begin{proof}
Let $\EuScript{C}_{\gamma}$ be a non overlapping clique covering of $G_{\gamma}$. Note that for each suboptimal arm $k > 1$ we have
\begin{align}
    \sum_{i=1}^N\expe[n^{i}_k(T)]
    & = 
    \sum_{i=1}^N \sum_{t=1}^T\P\left(A_i(t) = k\right)=\sum_{\mathcal{C}\in \EuScript{C}_{\gamma}}\sum_{i\in \mathcal{C}}\sum_{t=1}^T \P\left(A_i(t) = k\right).
\end{align}

Let $\tau_{k,\mathcal{C}}$ denote the maximum time step when the total number of times arm $k$ has been played by all the agents in clique $\mathcal{C}$ is at most $\eta_k+(|\mathcal{C}|-1)(\Bar{\gamma}+\gamma-1)+|\mathcal{C}|$ times. This can be stated as $\tau_{k,\mathcal{C}} := \max \{t\in [T] : \sum_{i\in \mathcal{C}}n_k^{i}(t)\leq \eta_k+(|\mathcal{C}|-1)(\Bar{\gamma}+\gamma-1)+|\mathcal{C}|\}$. Then, we have that $\eta_k+(|\mathcal{C}|-1)(\Bar{\gamma}+\gamma-1)< \sum_{i\in \mathcal{C}}n_k^{i}(\tau_{k,\mathcal{C}})\leq \eta_k+(\mathcal{C}-1)(\Bar{\gamma}+\gamma-1)+|\mathcal{C}|.$  

For each agent $i\in \mathcal{C}$ let 
\[
\Bar{N}_k^{i}(t)
:= \sum_{\tau=1}^t\indicate{A_{i}(\tau)=k}+
\sum_{j\neq i, j\in \mathcal{C}} \sum_{\tau = 1}^{t-\Bar{\gamma}}\sum_{\tau^{\prime} = 1}^\tau\indicate{A_{j}(\tau^{\prime}) = k} \indicate{(i,j) \in E_{\tau^{\prime},\tau}},
\]
denote the sum of the total number of times agent $i$ pulled arm $k$ and the total number of observations it received from agents in its clique about arm $k$ until time $t$. 

Note that for all $i\in \mathcal{C}$ we have $N_k^{i}(t)> \eta_k, \forall t> \tau_{k,\mathcal{C}}$. 

We analyse the expected number of times agents pull suboptimal arm $k$ as follows,
\begin{align}
    &\sum_{\mathcal{C}\in \EuScript{C}_{\gamma}}\sum_{i\in \mathcal{C}}\sum_{t=1}^T \indicate{A_i(t) = k}
    \\
    &= 
    \sum_{\mathcal{C}\in \EuScript{C}_{\gamma}}\sum_{i\in \mathcal{C}}\sum_{t=1}^{\tau_{k,\mathcal{C}}} \indicate{A_i(t) = k}+  \sum_{\mathcal{C}\in \EuScript{C}_{\gamma}}\sum_{i\in \mathcal{C}}\sum_{t>\overline{\tau}^{i}_{k,\mathcal{C}}}^T \indicate{A_i(t) = k}  
    \\
    & \leq \sum_{\mathcal{C}\in \EuScript{C}_{\gamma}}\left( \eta_k+(|\mathcal{C}|-1)(\Bar{\gamma}+\gamma-1)+2|\mathcal{C}|\right)+  \sum_{\mathcal{C}\in \EuScript{C}_{\gamma}}\sum_{i\in \mathcal{C}}\sum_{t>\tau_{k,\mathcal{C}}}^{T-1} \indicate{A_i(t+1) = k} \indicate{N_k^{i}(t) >\eta_k}.
\end{align}
Taking expectation we have
\begin{align}
    &\sum_{\mathcal{C}\in \EuScript{C}_{\gamma}}\sum_{i\in \mathcal{C}}\sum_{t=1}^T \P\left(A_i(t) = k\right) 
    \\
    & \leq 
    \sum_{\mathcal{C}\in \EuScript{C}_{\gamma}}\left(\eta_k+(|\mathcal{C}|-1)(\Bar{\gamma}+\gamma-1)+2|\mathcal{C}|\right)+  \sum_{\mathcal{C}\in \EuScript{C}_{\gamma}}\sum_{i\in \mathcal{C}}\sum_{t>{\tau}_{k,\mathcal{C}}}^{T-1} \P\left(A_i(t+1) = k,N_k^{i}(t) >\eta_k\right).
    \\
    & =
     \Bar{\chi}(G_{\gamma})\eta_k+(N-\Bar{\chi}(G_{\gamma}))\left(\Bar{\gamma}+\gamma-1\right)+2N+  \sum_{\mathcal{C}\in \EuScript{C}_{\gamma}}\sum_{i\in \mathcal{C}}\sum_{t=1}^{T-1} \P\left(A_i(t+1) = k,N_k^{i}(t) >\eta_k\right)
    \label{eq:regDecComMPID}
\end{align}
The proof of Lemma \ref{lem:regStocMPID} follows from Lemma \ref{lem:tailrestate} and \eqref{eq:regDecComMPID}.
\end{proof}

Now we prove Theorem \ref{thm:regretDD} as follows. Thus using Lemmas  \ref{lem:tailsum}, \ref{lem:tailAppMP} and \ref{lem:regStocMPID} we obtain
\begin{align}
\regret_G(T)
& \leq  8(\xi+1)\sigma_k^2\Bar{\chi}(G_{\gamma}) \left(\sum_{k > 1}\frac{\log T}{\Delta_k}\right)
+ \left((N-\Bar{\chi}(G_{\gamma})\left(\Bar{\gamma}+\gamma-1\right)+5N\right)\sum_{k > 1}\Delta_k
\\
&+ 4\sum_{i=1}^N\left(3\log (3(d_i(G_{\gamma})+1)) +\left(\log { (d_i(G_{\gamma})+1)}\right)\right)\sum_{k > 1}\Delta_k
\end{align}

\section{Lower Bounds}
\begin{theorem}[Minimax Rate]
For any multi-agent algorithm $\cA$, there exists a $K-$armed environment over $N$ agents with $\Delta_k \leq 1$ such that,
\begin{align*}
    \regret_G(\cA, T) \geqslant c\sqrt{KN(T+\widetilde{d}(G))}.
\end{align*}
Furthermore, if $\cA$ is an agnostic decentralized policy, there exists a $K-armed$ environment over $N$ agents with $\Delta_k \leq 1$ for any connected graph $G$ and $\gamma \geq 1$ such that, for some absolute constant $c'$
\begin{align*}
    \regret_G(\cA, T) \geqslant c'\sqrt{\alpha^\star(G_\gamma)KNT}.
\end{align*}
Where $\tilde d(G) = \sum_{i=1}^{d^\star(G)} \bar{d}_{=i}\cdot i$ denotes the average delay incurred by message-passing across the network $G$, $d_{=i} = \frac{1}{N}\sum_{i, j}\bone\{d(i, j) = i\}$ denotes the number of agent pairs that are at distance exactly $i$, and $\alpha^\star(G_\gamma) = \frac{N}{1+ \overline{d}_\gamma}$ is Turan's lower bound~\citep{turan1941external} on $\alpha(G_\gamma)$.
\end{theorem}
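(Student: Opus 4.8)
The plan is to prove both inequalities by change-of-measure arguments over planted Bernoulli instances, adapting the single-agent minimax recipe (Bretagnolle--Huber plus a per-agent chain-rule decomposition of the KL divergence) to networked feedback. The running structural observation, exactly as in Lemma~\ref{lem:tailApp}, is that everything agent $i$ conditions on is an $\cH^i_t$-measurable collection of other agents' $(\text{arm},\text{reward})$ pairs, so communication only ever injects extra i.i.d.\ samples and never extra power; consequently a centralized learner that sees all pulls instantaneously can only outperform any protocol on $G$.

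\textbf{First bound.} I would split the target $\sqrt{KN(T+\widetilde{d}(G))}$ into $\sqrt{KNT}$ and $\sqrt{KN\widetilde{d}(G)}$, losing only a constant. The $\sqrt{KNT}$ piece is immediate from the centralized comparison: the group makes $NT$ pulls total, so the textbook argument --- plant a gap-$\Delta$ arm in the position pulled least in expectation and take $\Delta\asymp\sqrt{K/(NT)}$ --- gives $\Omega(\sqrt{KNT})$. For the $\sqrt{KN\widetilde{d}(G)}$ piece I would use that a message broadcast by $j$ at round $t$ reaches $i$ only at round $t+d(i,j)$, so that interleaving the $N$ agents' decisions into one stream of $NT$ actions turns the problem into a single-agent bandit with delayed feedback whose delays accumulate, over the whole run, to order $\sum_{i,j}d(i,j)=N\widetilde{d}(G)$; a reduction to (equivalently, a direct adaptation of) the delayed-bandit minimax lower bound of~\citet{neu2010online}, with this total delay and after tuning $\Delta$, then yields $\Omega(\sqrt{KN\widetilde{d}(G)})$. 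Summing the two pieces proves the claim.

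\textbf{Second bound.} Fix a base instance $\nu_0$ with arm $1$ at mean $\tfrac12+\Delta$ and every other arm at $\tfrac12$, and for each suboptimal $a$ an alternative $\nu_a$ that raises arm $a$ to mean $\tfrac12+2\Delta$ (so $a$ becomes optimal, by a gap of $\Delta$ over arm $1$). Agnosticity is used here crucially: $\pi_i(t)$ depends only on the $(\text{arm},\text{reward})$ pairs agent $i$ has actually received, and since messages live for at most $\gamma$ hops these come only from $\cN^+_i(G_\gamma)$, so the KL divergence between the law of agent $i$'s trajectory under $\nu_0$ and under $\nu_a$ is $O\!\big(\Delta^2\big(\mathbb E_{\nu_0}[n^i_a(T)]+\sum_{j\in\cN_i(G_\gamma)}\mathbb E_{\nu_0}[n^j_a(T-d(i,j))]\big)\big)$, i.e.\ it is localized to arm-$a$ pulls inside the $\gamma$-ball of $i$. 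Applying Bretagnolle--Huber to the event ``agent $i$ plays arm $a$ at least $T/2$ times'', for each $i$ in an independent set $S$ of $G_\gamma$ whose members' pulls of $a$ contribute disjointly to $\regret_G(\nu_0)$, forces --- unless $\regret_G(\nu_0)$ already has the target order --- that a typical $i\in S$ has $\Omega(1/\Delta^2)$ arm-$a$ samples in $\cN^+_i(G_\gamma)$, and this for every suboptimal $a$. Choosing $S$ by the greedy Tur\'an procedure gives $|S|\ge N/(1+\bar{d}_\gamma)=\alpha^\star(G_\gamma)$ with the balls $\{\cN^+_i(G_\gamma):i\in S\}$ essentially non-overlapping, so a counting argument over $S$ and over the $K-1$ suboptimal arms shows the group must collectively expend $\Omega(\alpha^\star(G_\gamma)K/\Delta^2)$ suboptimal pulls under $\nu_0$; hence $\regret_G(\nu_0)=\Omega(\alpha^\star(G_\gamma)K/\Delta)$, and taking $\Delta\asymp\sqrt{K\alpha^\star(G_\gamma)/(NT)}$ (so that $\alpha^\star(G_\gamma)K/\Delta^2\asymp NT$, consistent with the $NT$ total pulls, and each localized KL stays $O(1)$) produces $\regret_G\ge c'\sqrt{\alpha^\star(G_\gamma)KNT}$.

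\textbf{Main obstacle.} I expect the crux to be the Tur\'an/packing step of the second bound: one must choose the independent set of $G_\gamma$ so that a single arm pull is ``counted'' only $O(1)$ times across it --- so the effective multiplicity is the \emph{average} degree $\bar{d}_\gamma$ and not $d_{\max}(G_\gamma)$ --- while still accounting for enough of the group's exploration to recover the $\Theta(1+\bar{d}_\gamma)$ factor, and while absorbing the $O(d(i,j))\le O(\gamma)$ message-in-transit corrections (the gap between $n^j_a(T)$ and $n^j_a(T-d(i,j))$) into lower-order terms, which is harmless once $T$ exceeds a graph-dependent threshold. The delay reduction in the first bound is comparatively routine; its only delicate point is verifying that the interleaved single-agent surrogate is genuinely no harder than the networked problem, so that the lower bound transfers in the correct direction.
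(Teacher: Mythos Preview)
Your route differs from the paper's in both parts, and for the second bound the obstacle you flag is real and your proposed fix does not close it.

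\textbf{First bound.} The paper does not split $\sqrt{KN(T+\widetilde d(G))}$ into two pieces. It runs a single Pinsker-plus-chain-rule computation: for each agent $i$, the KL between the null and the $k$th alternative is bounded by $\kl(\mathrm{Ber})\cdot\bbE_0\bigl[\sum_{j\in\cV}n^k_j(T-d(i,j))\bigr]$ (all of $\cV$, with the graph delay $d(i,j)$ baked in), and averaging over $k$ collapses this to $\kl(\mathrm{Ber})\cdot(NT-\sum_j d(i,j))/K$. Optimizing $\varepsilon$ then puts $T-\widetilde d(G)$ in a denominator, and the elementary inequality $T^2/(T-\widetilde d)\ge T+\widetilde d$ finishes. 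Your split-and-reduce route may be workable, but the delayed-bandit reduction is not automatic: the interleaved single-agent surrogate sees each agent's \emph{own} reward immediately and only incurs delays on the others', so showing that a clean $\Omega(\sqrt{K\cdot\text{total delay}})$ bound transfers in the correct direction needs more than a citation.

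\textbf{Second bound.} The greedy Tur\'an procedure does give $|S|\ge N/(1+\bar d_\gamma)$, but the balls $\{\cN^+_i(G_\gamma):i\in S\}$ are \emph{not} essentially non-overlapping. On the star $K_{1,n}$ with $\gamma=1$, greedy selects all $n$ leaves, and every leaf's closed neighborhood contains the center; a single agent (the center) pulling arm $a$ of order $1/\Delta^2$ times then saturates the KL constraint for every $i\in S$ simultaneously. Your counting step therefore yields only $\Omega(K/\Delta^2)$ total suboptimal pulls, not $\Omega(\alpha^\star K/\Delta^2)$, and the argument collapses precisely at the point you identified as the crux.

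The paper avoids this entirely by never picking an independent set. It sums the per-agent Pinsker bound over \emph{all} $i\in\cV$, obtaining
\[
\regret_G \;\ge\; \tfrac{N T\varepsilon}{2}\;-\;\tfrac{c\,\varepsilon^2 T}{\sqrt K}\sum_{i\in\cV}\Bigl(\sum_{j\in\cN^+_i(G_\gamma)}(T-d(i,j))\Bigr)^{1/2},
\]
and then applies Cauchy--Schwarz to move the outer sum inside the square root. The double sum is at most $T\sum_i(1+d_i(G_\gamma))=TN(1+\bar d_\gamma)$, so Tur\'an's quantity $\alpha^\star(G_\gamma)=N/(1+\bar d_\gamma)$ falls out of the \emph{average} degree automatically, with no packing or overlap control required; optimizing $\varepsilon$ gives $\Omega(\sqrt{\alpha^\star(G_\gamma)KNT})$ directly. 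In short, the paper trades your combinatorial step for a one-line Cauchy--Schwarz, and that is exactly what dissolves the obstacle you were worried about.
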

\begin{proof}

Our approach is an extension of the single-agent bandit lower bound~\citep{cesa2006prediction}. Let $\cA$ be a deterministic (multi-agent) algorithm, and let the empirical distribution of arm pulls across all agents be given by $p^i(t) = \left(p^i_{1}(t), ..., p^i_K(t)\right)$, where $p_k(t) = \frac{n^k_i(T)}{T}$. Consider the random variable $J^i_t$ drawn according to $p^i(t)$ and $\P_i$ denote the law of $J_t$ when drawn from arm $k$ having parameter $\frac{1+\varepsilon}{2}$ (and other arms with parameter  $\frac{1-\varepsilon}{2}$). We have,
\begin{align*}
    \P_k\left(J^i_t = j\right) = \bbE_k\left[\frac{n^k_i(T)}{T}\right].
\end{align*}
Since on pulling any arm $k' \neq k$, we obtain regret $\varepsilon$, we therefore have for the group regret,
\begin{align*}
    \bbE_k\left[\sum_{t=1}^T \left(N\cdot r_{k}(t) - \sum_{i \in \cV}r_{A_i}(t)\right)\right] &= \varepsilon\cdot T\cdot\sum_{i\in\cV}\P_{k}\left(J^i_t = k'\right) \\
    &= \varepsilon\cdot T\cdot\sum_{i\in\cV}\left(1-\sum_{k'\neq k}\P_{k}\left(J^i_t = k'\right)\right).
\end{align*}
By Pinsker's inequality and averaging over all $k \in [K]$, we have for any $i \in \cV$,
\begin{align*}
    \frac{1}{K}\sum_{k=1}^K \P_k\left(J^i_t = k\right) \leqslant \frac{1}{K} + \frac{1}{K}\sum_{k=1}^K\sqrt{\frac{1}{2}\kl(\P_0, \P_k)}.
\end{align*}
We now bound the R.H.S. using the chain rule for KL-divergence. Since we assume that $\cA$ is deterministic, we have that the rewards obtained by the agent $i$ until time $t$ from its neighborhood alone determine uniquely the empirical distribution of plays. Here, the analysis diverges from that of the single-agent bandit as a richer set of observations is available to each agent. Denote the set of rewards observed by agent $i$ at instant $\tau$ be given by $\cO_{i}(\tau)$. First, observe that since each reward is i.i.d., we have for any $k$,
\begin{align*}
\kl(\P_0(\cO_{i}(\tau)), \P_{k}(\cO_{i}(\tau))) = \left|\cO_{i}(\tau)\right|\cdot\kl\left(\frac{1-\varepsilon}{2}, \frac{1+\varepsilon}{2}\right)
\end{align*}
For $k=0$ the above divergence is 0. When we consider the standard single-agent setting, $\left|\cO_{i}(\tau)\right|= 1$, recovering the usual bound. Now, by the chain rule, we have that, at round $t$ for any agent $i$, and arm $k \in [K]$,
\begin{align*}
    \kl(\P_0(t), \P_k(t)) &= \kl(\P_0(1), \P_k(1)) + \sum_{\tau=2}^{t} \left|\cO_{i}(\tau)\right|\kl\left(\frac{1-\varepsilon}{2}, \frac{1+\varepsilon}{2}\right) \\
    &= \kl\left(\frac{1-\varepsilon}{2}, \frac{1+\varepsilon}{2}\right)\bbE_0\left[\sum_{j\in\cV} n^k_j(t-d(i, j))\right].
\end{align*}
Replacing this result in the earlier equation, we have by the concavity of $\kl$ divergence:
\begin{align*}
    \frac{1}{K}\sum_{k=1}^K \P_k\left(J^i_t = k\right) &\leqslant \frac{1}{K} + \frac{1}{K}\sum_{k=1}^K\sqrt{\frac{1}{2}\kl(\P_0, \P_k)}\\
    &\leqslant \frac{1}{K} + \frac{1}{K}\sum_{k=1}^K\sqrt{\kl\left(\frac{1-\varepsilon}{2}, \frac{1+\varepsilon}{2}\right)\bbE_0\left[\sum_{j\in\cV} n^k_j(T-d(i, j))\right]}\\
    &\leqslant \frac{1}{K} + \sqrt{\left(\frac{TN - \sum_{j=1}^{d^\star(G)} d_{= j}(i)\cdot j}{K}\right)\cdot\kl\left(\frac{1-\varepsilon}{2}, \frac{1+\varepsilon}{2}\right)}.
\end{align*}
Now, observe that the KL divergence between Bernoulli bandits can be bounded as $$\kl(p,q)\leq \frac{(p-q)^2}{q(1-q)}.$$ Substituting we get,
\begin{align*}
    \frac{1}{K}\sum_{k=1}^K \P_k\left(J^i_t = k\right) &\leqslant \frac{1}{K} + \sqrt{\frac{4\varepsilon^2(NT-\sum_{j=1}^{d^\star(G)} d_{=j}(i)\cdot j)}{(1-\varepsilon^2)K}}.
\end{align*}
Replacing this in the regret and using $\varepsilon\leqslant 1/2$, we get that,
\begin{align*}
   &\bbE_k\left[\sum_{t=1}^T \left(N\cdot r_{k}(t) - \sum_{i \in \cV}r_{A_i}(t)\right)\right]  \\
   &\geqslant \varepsilon\cdot T\cdot\sum_{i\in\cV}\left(1-\frac{1}{K} - \sqrt{\frac{4\varepsilon^2(NT-\sum_{j=1}^{d^\star(G)} d_{=j}(i)\cdot j)}{(1-\varepsilon^2)K}}\right)\\
   &\geqslant \varepsilon\cdot T\cdot\sum_{i\in\cV}\left(\frac{1}{2} - 4\varepsilon\sqrt{\frac{(NT-\sum_{j=1}^{d^\star(G)} d_{=j}(i)\cdot j)}{3K}}\right)\\
   &= \frac{\varepsilon\cdot NT}{2} - \frac{4\varepsilon^2NT}{\sqrt{K}}\left(\sum_{i, j\in\cV} T-d(i, j)\right)^{1/2}
\end{align*}
Setting $\varepsilon = c\cdot\sqrt{\frac{K}{N(T-\sum_{j=1}^{d^\star(G)} \bar d_{=j}\cdot j)}}$ where $c$ is a constant to be tuned later, we have,
\begin{align*}
    \bbE_k\left[\sum_{\tau=1}^T \left(N\cdot r_{k,t} - \sum_{i \in \cV}r_{A_i(t), t}\right)\right] &\geqslant \left(\frac{c}{2}-\frac{4c^2}{\sqrt{3}}\right)\cdot\sqrt{\frac{KN^2T^2}{N(T-\sum_{j=1}^{d^\star(G)} \bar d_{=j}\cdot j)}} \\
    &\geqslant 0.027\sqrt{KN(T+\sum_{j=1}^{d^\star(G)} \bar d_{=j}\cdot j)}.
\end{align*}
This proves the first part of the theorem. Now, when the policies are decentralized and agnostic, the chain rule step can be factored as follows. 
\begin{align*}
    \kl(\P_0(t), \P_k(t)) &= \kl(\P_0(1), \P_k(1)) + \sum_{\tau=2}^{t} \left|\cO_{i}(\tau)\right|\kl\left(\frac{1-\varepsilon}{2}, \frac{1+\varepsilon}{2}\right) \\
    &= \kl\left(\frac{1-\varepsilon}{2}, \frac{1+\varepsilon}{2}\right)\bbE_0\left[\sum_{j\in\cN^+_\gamma(G)} n^k_j(t-d(i, j))\right].
\end{align*}
Note that here instead of taking the cumulative sum over all $\cV$ we select only those agents that are within the $\gamma-$neighborhood of $i$ in $G$, since conditioned on these observations the rewards of the agents are independent of all other rewards (by Assumption), and hence the higher-order KL divergence terms are 0. Replacing this in the analysis gives us the following decomposition (after similar steps as the first part):
\begin{align*}
   \bbE_k\left[\sum_{t=1}^T \left(N r_{k}(t) - \sum_{i \in \cV}r_{A_i}(t)\right)\right] &\geqslant \frac{NT\varepsilon}{2} - \frac{4\varepsilon^2T}{\sqrt{3K}}\cdot\sum_{i\in\cV}\left(\sum_{j:\cN^+_\gamma(i)} T-d(i, j)\right)^{1/2} \\
   &\geqslant \frac{NT\varepsilon}{2} - \frac{4\varepsilon^2N^{1/2}T}{\sqrt{3K}}\cdot\left(\sum_{i\in\cV}\sum_{j:\cN^+_\gamma(i)} T-d(i, j)\right)^{1/2}
\end{align*}
Setting $\varepsilon = c\cdot\sqrt{\frac{NK}{\sum_{i\in\cV}\sum_{j:\cN^+_\gamma(i)} T-d(i, j)}}$ where $c$ is a constant to be tuned later, we have,
\begin{align*}
   \bbE_k\left[\sum_{t=1}^T \left(N\cdot r_{k}(t) - \sum_{i \in \cV}r_{A_i}(t)\right)\right]  &\geqslant \left(\frac{c}{2}-\frac{4c^2}{\sqrt{3}}\right)\cdot\sqrt{\frac{N^3T^2}{\sum_{i\in\cV}\sum_{j\in\cN^+_i(G_\gamma)} T-d(i, j)}} \\
   &\geqslant \left(\frac{c}{2}-\frac{4c^2}{\sqrt{3}}\right)\cdot\sqrt{\frac{N^3T}{\sum_{i\in\cV}1 +d_i(G_\gamma)}} \\
   &\geqslant \frac{3}{4}\left(\frac{c}{2}-\frac{4c^2}{\sqrt{3}}\right)\sqrt{\alpha^\star(G_\gamma)NT}\\
   &\geqslant 0.019\sqrt{\alpha^\star(G_\gamma)NT}.
\end{align*}
The constants in both settings are obtained by optimizing $c$ over $\bbR$. Extending this to random (instead of deterministic) algorithms is straightforward via Fubini's theorem, see Theorem 2.6 of~\citet{bubeck2010bandits}.
\end{proof}


\section{Pseudo code}
\begin{algorithm}[ht!] 
\caption{RCL-LF}\label{alg:RCL-LF}
    \SetAlgoLined
    \KwIn{Arms $k\in [K],$ variance proxy upper bound $\sigma^2$, parameter $\xi$}
    \KwInit{$ N_k^{i}(0)= \widehat{\mu}_k^{i}(0)=C_k^{i}(0)=0, \forall k,i$}
    
    \For{\text{\normalfont\ each iteration} $t \in [T]$}{
            
         \For{\text{\normalfont\ each agent} $i \in [N]$}{
           \tcc{Sampling phase}
           \If{$t=1$}{
               $A_{t}^{i}\gets \textsc{RandomArm} \left([K]\right)$
           }\Else{
            
             $A_{t}^{i}\gets\argmax_{k}\widehat{\mu}_k^{i}(t-1)+C_k^{i}(t-1)$
            }
             \tcc{Send messages}
             $\textsc{Create}\left(\mathbf{m}_t^{i}:=\Big \langle A_t^{i}, r_t^{i},i,t\Big \rangle\right)$
             
            $\textsc{Send}\left(\mathbf{M}_t^{i}\gets \mathbf{M}_{t-1}^{i}\cup \mathbf{m}_t^{i}\right)$
            
        }
        \For {\text{\normalfont\ each agent} $i \in [N]$}{
            \tcc{Receive messages}
            \For{\text{\normalfont\ each neighbor} $j\in\mathcal{N}_{i}(G_{\gamma})$
            }{
            \tcc{Discard messages with probability $1-p_i$}
            \For{\text{\normalfont\ each message} $\mathbf{m}\in\mathbf{M}_t^j$
            }{
            with probability $p_i,$ \:\:\:
            $\mathbf{M}_t^{i}\gets \mathbf{M}_t^{i} \cup \mathbf{m}$
            
            with probability $1-p_i,$ \:\:\:
            $\mathbf{M}_t^{i}\gets \mathbf{M}_t^{i}$
            }
            }
            
            \tcc{Update estimates}
            \For {\text{\normalfont\ each arm} $k \in [K]$}{
            
                \textsc{Calculate}   $\left(N_k^{i}(t), \widehat{\mu}_k^{i}(t),C_k^{i}(t)\right)$ 
            }
        }
    }
\end{algorithm}

\end{document}